\def\eqref#1{equation~\ref{#1}}
\def\1{\bm{1}}
\DeclareMathAlphabet{\mathsfit}{\encodingdefault}{\sfdefault}{m}{sl}
\SetMathAlphabet{\mathsfit}{bold}{\encodingdefault}{\sfdefault}{bx}{n}
\newcommand{\E}{\mathbb{E}}
\newcommand{\R}{\mathbb{R}}
\title{Spectro-Riemannian Graph Neural Networks}
\renewcommand\@fnsymbol[1]{%
  \ensuremath{%
    \ifcase#1\or \dagger\or \ddagger\or \mathsection\or
      \mathparagraph\or \|\or **\or \dagger\dagger\or \ddagger\ddagger
    \else\@ctrerr\fi}}
\author{
  Karish Grover\textsuperscript{\textnormal{1,}}\thanks{The work was done during Karish Grover's internship at Amazon, US.} , 
  Haiyang Yu\textsuperscript{\textnormal{2}}, 
  Xiang Song\textsuperscript{\textnormal{3}}, 
  Qi Zhu\textsuperscript{\textnormal{3}},
  Han Xie\textsuperscript{\textnormal{3}}, \\
  \textbf{
    Vassilis N. Ioannidis\textsuperscript{\textnormal{3}}, 
    Christos Faloutsos\textsuperscript{\textnormal{1}}
  } \\
  \textsuperscript{1}Carnegie Mellon University, 
  \textsuperscript{2}Texas A\&M University,
  \textsuperscript{3}Amazon \\
  \texttt{\{karishg,christos\}@cs.cmu.edu}, 
  \texttt{\{haiyang\}@tamu.edu}, \\
  \texttt{\{xiangsx,qzhuamzn,hanxie,ivasilei\}@amazon.com} \\
}
\newcommand{\modelname}{\texttt{CUSP}} %or SPRING
\newcommand{\modelshort}{Cusp} %or SPRING
\newcommand{\modela}{$\mathbb{H}^{24} \times \mathbb{S}^{24}$}
\newcommand{\modelc}{$\mathbb{H}^{8} \times \mathbb{S}^{8} \times \mathbb{E}^{32}$}
\newcommand{\modelb}{$(\mathbb{H}^{8})^2 \times (\mathbb{S}^{8})^2 \times \mathbb{E}^{16}$}
\newcommand{\modeld}{$\mathbb{H}^{16} \times (\mathbb{S}^{16})^2$}
\newcommand{\modele}{$(\mathbb{H}^{16})^2 \times \mathbb{E}^{16}$}
\newcommand{\modelf}{$\mathbb{H}^{24} \times \mathbb{E}^{24}$}
\newcommand{\modelg}{$\mathbb{S}^{24} \times \mathbb{E}^{24}$}
\newcommand{\modelh}{$\mathbb{H}^{16} \times \mathbb{S}^{16} \times \mathbb{E}^{16}$}
\newcommand{\modeli}{$(\mathbb{S}^{8})^2 \times \mathbb{E}^{32}$}
\newcommand{\modelj}{$(\mathbb{H}^{16})^3$}
\newtheorem{theorem}{Theorem}
\newtheorem{definition}{Definition}
\begin{document}
% \hyphenpenalty=75000
% \exhyphenpenalty=100
\maketitle

\begin{abstract}
Can integrating \textit{spectral} and \textit{curvature} signals unlock new potential in graph representation learning? Non-Euclidean geometries, particularly Riemannian manifolds such as hyperbolic (negative curvature) and spherical (positive curvature), offer powerful inductive biases for embedding complex graph structures like scale-free, hierarchical, and cyclic patterns. Meanwhile, spectral filtering excels at processing signal variations across graphs, making it effective in homophilic and heterophilic settings. Leveraging both can significantly enhance the learned representations. To this end, we propose \textit{Spectro-Riemannian Graph Neural Networks} (\modelname) -- the \textit{first} graph representation learning paradigm that unifies both \texttt{CU}\text{rvature} (geometric) and \texttt{SP}\text{ectral} insights. \modelname\ is a mixed-curvature spectral GNN that learns spectral filters to optimize node embeddings in \textit{products} of constant-curvature manifolds (hyperbolic, spherical, and Euclidean). Specifically, \modelname\ introduces three novel components: (a) \textit{\modelshort\ Laplacian}, an extension of the traditional graph Laplacian based on Ollivier-Ricci curvature, designed to capture the curvature signals better; (b)  \textit{\modelshort\ Filtering}, which employs multiple Riemannian graph filters to obtain cues from various bands in the eigenspectrum; and (c) \textit{\modelshort\ Pooling}, a hierarchical attention mechanism combined with a curvature-based positional encoding to assess the relative importance of differently \textit{curved} substructures in our graph. Empirical evaluation across eight homophilic and heterophilic datasets demonstrates the superiority of \modelname\ in node classification and link prediction tasks, with a gain of up to $5.3\%$ over state-of-the-art models. The code is available at: \href{https://github.com/amazon-science/cusp}{\texttt{https://github.com/amazon-science/cusp}}.

\end{abstract}
\vspace{-2mm}
\section{Introduction}\label{sec:intro}
Graph representation learning has garnered significant research interest in recent years, owing to its fundamental relevance in domains such as natural language processing \citep{mihalcea2011graph}, biology \citep{zhang2021graph}, and social network analysis \citep{grover2022public}. Recent advances have rigorously examined constant \textit{curvature} spaces to learn distortion-free graph representations, as they provide suitable inductive biases for particular structures while avoiding the intrinsic problems of very high dimensionality. For example, hyperbolic space (negative curvature) is optimal for hierarchical tree-like graphs \citep{chami2019hyperbolic}, while spherical geometry (positive curvature) is best suited for cyclic graphs \citep{gu2019learning}.The idea behind all Riemannian GNNs is that optimal embeddings are achieved when the underlying manifold's curvature aligns with the graphs' discrete curvature. To model real-world graphs with complex topologies that cannot be adequately represented within a single constant-curvature manifold, various mixed-curvature GNNs have been proposed, that operate across multiple manifolds \citep{zhu2020graph}. 

Despite their success in managing complex graph topologies, existing mixed-curvature GNNs still have significant limitations (Figure \ref{fig:motivation} (b)). \textbf{(a)} \texttt{L1} (\textit{Low-pass filtering bias}): State-of-the-art mixed-curvature Riemannian GNNs, such as $\kappa$GCN \citep{bachmann2020constant} and $\mathcal{Q}$GCN \citep{xiong2022pseudo}, inherently mimic low-pass spectral filters. These models are adaptations of the Euclidean GCN \citep{kipf2016semi} to Riemannian manifolds, which are known to operate under the homophily assumption (low-pass filters) predominantly. Consequently, their performance degrades on graphs with varying degrees of heterophily. \textbf{(b)} \texttt{L2} (\textit{Task-specific relevance of curvature}): Different datasets and tasks may emphasize more on different curvatures within a graph (Figure \ref{fig:motivation} (a)). For example, the task of community detection focuses on clustered nodes typically associated with a positive curvature \citep{tian2023curvature}. In contrast, fake news detection on social media graphs prioritizes tree-like cascade structures characterized by a negative curvature \citep{grover2022public}. Current models do not adapt curvatures to tasks. \textbf{(c)} \texttt{L3} (\textit{Lack of geometry-equivariance in spectral GNNs}): Spectral GNNs like BernNet \citep{he2021bernnet} and GPRGNN \citep{chien2020adaptive} incorporate flexible graph filters to learn representations based on different parts of the eigenspectrum but assume a flat Euclidean manifold, ignoring the underlying geometry. As the geometry of the graph changes, these filters should adapt to reflect the changing geometric properties, which is currently not the case. 

\begin{figure}[t!]
  \centering
  \includegraphics[width = \textwidth]{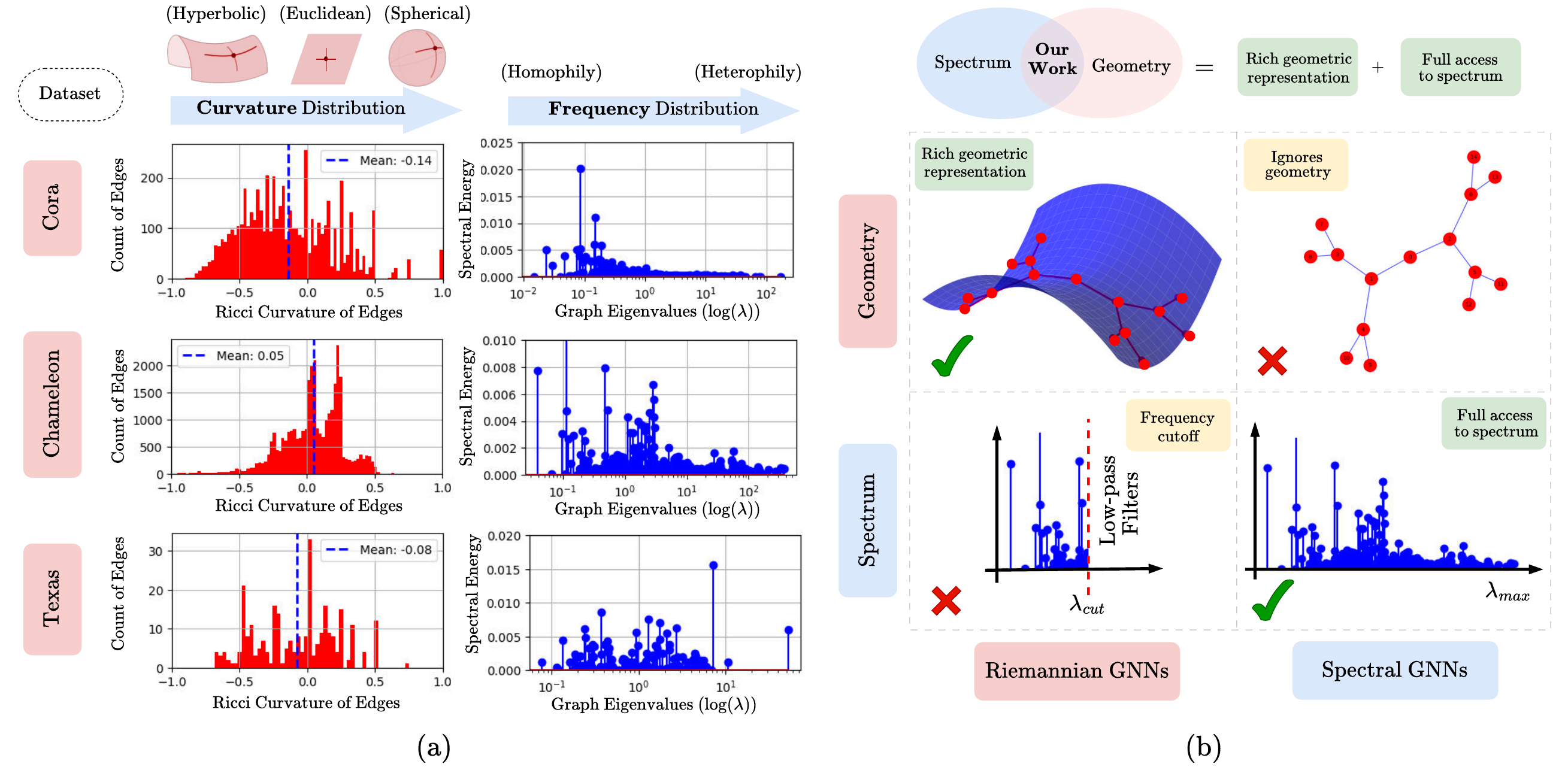}
  % \vspace{-7mm}
  \caption{\textbf{Motivation behind \modelname}. \textbf{(a)} Diversity of curvatures ($-$ve to $+$ve) and frequencies in homophilic (Cora) and heterophilic (Texas, Chameleon) real-world graphs (Section \ref{sec:experiments}). \textbf{(b)} Spectral GNNs overlook the curvature, whereas Riemannian GNNs are restricted by a cut-off frequency. \modelname\ aims for the best of both -- a rich geometric representation $+$ full access to the spectrum.}
  \vspace{-5mm}
  \label{fig:motivation}
\end{figure}

To bridge these gaps, we introduce \modelname, a mixed-curvature spectral GNN that operates on a product manifold composed of multiple constant-curvature spaces and computes \textit{geometrically} and \textit{spectrally} parameterized graph filters. 
We begin by introducing the \textbf{(a)} \textit{\modelshort\ Laplacian}, a curvature-aware Laplacian operator for graphs, inspired by the equation of heat flow \citep{thanou2017learning} and the discrete Ollivier-Ricci curvature \citep{ollivier2007ricci}. At the core of our approach is \textbf{(b)} \textit{\modelshort\ Filtering}, where we propose a filter bank consisting of multiple mixed-curvature graph filters to ensure that our GNN captures information from multiple bands in the eigenspectrum. Using the \modelshort\ Laplacian, we extend the generalized PageRank (GPR)-based spectral GNN \citep{chien2020adaptive} to mixed-curvature spaces by incorporating operations based on the $\kappa$-stereographic model of Riemannian geometry \citep{bachmann2020constant}. We chose GPRGNN as the spectral backbone of \modelname\ because of its ability to capture node features and topological graph signals simultaneously. Lastly, we introduce the \textbf{(c)} \textit{\modelshort\ Pooling} mechanism, complemented by functional curvature embeddings based on Bochner's Theorem \citep{xu2020inductive}, to weigh differently \textit{curved} substructures, enhancing its ability to model real-world graphs with diverse geometric and spectral properties. We perform an extensive empirical evaluation of \modelname\ for Node Classification (\texttt{NC}) and Link Prediction (\texttt{LP}) tasks, on eight real-world datasets. \modelname\ records state-of-the-art performance, with a gain of up to 5.3\%. We summarize our main contributions as follows:
\begin{itemize}
\vspace{-2mm}
    \item To the best of our knowledge, this is the \textbf{first attempt} towards a graph learning paradigm that seamlessly integrates both \textit{geometry} and \textit{spectral} cues.
    \vspace{-0.5mm} %TODO think of a new bullet header for point 2. For now adding placeholder (MODIFY LATER)
    \item We introduce a curvature-aware \textit{\modelshort\ Laplacian} operator, design a mixed-curvature spectral graph filtering framework, \textit{\modelshort\ Filtering}, and propose a curvature embedding method using classical harmonic analysis and a hierarchical attention mechanism called \textit{\modelshort\ Pooling}.
    \vspace{-0.5mm}
    \item We conduct extensive experimentation on eight real-world benchmarking datasets, featuring homophilic and heterophilic graphs, for node classification and link prediction tasks.
\end{itemize}

\section{Related Works}
\vspace{-2mm}
\textbf{Riemannian Geometry in Graph Neural Networks}. Graph Neural Networks (GNNs) have set new benchmarks for tasks like node classification and link prediction. Recently, non-Euclidean (Riemannian) spaces -- particularly hyperbolic \citep{sala2018representation} and spherical \citep{liu2017sphereface, wilson2014spherical} geometries -- have garnered attention for their ability to produce less distorted representations, aligning well with hierarchical and cyclic data structures, respectively. Several approaches have emerged in this context. (a) \textit{Single Manifold GNNs}: GNNs such as HGAT \citep{zhang2021hyperbolic}, HGCN \citep{chami2019hyperbolic}, and HVAE \citep{sun2021hyperbolic} have demonstrated state-of-the-art performance on tree-like or hierarchical graphs by learning representations in hyperbolic space. (b) \textit{Mixed-Curvature GNNs}: To model more complex topologies (for example, a tree branching from a cyclic graph), mixed-curvature GNNs have been proposed. \cite{gu2019learning} pioneered this direction by embedding graphs in a product manifold combining spherical, hyperbolic, and Euclidean spaces. Building on this, models like $\kappa$-GCN \citep{bachmann2020constant} and Q-GCN \citep{xiong2022pseudo} extended the GCN architecture \citep{kipf2016semi} to constant-curvature spaces using the $\kappa$-stereographic model and pseudo-Riemannian manifolds, respectively. More recently, \cite{sun2022self} proposed a mixed-curvature GNN for self-supervised learning, while FPS-T \citep{cho2023curve} generalized the Graph Transformer \citep{min2022transformer} to operate across multiple manifolds.

\textbf{Spectral Graph Neural Networks}. Spectral GNNs employ spectral graph filters \citep{liao2024benchmarking} to process graph data. These models either use fixed filters, as seen in APPNP \citep{gasteiger2018predict} and GNN-LF/HF \citep{zhu2021interpreting}, or learnable filters, as demonstrated by ChebyNet \citep{defferrard2016convolutional} and GPRGNN \citep{chien2020adaptive}, which approximate polynomial filters using Chebyshev polynomials and generalized PageRank, respectively. BernNet \citep{he2021bernnet} expresses filtering operations through Bernstein polynomials. However, many of these methods focus primarily on low-frequency components of the eigenspectrum, potentially overlooking important information from other frequency bands -- particularly in heterophilic graphs. Models like GPRGNN and BernNet address this by exploring the entire spectrum, performing well across both homophilic and heterophilic graphs. GPRGNN stands out among them because it can express several polynomial filters and incorporate node features and topological information. Despite these advances, current mixed-curvature and spectral GNNs face significant limitations (\texttt{L1}, \texttt{L2}, \texttt{L3}) that constrain their performance. To the best of our knowledge, this work is the first to unify \textit{geometric} and \textit{spectral} information within a single model. Before presenting the architecture of \modelname, we introduce some key preliminary concepts in the following section.
\section{Preliminaries}\label{sec:background}
\vspace{-2mm}
We study graphs $\mathcal{G} = (\mathcal{V}, \mathcal{E}, \mathbf{A})$, where $\mathcal{V}$ is a finite set of $|\mathcal{V}| = n$ vertices, $\mathcal{E}$ is a set of edges and $\mathbf{A} \in \mathbb{R}^{n \times n}$ is a weighted graph adjacency matrix. The nodes are associated with the node feature matrix $\mathbf{F} \in \mathbb{R}^{n \times d_f}$ ($d_f$ is the feature node dimension). A graph signal $\mathbf{f} : \mathcal{V} \rightarrow \mathbb{R}$ on the nodes of the graph may be regarded as a vector $\mathbf{f} \in \mathbb{R}^n$ where $f_i$ is the value of $\mathbf{f}$ at the $i^{th}$ node. An essential operator in spectral graph analysis is the graph Laplacian $\mathbf{L} = \mathbf{D}- \mathbf{A} \in \mathbb{R}^{n\times n}$ where $\mathbf{D} \in \mathbb{R}^{n \times n}$ is the diagonal degree matrix with $\mathbf{D}_{ii} = \sum_j\mathbf{A}_{ij}$ \citep{kipf2016semi}. The normalized Laplacian is defined as $\mathbf{L}_n = \mathbf{I} - \mathbf{A}_n$ where $\mathbf{A}_n = \mathbf{D}^{-1/2} \mathbf{A}  \mathbf{D}^{-1/2}$ is the normalized adjacency matrix, and $\mathbf{I} \in \mathbb{R}^{n\times n}$ is the identity matrix. As $\mathbf{L}$ is a real symmetric positive semidefinite matrix, it has a complete set of orthonormal eigenvectors $\mathbf{U} = \left[\{\mathbf{u}_l\}^{n - 1}_{l=0}\right] \in \mathbb{R}^{n \times n}$, and their associated ordered real nonnegative eigenvalues $\left[\{\lambda_l\}^{n - 1}_{l=0}\right] \in \mathbb{R}^{n}$, identified as the \textit{frequencies} of the graph. The Laplacian can be diagonalized as $\mathbf{L} = \mathbf{U}\mathbf{\Lambda} \mathbf{U}^{\top}$ where $\mathbf{\Lambda} = \mathbf{diag}(\left[\{\lambda_l\}^{n - 1}_{l=0}\right]) \in \mathbb{R}^{n \times n}$.

\textbf{Riemannian Geometry} \citep{do1992riemannian}. A smooth \emph{manifold} $\mathcal M$ generalizes the notion of \textit{surface} to higher dimensions. Each point $\mathbf x \in \mathcal M$ is associated with a \emph{tangent space} $\mathcal T_\mathbf x\mathcal M$, which is locally Euclidean.
On tangent space $\mathcal T_\mathbf x\mathcal M$, the \emph{Riemannian metric}, $g_\mathbf x (\cdot, \cdot) : \mathcal T_\mathbf x\mathcal M  \times \mathcal T_\mathbf x\mathcal M \to \mathbb R$, defines an inner product so that geometric notions (like distance, angle, etc.) can be induced.
The pair $(\mathcal M, g)$ is called a \emph{Riemannian manifold}. For $\mathbf x \in \mathcal M$, 
the  \emph{exponential map} at $\mathbf x$, 
$\mathbf{exp}_\mathbf x(\mathbf v): \mathcal T_\mathbf x\mathcal M \to \mathcal M$, 
projects the vector $\mathbf v \in \mathcal T_\mathbf x\mathcal M$ onto the manifold $\mathcal M$, and the \emph{logarithmic map}, 
$\mathbf{log}_\mathbf x(\mathbf y): \mathcal M \to \mathcal T_\mathbf x\mathcal M$, 
projects the vector $\mathbf y \in \mathcal M$ back to the tangent space $\mathcal T_\mathbf x\mathcal M$. The Riemannian metric defines a \textit{curvature} (\(\kappa\)) at each point on the manifold, indicating how the space is curved. There are three canonical types: positively curved \textit{Spherical} ($\mathbb{S}$) space $(\kappa > 0)$, negatively curved \textit{Hyperbolic} ($\mathbb{H}$) space (\(\kappa < 0\)), and flat \textit{Euclidean} ($\mathbb{E}$) space (\(\kappa = 0\)).

\textbf{Product Manifolds} \citep{gu2019learning}. Consider $q$ constant-curvature manifolds $\{\mathcal{M}_{i}^{\kappa_i, d_i}\}_{i=1}^{q}$ with dimension $d_i$ and curvature $\kappa_i$. Then, the product manifold is defined as the Cartesian product $\mathbb{P} =\mathcal{M}_{1}^{\kappa_1, d_1} \times \mathcal{M}_{2}^{\kappa_2, d_2} \dots \times \mathcal{M}_{q}^{\kappa_q, d_q}$, with total dimension $\sum_{i=1}^{q} d_i$. Each $\mathcal{M}_i \in \{\mathbb{H}, \mathbb{S}, \mathbb{E}\}$ is known as a \textit{component} space, and the decomposition $\mathbb{P} = \times_{i=1}^{q} \mathcal{M}_{i}^{\kappa_i, d_i} $ is called the \textit{signature} of $\mathbb{P}$. 

$\kappa-$\textbf{Stereographic Model} \citep{bachmann2020constant}. In this work, we adopt the $\kappa$-\textbf{Stereographic Model} to define Riemannian algebraic operations across both positively and negatively curved spaces within a unified framework. This model eliminates the need for separate mathematical formulations for different geometries.
% Specifically,  
In particular,  $\mathcal M^d_\kappa$ is the stereographic sphere model for spherical manifold ($\kappa > 0$),
while it is the Poincar\'e ball model \citep{ungar2001hyperbolic} for hyperbolic manifold ($\kappa < 0$). More mathematical details and intuitions have been discussed in Appendix \ref{app:kappa_sterographic}.
% \vspace{-3mm}
% TODO: Add table to appendix

\textbf{Ollivier-Ricci Curvature} (\texttt{ORC}). Discrete data like graphs lack manifold structure (hence, no \textit{curvature}). Several discrete analogs of manifold curvature have been defined, which satisfy properties similar to \textit{curvature}. \texttt{ORC} \citep{ollivier2007ricci} is a graph discrete analog of \textit{Ricci curvature} \citep{tanno1988ricci} and is defined by transport along an edge of the network, between neighborhoods of the vertex. In an unweighted graph, for a hyperparameter $\delta \in [0, 1]$, we endow each node's ($x$) neighbourhood ($\mathcal{N}(x)$) with a probability measure, $
		m_{x}^{\delta}(z) := \frac{1-\delta}{|\mathcal{N}(x)|}\; \forall z \in \mathcal{N}(x),$ and $m_{x}^{\delta}(z) = \delta$ when $z = x$, and analogously for $m_y^{\delta}(z)$. \texttt{ORC} for an edge $(x, y)$ is then defined w.r.t. the Wasserstein-1 distance, $\mathbf{W}_1$ \citep{piccoli2016properties}, between these measures, i.e., $
		\widetilde{\kappa} (x, y) := 1 - \frac{\mathbf{W}_1 (m^{\delta}_{x}, m^{\delta}_{y})}{d_\mathcal{G}(x,y)}$. ${d_\mathcal{G}(x,y)}$ is the shortest graph distance between nodes $x$ and $y$. The Ollivier-Ricci curvature $\widetilde{\kappa}(x)$ for a node $x$ is defined as the average curvature of its adjacent edges i.e. $\widetilde{\kappa}(x) = \frac{1}{|\mathcal{N}(x)|}\sum_{z\in \mathcal{N}(x)} \widetilde{\kappa}(x, z)$.

\textbf{Generalized PageRanks} (\texttt{GPR}). Generalized PageRank methods originated in the context of unsupervised graph clustering, where they demonstrated notable improvements over the classical Personalized PageRank~\citep{kloumann2017block,li2019optimizing}. The core idea behind GPRs is as follows: starting with a seed node $s \in \mathcal{V}$ (vertex set) within a graph cluster, an initial feature vector $\mathbf{H}^{(0)} \in \mathbb{R}^{n \times 1}$ is set, where $\mathbf{H}^{(0)}_v = \delta_{vs}$ (i.e., 1 for the seed node and 0 for all others). The \texttt{GPR} score is then defined as $\sum_{k=0}^{\infty} \gamma_k \widetilde{\mathbf{A}}_{\text{n}}^k \mathbf{H}^{(0)} = \sum_{k=0}^{\infty} \gamma_k \mathbf{H}^{(k)}$, where $\gamma_k \in \mathbb{R}$ are the \texttt{GPR} weights that control the importance of higher-order neighbors. This iterative process propagates the feature information throughout the graph. Clustering is performed by locally thresholding the \texttt{GPR} scores.
Refer to the Appendix for more details on \texttt{ORC} (\ref{app:orc}), product manifolds (\ref{app:product_manifolds}), and \texttt{GPR} (\ref{app:GPRGNN}).

\vspace{-3mm}
\section{Proposed Method: \modelname}
\vspace{-2mm}
In this section, we present a comprehensive overview of the architecture of \modelname, as shown in Figure \ref{fig:riccilap}. We start by introducing and deriving the \textit{\modelshort\ Laplacian} (Section \ref{sec:ricci_lap}). Building on this, we propose \textit{\modelshort\ Filtering}, the core component of our approach, which is a $\texttt{GPR}$-based, mixed-curvature spectral graph filtering network (Section \ref{sec:pagerank}). Next, we introduce a curvature embedding technique grounded in classical harmonic analysis (Section \ref{sec:curv_emb}), which acts as the positional encoding in the hierarchical attention-based \textit{\modelshort\ Pooling} mechanism (Section \ref{sec:ricci_pool}). Throughout this paper, we use $\kappa$ to denote the continuous manifold curvature and $\widetilde{\kappa}$ for the Ollivier-Ricci curvature.

% \vspace{-3mm}
\subsection{\modelshort\ Laplacian}\label{sec:ricci_lap}
\vspace{-1mm}
\vspace{-1mm}
\begin{wrapfigure}{r}{0.6\textwidth}
\vspace{-10mm}
    \includegraphics[width=0.6\textwidth]{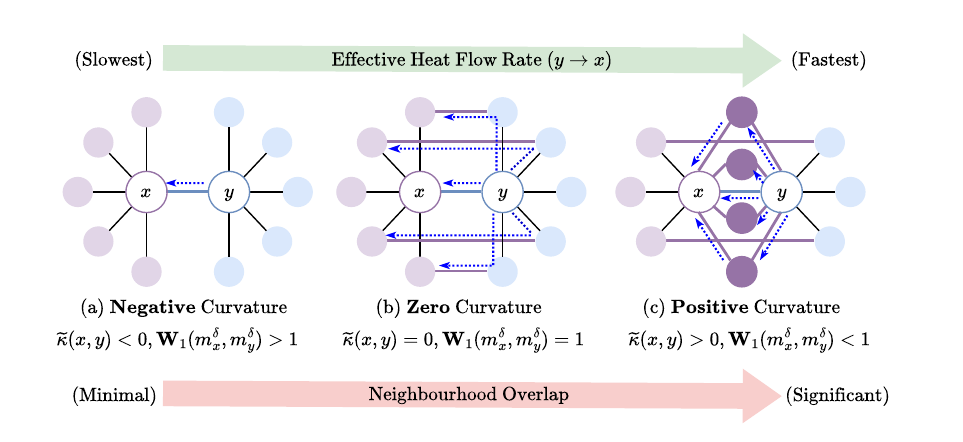}
    \caption{Consider heat diffusion from $y \rightarrow x$. If $\widetilde{\kappa}(x, y)<0$, there is a single path for the heat to diffuse from $y \rightarrow x$. When $\widetilde{\kappa}(x, y) \geq 0$, heat can effectively diffuse through other paths from $y \rightarrow x$ (\color{blue}\textit{dotted}).}
    \vspace{-4mm}
    \label{fig:riccilap}
\end{wrapfigure}
To effectively incorporate geometric insights into spectral graph learning, we introduce the \textit{\modelshort\ Laplacian}, a curvature-aware Laplacian operator. We begin by examining the concept of heat flow on a graph \citep{weber2008analysis}. Suppose $\psi$ describes a temperature distribution across a graph, where $\psi(x)$ is the temperature at vertex $x$. According to Newton's law of cooling \citep{he2024constrained}, the heat transferred from node $x$ to node $y$ is proportional to $\psi(x) - \psi(y)$ if nodes $x$ and $y$ are connected (if they are not connected, no heat is transferred). Consequently, the heat diffusion equation on the graph can be expressed as $\frac{d \psi}{dt} = -\beta \sum_y \mathbf{A}_{xy}(\psi(x) - \psi(y))$, where $\beta$ is a constant of proportionality and $\mathbf{A}$ denotes the adjacency matrix of the graph. Further insight can be gained by considering Fourier’s law of thermal conductance \citep{liu1990fourier}, which states that heat flow is inversely proportional to the resistance to heat transfer. In this context, we leverage the Ollivier-Ricci curvature (\texttt{ORC}) to define the notion of resistance between nodes. Implicitly, \texttt{ORC} measures the transportation cost ($\mathbf{W}_1(:,:)$) between the neighborhoods of two nodes, reflecting the effort required to transport mass between these neighborhoods \citep{bauer2011ollivier}. We interpret this transportation cost as the \textit{resistance} between nodes. 

The vital takeaway here is that $-$ \textit{Heat flow between two nodes in a graph is influenced by the underlying Ollivier-Ricci curvature (\texttt{ORC}) distribution}. The diffusion rate is faster on an edge with positive  curvature (low resistance), and slower on an edge with negative curvature (high resistance). Intuitively, if the neighborhoods of two nodes overlap significantly (Figure \ref{fig:riccilap}(c)), the transportation cost between them is low ($\mathcal{R}^{res}_{xy} = \frac{\mathbf{W}_1(m^{\delta}_x, m^{\delta}_y)}{d_{\mathcal{G}}(x, y)} < 1$), resulting in a positive curvature value for the edge connecting these nodes. In this scenario, messages (analogously, \textit{heat}) can be transmitted efficiently between the neighborhoods. Conversely, if the neighborhoods have little overlap (Figure \ref{fig:riccilap}(a)), the transportation cost is high, leading to a negative curvature value ($\mathcal{R}^{res}_{xy} > 1$), and the edge acts as a bottleneck, impeding effective message-passing. As a result, the heat diffusion process is influenced by the underlying curvature (resistance).

\begin{definition}\label{thm:Laplacian}
The \textit{\modelshort\ Laplacian} operator takes the form $-$
\begin{equation}
\widetilde{\mathbf{L}}\psi(x)= \sum_{y \sim x} \bar{w}_{xy} \left( \psi(x) - \psi(y) \right) = \sum_{y \sim x} e^{\frac{-1}{1-\widetilde{\kappa}(x, y)}} \left( \psi(x) - \psi(y) \right),
\end{equation}
where $x \sim y$ denotes adjacency between nodes x and y, $\bar{w}_{xy} = e^{\frac{-1}{1-\widetilde{\kappa}(x, y)}}$ represents the curvature-based weight between nodes x and y, and $\widetilde{\kappa}(x, y)$ is the discrete Ollivier-Ricci curvature between x and y. The function $\psi : V \to \mathbb{R}$ is defined on the vertex set V of the graph. In the matrix form, we can write, $\widetilde{\mathbf{L}} = \widetilde{\mathbf{D}} - \widetilde{\mathbf{A}}$, where $\widetilde{\mathbf{D}}$ and $\widetilde{\mathbf{A}}$ are the degree and adjacency matrices based on $\bar{w}_{xy}$.
\end{definition}

Refer to Appendix \ref{app:Laplacian} for the derivation of Definition \ref{thm:Laplacian}. 
% The \modelshort\ Laplacian can be seen as a generalization of the Laplace-Beltrami operator \citep{burago2015graph} from the Riemannian geometry, adapted to the discrete setting of the graphs. 
The curvature-based Laplacian operator allow us to incorporate geometric cues into the spectral perspective, which are otherwise not captured by the traditional graph Laplacian.

\subsection{\modelshort\ Filtering}\label{sec:pagerank}
Now, we will talk about how we create a filter bank (i.e. multiple graph filters) to fuse information from different parts of the eigenspectrum and learn node representations on a product manifold $\mathbb{P}$.

\textbf{Product manifold construction}. $\mathbb{P}$ can have multiple hyperbolic or spherical components with distinct \textit{learnable} curvatures (parameters). This enables us to be representative of a wider range of curvatures. However, we only need one Euclidean space, since the Cartesian product of Euclidean space is $\mathbb E^{d_{(e)}}=\times_{i=1}^j \mathbb E^{d_{(i)}}_i$ such that $\sum\nolimits_{i=1}^j d_{(i)}=d_{(e)}$. This is not the case with $\mathbb{H}$ or $\mathbb{S}$ (Eg. \textit{Torus} i.e. $\mathbb{S}^1 \times \mathbb{S}^1 $ is topologically distinct from \textit{Sphere} i.e. $\mathbb{S}^2$). Thus, we can represent the product manifold \textit{signature}, $ \mathbb{P}^{d_{\mathcal{M}}} = \times_{q=1}^{\mathcal{Q}} \mathcal{M}_{q}^{\kappa_{(q)}, d_{(q)}}  = (\times_{h=1}^{\mathcal{H}} \mathbb{H}_{h}^{\kappa_{(h)}, d_{(h)}}) \times (\times_{s=1}^{\mathcal{S}} \mathbb{S}_{s}^{\kappa_{(s)}, d_{(s)}}) \times \mathbb{E}^{d_{(e)}}$ with total dimension $d_{\mathcal{M}} = \sum_{h=1}^{\mathcal{H}} d_{(h)}+ \sum_{s=1}^{\mathcal{S}} d_{(s)} + d_{(e)}$. We use a simple combinatorial construction of the mixed-curvature space, induced by the cartesian product \citep{sun2022self}. 
% For $\mathbf b=||_{i=1}^K \mathbf b_{\mathcal M_i}$ in product space $\mathcal P$, $\mathbf b_{\mathcal M_i}$ denotes the component embedding in  $\mathcal M_i$ and $||$ is the concatenation operator. This means we can first learn representations in each component space and then fuse them. 
In Section \ref{sec:experiments} and Appendix \ref{app:product_manifolds} we describe how we heuristically identify the signature of $\mathbb{P}^{d_{\mathcal{M}}}$.

\begin{figure}[t!]
  \centering
      \includegraphics[width=0.98\textwidth]{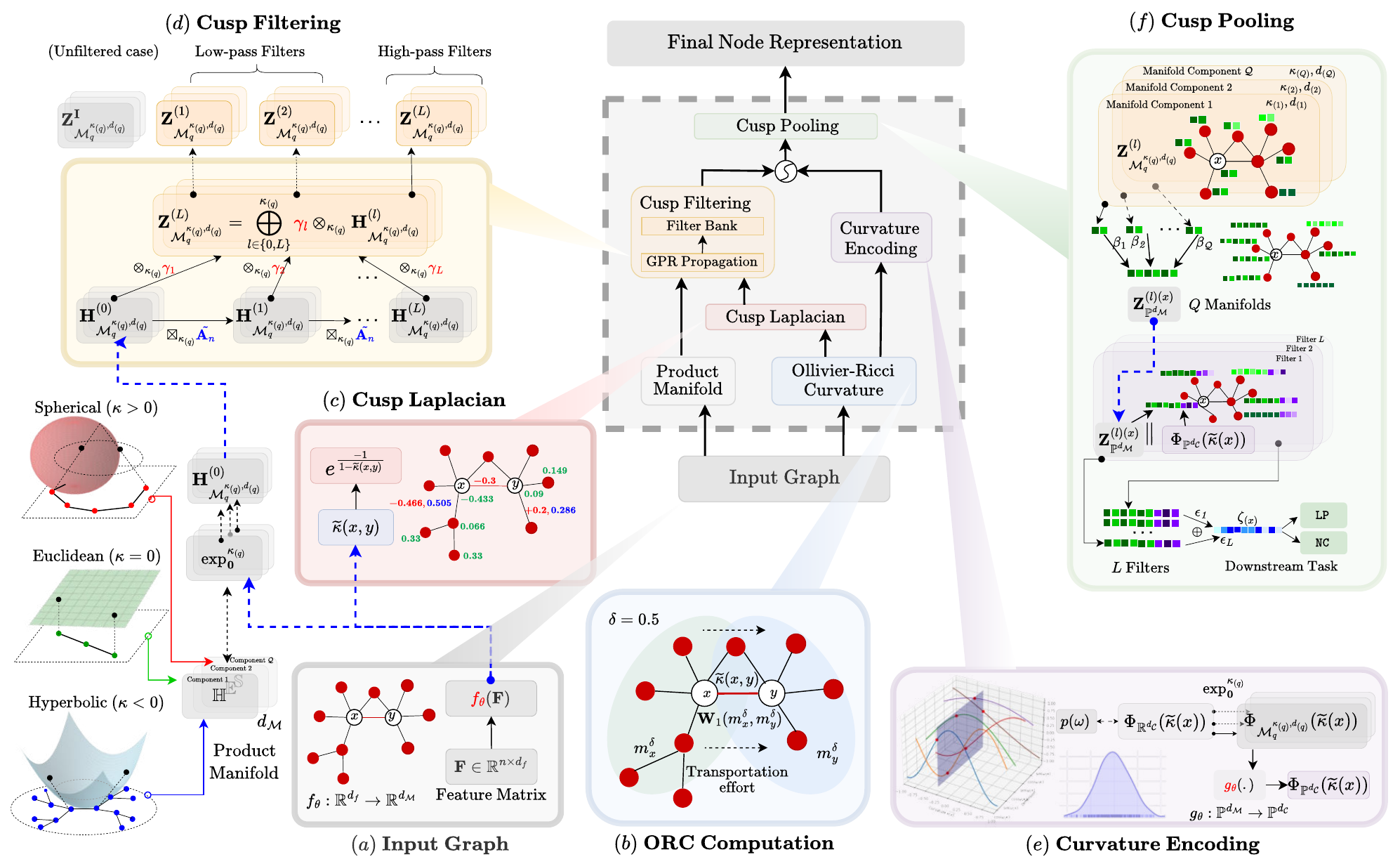}
      \vspace{-2mm}
  \caption{\textbf{Architecture of \modelname}.  The input graph $\textbf{(a)}$ is used to construct the \textit{\modelshort\ Laplacian} $(\widetilde{\mathbf{L}})$, based on the Ollivier-Ricci curvature \textbf{(b)}, as illustrated in \textbf{(c)}. The computed edge \texttt{ORC} ({\color{red}red}), node \texttt{ORC} ({\color{Green}Green}), and curvature-based weights ({\color{blue}blue}) have been highlighted in the input graph. With $\widetilde{\mathbf{L}}$, \textbf{(d)} \textit{\modelshort\ Filtering} introduces multiple graph filters to capture different parts of the eigenspectrum. Each node receives a curvature positional encoding $\Phi$ in \textbf{(e)} as part of the \textbf{(f)} \textit{\modelshort\ Pooling} mechanism, which computes the relative importance of different filters and manifold components.}
  \label{fig:arch}
  \vspace{-3mm}
\end{figure}

\textbf{Extending PageRank GNN to $\mathbb{P}$}: We choose GPRGNN \citep{chien2020adaptive} as the spectral backbone of \modelname\ (See Appendix \ref{app:GPRGNN} for more background on GPRGNN), becasue it jointly optimizes node feature and topological information extraction, which is different from other spectral GNNs like ChebyNet \citep{defferrard2016convolutional}. The \texttt{GPR} weights automatically adjust to the node label pattern (homophilic or heterophilic). Given the node feature matrix $\mathbf{F} \in \mathbb{R}^{n \times d_f}$ and normalized symmetric \textit{\modelshort\ adjacency} matrix $\widetilde{\mathbf{A}}_n = \widetilde{\mathbf{D}}^{\frac{-1}{2}}\; \widetilde{\mathbf{A}}\;\widetilde{\mathbf{D}}^{\frac{-1}{2}}$, we first extract hidden state features for each node and then use \texttt{GPR} to propagate them. The process can be mathematically described as:
\begin{align}\label{model:eq1}
    & \mathbf{H}^{(0)}_{\mathcal{M}_q^{\kappa_{(q)}, d_{(q)}}} = \mathbf{exp}^{\kappa_{(q)}}_{\mathbf{0}}(f_{\theta}(\mathbf{F})) \quad; \mathbf{H}^{(l)}_{\mathcal{M}_q^{\kappa_{(q)}, d_{(q)}}} = \widetilde{\mathbf{A}}_{n} \boxtimes_{\kappa_{(q)}} \mathbf{H}^{(l-1)}_{\mathcal{M}_q^{\kappa_{(q)}, d_{(q)}}}\\
    & \mathbf{Z}^{(L)}_{\mathcal{M}_q^{\kappa_{(q)}, d_{(q)}}} = \bigoplus\limits_{l \in \{0, L\}}^{\kappa_{(q)}} \gamma_l \otimes_{\kappa_{(q)}} \mathbf{H}^{(l)}_{\mathcal{M}_q^{\kappa_{(q)}, d_{(q)}}} \quad; \mathbf{Z}^{(L)}_{\mathbb{P}^{d_{\mathcal{M}}}} = ||_{q = 1}^{\mathcal{Q}} \mathbf{Z}^{(L)}_{\mathcal{M}_q^{\kappa_{(q)}, d_{(q)}}}
\end{align}
where $f_{\theta}(.): \mathbb{R}^{d_{f}} \rightarrow \mathbb{R}^{d_{\mathcal{M}}}$ represents a neural network with parameter set $\{{\theta\}}$ that generates the hidden state features of dimension $d^{\mathcal{M}}$. Here, $\mathbf{exp}^{\kappa}_{\mathbf{0}} : \mathbb{R}^{d_{\mathcal{M}}} \rightarrow \mathcal{M}^{\kappa, d_{\mathcal{M}}}$ is the exponential map (Section \ref{sec:background}) to transform the euclidean node features $\mathbf{F}$ to the component manifold  $\mathcal{M}$, and $||$ is the attentional concatenation operator (discussed in Section \ref{sec:ricci_pool}). $\oplus_{\kappa}$, $\otimes_{\kappa}$ and $\boxtimes_{\kappa}$ denote \textit{mobius} addition, $\kappa$-right-matrix-multiplication and $\kappa$-left-matrix-multiplication respectively (Appendix \ref{app:kappa_sterographic}). These operations generalize vector addition and multiplication on $\kappa$-stereographic model. The GPR weights $\gamma_l$ are trained together with $\{{\theta\}}$ in an end-to-end fashion. The final mixed-curvature node embeddings after attention can be represented as $\mathbf{Z}^{(L)}_{\mathbb{P}^{d_{\mathcal{M}}}} \in \mathbb{P}^{n \times d_{\mathcal{M}}}$, such that $\sum_{q=1}^{\mathcal{Q}} d_{(q)} = d_{\mathcal{M}}$. 

\textbf{Filter Bank}. The \texttt{GPR} component of the network may be viewed as a polynomial graph filter (See Appendix \ref{app:GPRGNN}). Let $\widetilde{\mathbf{A}}_n = \mathbf{U}\mathbf{\Lambda}\mathbf{U}^T$ be the eigenvalue decomposition of $\widetilde{\mathbf{A}}_n$. Then, the polynomial graph filter equals $\sum_{l=0}^{L}\gamma_l \widetilde{\mathbf{A}}_{n}^l = \mathbf{U}g_{\gamma,L}(\mathbf{\Lambda})\mathbf{U}^T$, where $g_{\gamma,L}(\mathbf{\Lambda})$ is applied element-wise and $g_{\gamma,L}(\lambda) = \sum_{l=0}^L\gamma_l\lambda^l$. If one allows $\gamma_l$ to be negative and learnt adaptively, the graph filter will pass relevant high frequencies. Consequently, \modelname\ performs exceptionally well on heterophilic graphs.

\begin{theorem} [Informal \citep{chien2020adaptive}]\label{thm:LPF} If $\gamma_l\geq 0\;\forall l\in \{0,1,...,L\}$, $\sum_{l=0}^{L}\gamma_l=1$ and $\exists l'>0$ such that $\gamma_{l'} > 0$, then $g_{\gamma,L}(\cdot)$ is a low-pass graph filter. Also, if $\gamma_l = (-\alpha)^l,\alpha\in(0,1)$ and $L$ is large enough, then $g_{\gamma,L}(\cdot)$ is a high-pass graph filter.
    % \pan{You want to use this result to argue against APPNP and SGC. Then, we do not to use a theorem to describe. This result is also too simple to be list as a theorem. }
\end{theorem}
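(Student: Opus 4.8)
The plan is to analyze the polynomial $g_{\gamma,L}(\lambda)=\sum_{l=0}^{L}\gamma_l\lambda^l$ on the relevant spectral interval and show it is monotone decreasing (low-pass) or increasing in the appropriate sense (high-pass) under the two hypotheses. First I would fix the domain: the eigenvalues of the normalized \modelshort\ adjacency $\widetilde{\mathbf{A}}_n$ lie in $[-1,1]$, so the ``frequency'' variable is $\lambda\in[-1,1]$, with $\lambda=1$ the lowest frequency (corresponding to the smallest eigenvalue $0$ of the normalized Laplacian $\widetilde{\mathbf{L}}_n=\mathbf{I}-\widetilde{\mathbf{A}}_n$) and $\lambda=-1$ the highest. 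A filter is \emph{low-pass} if its magnitude response is largest near $\lambda=1$ and decays toward $\lambda=-1$; equivalently, writing everything in terms of the Laplacian eigenvalue $\mu=1-\lambda\in[0,2]$, the response should be (essentially) nonincreasing in $\mu$.

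For the low-pass claim, under $\gamma_l\ge 0$, $\sum_l\gamma_l=1$, and some $\gamma_{l'}>0$ with $l'>0$: I would observe that $g_{\gamma,L}$ is a convex combination of the monomials $\lambda\mapsto\lambda^l$. On $[0,1]$ each monomial is nondecreasing in $\lambda$ (hence nonincreasing in $\mu$), and strict for the term with $l'>0$; so $g_{\gamma,L}$ is nonincreasing in $\mu$ on $[0,1]$ and attains its maximum value $1$ at $\lambda=1$ ($\mu=0$). On the negative part $\lambda\in[-1,0)$ one has $|\lambda^l|\le 1$, so $|g_{\gamma,L}(\lambda)|\le\sum_l\gamma_l|\lambda|^l\le 1$ with the bound controlled by $|\lambda|<1$; combining gives that the response magnitude is dominated by its value near $\lambda=1$, i.e. low-pass. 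I would state this as: $\arg\max_{\lambda\in[-1,1]}|g_{\gamma,L}(\lambda)|=1$ and the response is monotone on the high-frequency side, which is the working definition of low-pass used by \citet{chien2020adaptive}.

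For the high-pass claim with $\gamma_l=(-\alpha)^l$, $\alpha\in(0,1)$: this is a truncated geometric series, $g_{\gamma,L}(\lambda)=\sum_{l=0}^{L}(-\alpha\lambda)^l=\frac{1-(-\alpha\lambda)^{L+1}}{1+\alpha\lambda}$. As $L\to\infty$ the remainder $(-\alpha\lambda)^{L+1}\to 0$ uniformly on $[-1,1]$ since $|\alpha\lambda|\le\alpha<1$, so $g_{\gamma,L}(\lambda)\to \frac{1}{1+\alpha\lambda}$. This limiting response is strictly decreasing in $\lambda$ on $[-1,1]$, hence strictly increasing in the Laplacian frequency $\mu=1-\lambda$, with its maximum at $\lambda=-1$ (highest frequency): that is precisely a high-pass filter. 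I would make ``$L$ large enough'' precise by choosing $L$ so that $\alpha^{L+1}$ is below whatever tolerance is needed for the monotonicity/ordering of the endpoints to persist.

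The main obstacle is not any single computation but pinning down the \emph{definition} of low-/high-pass so the argument is rigorous rather than heuristic: the monomials $\lambda^l$ are badly behaved (oscillatory) on the negative subinterval, so one cannot claim global monotonicity of $g_{\gamma,L}$ on all of $[-1,1]$. The clean fix is to adopt the standard convention (as in \citet{chien2020adaptive}) of comparing the response at the spectral extremes and on the dominant side, and then the estimates above suffice; since the theorem is labeled ``Informal,'' I would present the proof at that level of rigor, citing the original reference for the formal statement, and relegate the routine bounds to the appendix.
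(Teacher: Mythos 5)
Your proposal is correct and essentially mirrors the paper's proof: the low-pass part is the same triangle-inequality bound $|g_{\gamma,L}(\lambda)|\le\sum_{l}\gamma_l|\lambda|^l\le\sum_l\gamma_l=1$, made strict by the term with $l'>0$ whenever $|\lambda|<1$, and the high-pass part is the same truncated-geometric-series computation giving $g_{\gamma,L}(\lambda)\to 1/(1+\alpha\lambda)$ as $L\to\infty$. The paper settles your definitional worry by formalizing low-/high-pass as the ratio $\bigl|g_{\gamma,L}(\lambda_i)/g_{\gamma,L}(\lambda_1)\bigr|$ being $<1$ (resp.\ $>1$) at the actual eigenvalues of $\widetilde{\mathbf{A}}_n$, which satisfy $\lambda_1=1$ and $|\lambda_i|<1$ for $i\ge 2$, so no monotonicity or argmax claim over the whole interval $[-1,1]$ (e.g.\ at $\lambda=-1$, where your argmax statement could fail for weights concentrated on even powers) is required.
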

 We encourage the reader to refer to \cite{chien2020adaptive} for a detailed discussion on how different initializations of the \texttt{GPR} weights, as mentioned in the theorem above, assist in designing low-pass and high-pass filters. We present a proof of Theorem \ref{thm:LPF} in Appendix \ref{app:GPRGNN}. Motivated by the limitation $\texttt{L1}$, instead of a single filter, we construct a filter bank to focus on different parts of the eigenspectrum (to assist in both heterophilic and homophilic graphs). In an attempt to be representative of the higher order filters, the proposed filterbank is: $\Omega_{\mathbb{P}^{d_{\mathcal{M}}}} = \big[\mathbf{Z}^{\mathbf{I}}_{\mathbb{P}^{d_{\mathcal{M}}}}, \mathbf{Z}^{(1)}_{\mathbb{P}^{d_{\mathcal{M}}}}, \mathbf{Z}^{(2)}_{\mathbb{P}^{d_{\mathcal{M}}}}, \dots, \mathbf{Z}^{(L)}_{\mathbb{P}^{d_{\mathcal{M}}}}\big]$. Here, $\mathbf{Z}^{\mathbf{I}}_{\mathbb{P}^{d_{\mathcal{M}}}}$ is the unfiltered case, where we pass the identity matrix $\mathbf{I}$ instead of $\widetilde{\mathbf{A}}_n$ for \texttt{GPR} propagation.

\subsection{Functional Curvature Encoding} \label{sec:curv_emb}

Recall our motivation that \modelname\ must be able to pay more attention to differently curved substructures in our model, depending on different tasks and datasets, while learning the final node representations. Specifically, our goal is to obtain a continuous functional mapping $\Phi: \mathbb{K}_{\mathbb{P}} \to \mathbb{P}^{d_{\mathcal{C}}}$ from curvature domain to the $d_{\mathcal{C}}$-dimensional product space to serve as positional encoding in our attention mechanism (Section \ref{sec:ricci_pool}). We approximate this in two steps, by considering the Ollivier-Ricci (\texttt{ORC}) discretization of curvature on our graph and using the Bochner's theorem \citep{moeller2016continuous} from classical harmonic analysis. First, we construct a translation-invariant Euclidean curvature encoding map, and then map it to the product manifold. Without loss of generality, we assume that the curvature domain can be represented by the interval: $\mathbb{K} = [-1, 1]$, where $-1$ to $1$ is the \textit{typical} range\footnote{\texttt{ORC} can theoretically vary from $-\infty$ to 1, but in practice, its values usually lie within $[-1, 1]$ for the majority of graphs. For extreme scenarios, such as very sparse or highly clustered graphs, values may fall outside this range. In such instances, \texttt{ORC} is normalized to $[-1, 1]$.} of \texttt{ORC} in the observed data. Formally, we define the \textbf{Curvature Kernel} $\mathcal{K}_{\mathbb{R}}: \mathbb{K} \times \mathbb{K} \to \mathbb{R}$ with $\mathcal{K}_{\mathbb{R}}(\widetilde{\kappa}_a,\widetilde{\kappa}_b):= \big\langle \Phi_{\mathbb{R}^{d_{\mathcal{C}}}}(\widetilde{\kappa}_a), \Phi_{\mathbb{R}^{d_{\mathcal{C}}}} (\widetilde{\kappa}_b) \big\rangle$ and $\mathcal{K}_{\mathbb{R}}(\widetilde{\kappa}_a,\widetilde{\kappa}_b) = \Psi_{\mathbb{R}}(\widetilde{\kappa}_a-\widetilde{\kappa}_b)$, $\forall \widetilde{\kappa}_a, \widetilde{\kappa}_b \in \mathbb{K}$ for some $\Psi_{\mathbb{R}}: [-2, 2] \to \mathbb{R}$. The intuition behind choosing such a kernel for curvature values, lies in the fact that when comparing $\widetilde{\kappa}_a$ and $\widetilde{\kappa}_b$ we are only concerned with the relative difference between the two values. According to the Bochner's theroem \citep{moeller2016continuous}, a continuous, translation-invariant kernel $\mathcal{K}(\mathbf{x},\mathbf{y})=\Psi(\mathbf{x}-\mathbf{y})$ on $\mathbb{R}^d$ is positive definite if $\Psi$ is the Fourier transform of a non-negative probability measure on $\mathbb{R}$. Our kernel is translation-invariant, since $\mathcal{K}_{\mathbb{R}}(\widetilde{\kappa}_a + \widetilde{c}, \widetilde{\kappa}_b + \widetilde{c}) = \Psi_{\mathbb{R}}((\widetilde{\kappa}_a + \widetilde{c}) - (\widetilde{\kappa}_b + \widetilde{c})) = \mathcal{K}_{\mathbb{R}}(\widetilde{\kappa}_a, \widetilde{\kappa}_b)$ for any constant $\widetilde{c}$. The following theorems defines the Euclidean encoding and it's corresponding mapping to the product space.

\begin{definition} \label{thm:curve_1}
Let $\mathbb{R}^{d_{\mathcal{C}}}$ be the ambient Euclidean space for a node’s curvature encoding. The functional curvature encoding $\Phi_{\mathbb{R}^{d_{\mathcal{C}}}}: \mathbb{K} \to \mathbb{R}^{d_{\mathcal{C}}}$ for curvature $\widetilde{\kappa}(x) \in \mathbb{K}$ of node $x$, is defined as:
\begin{equation}
\Phi_{\mathbb{R}^{d_{\mathcal{C}}}}(\widetilde{\kappa}(x)) = \sqrt{\frac{1}{d_{\mathcal{C}}}} \Big[\cos(\omega_1  \widetilde{\kappa}(x)), \sin(\omega_1 \widetilde{\kappa}(x)), \dots, \cos(\omega_{d_{\mathcal{C}}} \widetilde{\kappa}(x)), \sin(\omega_{d_{\mathcal{C}}}  \widetilde{\kappa}(x)) \Big],
\end{equation}
where $\omega_1,\ldots,\omega_{d_{\mathcal{C}}} \stackrel{\text{i.i.d}}{\sim} p(\omega)$ are sampled from a distribution $p(\omega)$. The corresponding mapping to the product manifold $\mathbb{P}^{d_{\mathcal{C}}}$, is defined as:
\begin{equation}
\Phi_{\mathbb{P}^{d_{\mathcal{C}}}}(\widetilde{\kappa}(x)) = g_{\theta}\Big(\|_{q=1}^{\mathcal{Q}}\mathbf{exp}^{\kappa_{(q)}}_{\mathbf{0}}(\Phi_{\mathbb{R}^{d_{\mathcal{C}}}}(\widetilde{\kappa}(x)))\Big) = g_{\theta}\Big(\|_{q=1}^{\mathcal{Q}} \Phi_{\mathcal{M}_{q}^{\kappa_{(q)}, d_{(q)}}}(\widetilde{\kappa}(x))\Big),
\end{equation}
where $\mathbf{exp}^{\kappa_{(q)}}_{\mathbf{0}}:\mathbb{R}^{d_{\mathcal{C}}}\rightarrow\mathcal{M}_{q}^{\kappa_{(q)}, d_{(q)}}$ denotes the exponential map on the $q^{th}$ component manifold with curvature $\kappa_{(q)}$, $||$ is the concatenation operator and $g_{\theta}: \mathbb{P}^{d_{\mathcal{M}}}\rightarrow \mathbb{P}^{d_{\mathcal{C}}}$ is a Riemannian projector. 
\end{definition}

It is easy to show that $\big\langle \Phi_{d_{\mathcal{C}}}(\widetilde{\kappa}_a), \Phi_{d_{\mathcal{C}}}(\widetilde{\kappa}_b) \big\rangle \approx \mathcal{K} (\widetilde{\kappa}_a,\widetilde{\kappa}_b)$. The unknown distribution $p(\omega)$ is estimated using the inverse cumulative distribution function
(CDF) transformation as in \cite{xu2020inductive}. Next, we prove the translation invariance of the curvature kernel in the product manifold setting.

\begin{theorem}\label{thm:curve_2}
The mixed-curvature kernel $\mathcal{K}_{\mathbb{P}}(\widetilde{\kappa}_a,\widetilde{\kappa}_b):= \big\langle \Phi_{\mathbb{P}^{d_{\mathcal{C}}}}(\widetilde{\kappa}_a), \Phi_{\mathbb{P}^{d_{\mathcal{C}}}}(\widetilde{\kappa}_b) \big\rangle$ is translation invariant, i.e. $\mathcal{K}_{\mathbb{P}}(\widetilde{\kappa}_a,\widetilde{\kappa}_b) = \Psi_{\mathbb{P}}(\widetilde{\kappa}_a-\widetilde{\kappa}_b)$. 
\end{theorem}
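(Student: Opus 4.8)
The plan is to reduce translation invariance of $\mathcal{K}_{\mathbb{P}}$ to that of the Euclidean curvature kernel $\mathcal{K}_{\mathbb{R}}$ (already established in the text) by exploiting three facts: additivity of the inner product over a product manifold, the fact that the origin-based exponential map on each $\kappa$-stereographic component is \emph{radial} (it distorts lengths from $\mathbf 0$ but not angles at $\mathbf 0$), and the observation that $\Phi_{\mathbb{R}^{d_{\mathcal C}}}$ always lands on the unit sphere. Step~1: I would record the two elementary computations underpinning the random-feature map, namely $\|\Phi_{\mathbb{R}^{d_{\mathcal C}}}(\widetilde\kappa)\|^2=\tfrac{1}{d_{\mathcal C}}\sum_{i=1}^{d_{\mathcal C}}\big(\cos^2(\omega_i\widetilde\kappa)+\sin^2(\omega_i\widetilde\kappa)\big)=1$ for every $\widetilde\kappa\in\mathbb K$, and $\langle\Phi_{\mathbb{R}^{d_{\mathcal C}}}(\widetilde\kappa_a),\Phi_{\mathbb{R}^{d_{\mathcal C}}}(\widetilde\kappa_b)\rangle=\tfrac{1}{d_{\mathcal C}}\sum_{i}\cos\big(\omega_i(\widetilde\kappa_a-\widetilde\kappa_b)\big)=\Psi_{\mathbb{R}}(\widetilde\kappa_a-\widetilde\kappa_b)$. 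Thus both the norms (constant) and the Euclidean pairwise similarity of the encodings depend on $\widetilde\kappa_a-\widetilde\kappa_b$ alone.

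Step~2: I would work one component $\mathcal{M}_q^{\kappa_{(q)},d_{(q)}}$ at a time. Since $\mathbf{exp}^{\kappa_{(q)}}_{\mathbf 0}$ sends $\mathbf v\in T_{\mathbf 0}\mathcal{M}_q$ to the point at geodesic distance $\|\mathbf v\|$ from $\mathbf 0$ along $\mathbf v/\|\mathbf v\|$, any of the standard manifold similarity conventions (pullback to $T_{\mathbf 0}$ via $\mathbf{log}_{\mathbf 0}$, or a radial kernel of the geodesic distance obtained from the $\kappa$-law of cosines) evaluates $\langle\Phi_{\mathcal{M}_q}(\widetilde\kappa_a),\Phi_{\mathcal{M}_q}(\widetilde\kappa_b)\rangle$ as a function of only $\|\Phi_{\mathbb{R}^{d_{\mathcal C}}}(\widetilde\kappa_a)\|$, $\|\Phi_{\mathbb{R}^{d_{\mathcal C}}}(\widetilde\kappa_b)\|$, and $\langle\Phi_{\mathbb{R}^{d_{\mathcal C}}}(\widetilde\kappa_a),\Phi_{\mathbb{R}^{d_{\mathcal C}}}(\widetilde\kappa_b)\rangle$ (the radii and the enclosed angle; see Appendix~\ref{app:kappa_sterographic} for the explicit $\tan_\kappa/\tanh_\kappa$ formulas). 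Substituting Step~1, the two radii are the constant $1$ and the Euclidean similarity is $\Psi_{\mathbb{R}}(\widetilde\kappa_a-\widetilde\kappa_b)$, so each component similarity is a function of the difference alone. Because the inner product on $\mathbb P$ is the direct sum over components, $\langle\|_{q}\Phi_{\mathcal{M}_q}(\widetilde\kappa_a),\|_{q}\Phi_{\mathcal{M}_q}(\widetilde\kappa_b)\rangle_{\mathbb P}=\sum_q\langle\Phi_{\mathcal{M}_q}(\widetilde\kappa_a),\Phi_{\mathcal{M}_q}(\widetilde\kappa_b)\rangle_{\mathcal{M}_q}$, and a finite sum of functions of $\widetilde\kappa_a-\widetilde\kappa_b$ is again such; hence the pre-projection kernel is already translation invariant.

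Step~3: I would transport this through $g_\theta$. The cleanest route is the equivariance picture: shifting $\widetilde\kappa\mapsto\widetilde\kappa+c$ rotates every block $(\cos(\omega_i\widetilde\kappa),\sin(\omega_i\widetilde\kappa))$ by angle $\omega_i c$, so $\Phi_{\mathbb{R}^{d_{\mathcal C}}}(\widetilde\kappa_b)=R_{\widetilde\kappa_b-\widetilde\kappa_a}\,\Phi_{\mathbb{R}^{d_{\mathcal C}}}(\widetilde\kappa_a)$ for a block-diagonal rotation $R_c$ depending only on $c$; rotations fixing $\mathbf 0$ are isometries of each $\kappa$-stereographic ball and commute with $\mathbf{exp}^{\kappa_{(q)}}_{\mathbf 0}$, so $\|_q\Phi_{\mathcal{M}_q}(\widetilde\kappa_a)$ and $\|_q\Phi_{\mathcal{M}_q}(\widetilde\kappa_b)$ differ by a single isometry $\rho(\widetilde\kappa_b-\widetilde\kappa_a)$ of $\mathbb{P}^{d_{\mathcal M}}$ depending only on the difference. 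If $g_\theta$ is a Riemannian projector equivariant with respect to this rotation family (e.g. an isometric/norm-preserving projector), then $\mathcal{K}_{\mathbb{P}}(\widetilde\kappa_a,\widetilde\kappa_b)$ depends only on $\widetilde\kappa_b-\widetilde\kappa_a$, giving $\mathcal{K}_{\mathbb{P}}(\widetilde\kappa_a,\widetilde\kappa_b)=\Psi_{\mathbb{P}}(\widetilde\kappa_a-\widetilde\kappa_b)$ with $\Psi_{\mathbb{P}}$ the induced profile.

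I expect the main obstacle to be exactly this last step: a generic \emph{learned} $g_\theta$ need not respect the rotation action, so one must either state the result for the canonical kernel before $g_\theta$, restrict $g_\theta$ to isometric/equivariant maps, or settle for an approximate ($\approx$) statement in line with the random-feature approximation already used for $\mathcal{K}$. A secondary, more mechanical hurdle is fixing precisely which inner product $\langle\cdot,\cdot\rangle_{\mathbb P}$ on the product manifold is meant and verifying the radii-and-angle reduction uniformly for $\kappa>0$ and $\kappa<0$ from the $\tan_\kappa/\tanh_\kappa$ identities — tedious but not conceptually hard.
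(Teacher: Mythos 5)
Your proposal is correct and, at its core, follows the same reduction as the paper: decompose the product-manifold inner product as a sum over component manifolds, and reduce each component term to the translation-invariant Euclidean similarity $\langle\Phi_{\mathbb{R}^{d_{\mathcal C}}}(\widetilde\kappa_a),\Phi_{\mathbb{R}^{d_{\mathcal C}}}(\widetilde\kappa_b)\rangle=\Psi_{\mathbb{R}}(\widetilde\kappa_a-\widetilde\kappa_b)$. The mechanism you use for that reduction differs slightly: you argue via the radial nature of $\mathbf{exp}^{\kappa_{(q)}}_{\mathbf 0}$ and the constant unit norm of the random-feature map, so each component similarity is a function of two fixed radii and the enclosed angle, hence of the difference alone; the paper instead writes each component term through the $\kappa$-stereographic (conformal) metric, $\langle\cdot,\cdot\rangle^{\kappa}_{\mathbf x}=(\lambda^{\kappa}_{\mathbf x})^2\langle\cdot,\cdot\rangle$, and asserts that the factor $(\lambda^{\kappa_{(q)}}_{\mathbf x})^2$ is unaffected by the curvature shift because it depends only on $\|\mathbf x\|_2$. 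Your norm computation makes explicit why that factor is indeed shift-independent, which the paper leaves implicit. The genuinely different (and more careful) part is your Step~3: the paper's definition of $\Phi_{\mathbb{P}^{d_{\mathcal C}}}$ includes the learned projector $g_\theta$, yet its proof computes the kernel of the pre-projection concatenation and never mentions $g_\theta$; you correctly identify that exact translation invariance only survives the projection if $g_\theta$ is equivariant with respect to the block-rotation action (or if the claim is stated for the canonical kernel before $g_\theta$, or weakened to an approximation). That caveat is a real gap in the paper's own argument, not in yours, so your version is the more complete one; the rest of your plan (fixing the inner-product convention and checking the $\tan_\kappa$ identities uniformly in the sign of $\kappa$) is routine and consistent with what the paper does.
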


Please refer to Appendix \ref{app:functional} for the detailed proofs (derivations) of Definition \ref{thm:curve_1} and Theorem \ref{thm:curve_2}. 
% Furthermore, we establish the stochastic convergence of the kernel approximation \( \big\langle \Phi_{\mathbb{P}^{d_{\mathcal{C}}}}(\widetilde{\kappa}_a), \Phi_{\mathbb{P}^{d_{\mathcal{C}}}}(\widetilde{\kappa}_b) \big\rangle \) to the underlying kernel \( \mathcal{K}_{\mathbb{P}}(\widetilde{\kappa}_a, \widetilde{\kappa}_b) \), and show that only a reasonable number of samples are required for accurate estimation. 
We employ \( \Phi_{\mathbb{P}^{d_{\mathcal{C}}}}(\widetilde{\kappa}(x)) \) as the \texttt{ORC}-based positional encoding for each node $x$ in the final pooling layer, enabling it to incorporate local curvature information and compute the task-specific relevance of substructures with varying curvatures within the graph.

\subsection{\modelshort\ Pooling}\label{sec:ricci_pool}
\vspace{-2mm}
 The importance of constant-curvature component spaces depends on the downstream task. To this end, we propose a hierarchical attention mechanism called \textit{\modelshort\ Pooling}. We perform attentional concatenation to fuse constant-curvature representations across component spaces so as to learn mixed-curvature representations in the product space, weighing them appropriately. Specifically, we lift component encodings to the common tangent space, where we can perform the usual euclidean operations, and compute their centorid by the mean pooling. Then, we model the importance of a component by the position of the embedding relative to the centorid,  parameterized by  $\boldsymbol \theta$.
\begin{align}
\boldsymbol \mu^{(L)}&= \frac{1}{\mathcal{Q}}\sum_{q = 1}^{\mathcal{Q}} \mathbf{log}_{\mathbf 0}^{\kappa_{(q)}} \big(\mathbf W_{q} \otimes_{\kappa_{(q)}} \mathbf{Z}^{(L)}_{\mathcal{M}_q^{\kappa_{(q)}, d_{(q)}}}\big) \quad (\text{Centroid using linear transformation }\mathbf{W}_q)\\
\tau_{q}&=\sigma\big({\boldsymbol \theta}^{\top}\big(\mathbf{log}_{\mathbf 0}^{\kappa_{(q)}} \big(\mathbf W_{q} \otimes_{\kappa_{(q)}} \mathbf{Z}^{(L)}_{\mathcal{M}_q^{\kappa_{(q)}, d_{(q)}}}\big) - \boldsymbol \mu^{(L)} \big)\big) \quad (\text{Relative importance of }\mathcal{M}_q)
\end{align}
Finally, with the learnable attentional weights $\beta_q = e^{\tau_q}/\sum_{q=0}^{Q} (e^{\tau_q})$, we perform attentional concatenation. Further, as motivated earlier, for every node $x$, we fuse the curvature embedding as positional encoding:\vspace{-2mm}
\begin{align}
\mathbf{Z}^{(L)(x)}_{\mathbb{P}^{d_{\mathcal{M}}}} = \Big\|_{q = 1}^{\mathcal{Q}}
    \left(\beta_q \otimes_{\kappa_{(q)}} \mathbf{Z}^{(L)(x)}_{\mathcal{M}_q^{\kappa_{(q)}, d_{(q)}}}\right) \quad;
    \boldsymbol{\zeta}^{(L)}_{(x)} = \mathbf{Z}^{(L)(x)}_{\mathbb{P}^{d_{\mathcal{M}}}}\Big\| \Phi_{\mathbb{P}^{d_{\mathcal{M}}}}(\widetilde{\kappa}(x))
\end{align}
We weigh different filters in our filter bank to yield the final node representation $\boldsymbol{\zeta}_{(x)}$ for $x$ as $\boldsymbol{\zeta}_{(x)} = \sum_{l=1}^{L} \epsilon_l \boldsymbol{\zeta}_{(x)}^{(l)}$. Here, $\boldsymbol{\zeta} \in \mathbb{P}^{n \times (d_{\mathcal{M}}+d_{\mathcal{C}})}$ contains the final node embeddings, and $\epsilon_l$ is a learnable parameter to weigh the importance of different filters in our bank. These node embeddings are then used for the final downstream tasks of node classification and link prediction. In the following section, we lay out the empirical results to validate the efficacy of \modelname.

\section{Experimentation}\label{sec:experiments}
\vspace{-2.5mm}
\textbf{Datasets}. We evaluate \modelname\ on the Node Classification (\texttt{NC}) and Link Prediction (\texttt{LP}) tasks using eight benchmark datasets. These include \textbf{(a) Homophilic} datasets such as \textit{(i) Citation networks} -- Cora, Citeseer and PubMed \citep{sen2008collective, yang2016revisiting}, and \textbf{(b) Heterophilic} datasets, which comprise \textit{(i) Wikipedia graphs} -- Chameleon and Squirrel \citep{rozemberczki2021multi}, \textit{(ii) Actor co-occurrence network} \citep{tang2009social}, and \textit{(iii) Webpage graphs} from WebKB \footnote{\url{http://www.cs.cmu.edu/afs/cs.cmu.edu/project/theo-11/www/wwkb}} -- Texas and Cornell. The dataset statistics and their respective homophily ratios are detailed in Table \ref{tab:dataset_stats}. Refer to Appendix \ref{app:graphs} for the discrete curvature and eigenspectrum distributions of these datasets. \vspace{-2mm}
\begin{wraptable}{r}{0.50\textwidth} % 'r' for right, 0.6\textwidth for table width
\setlength{\tabcolsep}{2pt}
\centering
\scriptsize{}
\resizebox{0.50\textwidth}{!}{% Resize the table to fit 0.6\textwidth
\begin{tabular}{c|cccccccc}
\toprule
\textbf{Dataset} & {Cora} & {Citeseer} & {PubMed} & {Chameleon} & {Squirrel} & {Actor} & {Texas} & {Cornell}  \\ 
\midrule
Classes          & 7     & 6       & 5        & 5       & 5      & 5        & 5     & 5     \\
Features         & 1433  & 3703    & 500      & 2325    & 2089   & 932      & 1703  & 1703  \\
Nodes            & 2708  & 3327    & 19717    & 2277    & 5201   & 7600     & 183   & 183   \\
Edges            & 5278  & 4552    & 44324    & 31371   & 198353 & 26659    & 279   & 277   \\
$\mathcal{H}$ & 0.825 & 0.718   & 0.792    & 0.247   & 0.217  & 0.215    & 0.057 & 0.301 \\
\bottomrule
\end{tabular}
}
\caption{Data statistics and homophily ratio ($\mathcal{H}$).}
\vspace{-5mm}\label{tab:dataset_stats}
\end{wraptable}

\textbf{Baselines}. To ensure a fair comparison, we evaluate \modelname\ against three types of baselines: \textbf{(a) Spatial--Euclidean}, including traditional methods such as GCN \citep{kipf2016semi}, GAT \citep{velivckovic2017graph}, and GraphSAGE \citep{hamilton2017inductive}. \textbf{(b) Spatial--Riemannian}, comprising \textit{(i) Constant-curvature models} like HGCN \citep{chami2019hyperbolic} and HGAT \citep{zhang2021hyperbolic}, and \textit{(ii) Mixed-curvature GNNs} such as $\kappa$GCN \citep{bachmann2020constant}, $\mathcal{Q}$GCN \citep{xiong2022pseudo}, and SelfMGNN \citep{sun2022self}. \textbf{(c) Spectral}, including ChebyNet \citep{defferrard2016convolutional}, BernNet \citep{he2021bernnet}, GPRGNN \citep{chien2020adaptive}, and FiGURe \citep{ekbote2024figure} (More details in Appendix \ref{app:baselines}). There are no existing methods that integrate both spectral and geometric signals.

%TODO: Add more experimental details 
\textbf{Experimental Settings}. For the transductive \texttt{LP} task, we randomly split edges into $85\%/5\%/10\%$ for training, validation and test sets, while for transductive \texttt{NC} task, we use the $60\%/20\%/20\%$ split. The results are averaged over 10 random splits and their $95\%$ confidence intervals are reported. We report the AUC-ROC and F1 Score metrics for \texttt{LP} and \texttt{NC} respectively. For computing \texttt{ORC} (in \textit{\modelshort\ Laplacian}), we use $\delta = 0.5$, i.e. equal probability mass is retained by the node and distributed among it's neighbors. We adopt the \texttt{ORC} implementation from \cite{ni2019community}, and use the \textit{Sinkhorn} algorithm \citep{sinkhorn1967concerning} to approximate the $\mathbf{W}_1$ distance. See Appendix \ref{app:orc} for more computational details and complexity analysis. Next, we adopt the implementation of the $\kappa$-stereographic product manifold from \texttt{Geoopt} library\footnote{\url{https://github.com/geoopt/geoopt}}. We heuristically determine the \textit{signature} of our manifold $\mathbb{P}$ (i.e. component manifolds) using the discrete \texttt{ORC} curvature of the input graph. The key idea is that the underlying curvature of the manifold must align with \texttt{ORC}. However, this discussion, along with how we initialise the learnable curvatures for the component manifolds, has been reserved for the Appendix \ref{app:estimate}. For all experiments, we choose the total manifold dimension as $d_{\mathcal{M}}= 48$ and learning rate as \texttt{4e-3}. We use the filter bank  $\Omega_{\mathbb{P}^{d_{\mathcal{M}}}} = \big[\mathbf{Z}^{\mathbf{I}}_{\mathbb{P}^{d_{\mathcal{M}}}}, \mathbf{Z}^{(1)}_{\mathbb{P}^{d_{\mathcal{M}}}}, \mathbf{Z}^{(2)}_{\mathbb{P}^{d_{\mathcal{M}}}}, \dots, \mathbf{Z}^{(L)}_{\mathbb{P}^{d_{\mathcal{M}}}}\big]$, with $L = 10$. For the GPR weights, we experiment with different initializations, $\alpha \in \{0.1, 0.3, 0.5, 0.9\}$. We list the hyperparameter settings in Appendix \ref{app:hyperparam}.

\textbf{Analysis}. Tables \ref{tab:results_node} and \ref{exp:results_edges} present a comparative performance analysis of \modelname\ against baseline models for the \texttt{NC} and \texttt{LP} tasks respectively. \textbf{(a)} \modelname\ consistently outperforms all baseline models across both homophilic and heterophilic datasets, achieving an improvement up to $5.32\%$ in F1-score for \texttt{NC}, up to $5.11\%$ increase in ROC-AUC score for \texttt{LP}. \textbf{(b)} Riemannian baselines perform well for homophilic datasets, while performing poorly in heterophilic cases (Evidence of limitation \texttt{L1}). \textbf{(c)} While spectral baselines like ChebyNet and BernNet perform poorly on heterophilic tasks because they act as \textit{low-pass}, owing to the fixed filters, FiGURe performs well across all tasks because it uses a filter bank to handle different bands in the eigenspectrum. \textbf{(d)} Mixed-curvature baselines ($\kappa$GCN and $\mathcal{Q}$GCN) outperform constant-curvature GNNs (HGCN and HGAT) because they capture the complex graph geometry (Evidence of limitation \texttt{L2}). 
\vspace{2mm}
\begin{wraptable}{r}{0.7\textwidth}
\centering
\setlength{\tabcolsep}{5pt}
\vspace{-2mm}
\resizebox{0.7\textwidth}{!}{
\begin{tabular}{@{}lcccccccc@{}}
\toprule
\modelname & {Cora} & {Citeseer} & {PubMed} & {Chameleon} & {Actor} & {Squirrel} & {Texas} & {Cornell} \\ \midrule
\modela & 82.50\scriptsize{$\pm$0.18} & 73.20\scriptsize{$\pm$0.16} & 87.20\scriptsize{$\pm$0.26} & 69.42\scriptsize{$\pm$0.01} & 42.70\scriptsize{$\pm$0.22} & 52.00\scriptsize{$\pm$0.20} & \color{blue}81.97\scriptsize{$\pm$0.24} & \color{blue}87.20\scriptsize{$\pm$0.02} \\
\modelb & 82.80\scriptsize{$\pm$0.20} & 73.50\scriptsize{$\pm$0.18} & 85.50\scriptsize{$\pm$0.28} & 69.80\scriptsize{$\pm$0.03} & 43.06\scriptsize{$\pm$0.19} & 50.76\scriptsize{$\pm$0.18} & 91.60\scriptsize{$\pm$0.23} & 91.60\scriptsize{$\pm$0.24} \\
\modelc & 82.70\scriptsize{$\pm$0.32} & 73.40\scriptsize{$\pm$0.17} & 86.40\scriptsize{$\pm$0.29} & 69.60\scriptsize{$\pm$0.07} & 40.73\scriptsize{$\pm$0.20} & 51.20\scriptsize{$\pm$0.19} & \textbf{94.03\scriptsize{$\pm$0.72}} & \textbf{92.31\scriptsize{$\pm$0.09}} \\
\modeld & 81.83\scriptsize{$\pm$0.20} & 72.72\scriptsize{$\pm$0.13} & 85.90\scriptsize{$\pm$0.60} & \textbf{70.23\scriptsize{$\pm$0.61}} & 42.50\scriptsize{$\pm$0.21} & \textbf{52.98\scriptsize{$\pm$0.25}} & 92.50\scriptsize{$\pm$0.22} & 83.11\scriptsize{$\pm$0.36} \\
\modele & 81.90\scriptsize{$\pm$0.17} & 72.50\scriptsize{$\pm$0.15} & \textbf{87.99\scriptsize{$\pm$0.45}} & 65.30\scriptsize{$\pm$0.28} & \textbf{43.91\scriptsize{$\pm$0.11}} & 51.10\scriptsize{$\pm$0.24} & 93.20\scriptsize{$\pm$0.19} & 91.20\scriptsize{$\pm$0.45} \\
\modelf & 81.60\scriptsize{$\pm$0.16} & 72.30\scriptsize{$\pm$0.14} & 87.50\scriptsize{$\pm$0.64} & 67.50\scriptsize{$\pm$0.23} & 43.50\scriptsize{$\pm$0.16} & \color{blue}48.30\scriptsize{$\pm$0.23} & 91.45\scriptsize{$\pm$0.61} & 91.39\scriptsize{$\pm$0.98} \\
\modelg & 80.80\scriptsize{$\pm$0.41} & 71.80\scriptsize{$\pm$0.19} & \color{blue}80.99\scriptsize{$\pm$0.31} & 69.20\scriptsize{$\pm$0.21} & \color{blue}36.44\scriptsize{$\pm$0.23} & 51.83\scriptsize{$\pm$0.21} & 90.99\scriptsize{$\pm$0.21} & 90.00\scriptsize{$\pm$0.10} \\
\modelh & \textbf{83.45\scriptsize{$\pm$0.15}} & \textbf{74.21\scriptsize{$\pm$0.02}} & 85.80\scriptsize{$\pm$0.27} & 70.17\scriptsize{$\pm$0.17} & 43.20\scriptsize{$\pm$0.18} & 52.71\scriptsize{$\pm$0.17} & 93.80\scriptsize{$\pm$0.18} & 91.80\scriptsize{$\pm$0.21} \\
\modeli & 80.50\scriptsize{$\pm$0.30} & 71.50\scriptsize{$\pm$0.18} & \color{blue}79.30\scriptsize{$\pm$0.32} & 68.18\scriptsize{$\pm$0.20} & \color{blue}37.14\scriptsize{$\pm$0.24} & 51.60\scriptsize{$\pm$0.22} & 92.80\scriptsize{$\pm$0.22} & 90.80\scriptsize{$\pm$0.53} \\
\modelj & 81.00\scriptsize{$\pm$0.19} & 72.00\scriptsize{$\pm$0.16} & 87.70\scriptsize{$\pm$0.92} & 68.11\scriptsize{$\pm$0.25} & 43.70\scriptsize{$\pm$0.17} & \color{blue}45.67\scriptsize{$\pm$0.25} & \color{blue}88.15\scriptsize{$\pm$0.25} & \color{blue}85.01\scriptsize{$\pm$0.51} \\
\bottomrule
\end{tabular}}
\caption{Performance comparison of \modelname\ with different manifold signatures for Node Classification (\texttt{NC}). Best performing \textit{signatures} are in \textbf{Bold}, and cases with a large decline in performance because of manifold mismatch are in {\color{blue}Blue}.}\label{tab:signatures}
\vspace{-5mm}
\end{wraptable}Table \ref{tab:filters} presents the learnt filter weights ($\epsilon$), highlighting the distinct preferences of homophilic and heterophilic datasets. Homophilic datasets, such as Citeseer and PubMed, emphasize low-pass filters, with the highest weights assigned to lower-order filters, $\mathbf{Z}^{(2)}$ and $\mathbf{Z}^{(3)}$, at $45.94\%$ and $47.97\%$ respectively. In contrast, heterophilic datasets like Actor and Cornell favor high-pass filters, attributing $40.09\%$ and $28.56\%$ to higher-order filters, $\mathbf{Z}^{(5)}$ and $\mathbf{Z}^{(9)}$. In the next section, we perform extensive ablation studies to evaluate the effectiveness of all components of \modelname.

\begin{table}[t] % 'r' for right, 0.6\textwidth for table width
	\centering
	\setlength{\tabcolsep}{6pt}
	\vspace{0.2cm}
	\resizebox{\textwidth}{!}{
		\begin{tabular}{@{}lccccccccccc@{}}
			\toprule
			 \textbf{Baseline}                & {Cora}                               & {Citeseer}                           & {PubMed}                             & {Chameleon}                          & {Actor}                              & {Squirrel}                           & {Texas}                              & {Cornell}                            & \textbf{Av. $\Delta$Gain} \\ \midrule
			GCN              & 75.21\scriptsize{$\pm$0.28}                & 67.30\scriptsize{$\pm$1.05}                & 83.75\scriptsize{$\pm$0.07}                & 61.16\scriptsize{$\pm$0.23}                & 31.12\scriptsize{$\pm$0.96}                & 43.06\scriptsize{$\pm$0.33}                & 75.61\scriptsize{$\pm$0.07}                & 67.72\scriptsize{$\pm$1.19}                & 11.95                      \\
			GAT              & 76.70\scriptsize{$\pm$0.13}                & 66.23\scriptsize{$\pm$0.85}                & 82.83\scriptsize{$\pm$0.22}                & 63.10\scriptsize{$\pm$0.77}                & 32.65\scriptsize{$\pm$0.23}                & 43.90\scriptsize{$\pm$0.01}                & 76.09\scriptsize{$\pm$0.77}                & 74.01\scriptsize{$\pm$0.01}                & 10.63                      \\
			SAGE             & 71.88\scriptsize{$\pm$0.91}                & 70.01\scriptsize{$\pm$0.64}                & 81.09\scriptsize{$\pm$0.13}                & 59.99\scriptsize{$\pm$0.89}                & 36.73\scriptsize{$\pm$0.01}                & 41.11\scriptsize{$\pm$1.16}                & 77.11\scriptsize{$\pm$0.45}                & 69.91\scriptsize{$\pm$0.24}                & 11.59                      \\ \midrule
			HGCN             & 78.50\scriptsize{$\pm$0.14}                & 69.55\scriptsize{$\pm$0.39}                & 83.72\scriptsize{$\pm$0.21}                & 60.18\scriptsize{$\pm$0.57}                & 35.89\scriptsize{$\pm$0.29}                & 39.93\scriptsize{$\pm$0.35}                & 88.11\scriptsize{$\pm$1.12}                & 72.88\scriptsize{$\pm$1.15}                & 8.97                       \\
			HGAT             & 77.12\scriptsize{$\pm$0.01}                & 70.12\scriptsize{$\pm$0.92}                & 84.02\scriptsize{$\pm$0.19}                & 62.43\scriptsize{$\pm$0.59}                & 35.12\scriptsize{$\pm$0.27}                & 41.78\scriptsize{$\pm$0.37}                & 85.56\scriptsize{$\pm$1.10}                & 73.12\scriptsize{$\pm$0.18}                & 8.91                       \\
			$\kappa$GCN      & 78.71\scriptsize{$\pm$1.37}                & 68.14\scriptsize{$\pm$0.34}                & \color{Green}85.18\scriptsize{$\pm$0.52}   & 62.12\scriptsize{$\pm$0.49}                & 34.57\scriptsize{$\pm$0.26}                & 43.04\scriptsize{$\pm$0.31}                & 85.03\scriptsize{$\pm$0.63}                & \color{magenta}86.36\scriptsize{$\pm$0.64} & 7.06                       \\
			$\mathcal{Q}$GCN & 79.64\scriptsize{$\pm$0.38}                & \color{magenta}71.15\scriptsize{$\pm$1.11} & \color{magenta}84.76\scriptsize{$\pm$0.13} & 61.83\scriptsize{$\pm$1.01}                & 32.24\scriptsize{$\pm$0.65}                & 46.65\scriptsize{$\pm$0.90}                & 82.76\scriptsize{$\pm$0.07}                & 83.90\scriptsize{$\pm$0.71}                & 7.20                       \\
			SelfMGNN         & \color{Green}80.19\scriptsize{$\pm$0.60}   & 70.91\scriptsize{$\pm$0.38}                & 82.81\scriptsize{$\pm$0.34}                & 64.97\scriptsize{$\pm$0.54}                & \color{Green}38.99\scriptsize{$\pm$0.23}   & \color{Green}49.79\scriptsize{$\pm$0.29}   & \color{Green}90.92\scriptsize{$\pm$0.65}   & 85.01\scriptsize{$\pm$0.69}                & 4.62                       \\ \midrule
			ChebyNet         & 71.09\scriptsize{$\pm$0.91}                & 66.67\scriptsize{$\pm$0.38}                & 83.83\scriptsize{$\pm$0.42}                & 59.96\scriptsize{$\pm$1.51}                & 38.02\scriptsize{$\pm$0.01}                & 45.67\scriptsize{$\pm$0.11}                & 79.08\scriptsize{$\pm$0.96}                & 71.33\scriptsize{$\pm$1.04}                & 10.61                      \\
			BernNet          & 73.34\scriptsize{$\pm$0.53}                & 62.12\scriptsize{$\pm$2.09}                & 82.15\scriptsize{$\pm$0.13}                & 62.03\scriptsize{$\pm$0.12}                & 33.55\scriptsize{$\pm$0.24}                & 42.81\scriptsize{$\pm$0.66}                & 75.11\scriptsize{$\pm$0.14}                & 65.56\scriptsize{$\pm$1.02}                & 12.98                      \\
			GPRGNN           & 79.49\scriptsize{$\pm$0.31}                & 67.61\scriptsize{$\pm$0.38}                & 84.07\scriptsize{$\pm$0.09}                & \color{magenta}65.09\scriptsize{$\pm$0.43} & 37.43\scriptsize{$\pm$1.09}                & 47.51\scriptsize{$\pm$0.23}                & \color{magenta}88.34\scriptsize{$\pm$0.09} & \color{Green}87.21\scriptsize{$\pm$0.70}   & 5.58                       \\
			FiGURe           & \color{magenta}80.01\scriptsize{$\pm$0.09} & \color{Green}71.26\scriptsize{$\pm$0.41}   & 83.89\scriptsize{$\pm$0.11}                & \color{Green}67.18\scriptsize{$\pm$0.02}   & \color{magenta}38.31\scriptsize{$\pm$0.36} & \color{magenta}48.71\scriptsize{$\pm$1.02} & 86.66\scriptsize{$\pm$0.62}                & 85.01\scriptsize{$\pm$0.68}                & 4.94                       \\ \midrule 
			\modelname       & \textbf{83.45\scriptsize{$\pm$0.15}}       & \textbf{74.21\scriptsize{$\pm$0.02}}       & \textbf{87.99\scriptsize{$\pm$0.45}}       & \textbf{70.23\scriptsize{$\pm$0.61}}       & \textbf{43.91\scriptsize{$\pm$0.11}}       & \textbf{52.98\scriptsize{$\pm$0.25}}       & \textbf{94.03\scriptsize{$\pm$0.72}}       & \textbf{92.31\scriptsize{$\pm$0.09}}       & 0.0                        \\
			\textbf{$\Delta$Imp.}& 3.26$\%$	&2.95$\%$	&2.81$\%$	&4.05$\%$	&5.32$\%$	&3.19$\%$ &3.11$\%$	&5.10$\%$\\
			\bottomrule
		\end{tabular}
		}
  \caption{Performance comparision of \modelname\ with baselines for \texttt{NC} task (Mean F1 Score $ \pm \; 95\%$ confidence interval). \textbf{First}, {\color{Green}Second} and {\color{Magenta}Third} best performing models are highlighted. \textbf{Av. $\Delta$ Gain} represents the average gain of \modelname\ over the model in that row, averaged across the different datasets. \textbf{$\Delta$Imp.} implies the $\%$ improvement of \modelname\ over the second best performing baseline.}\vspace{-2mm}
  \label{tab:results_node}\end{table}
\begin{table}[t]
\centering
\begin{minipage}{0.61\textwidth}

\scriptsize
\centering
\label{tab:results_nc}
\setlength{\tabcolsep}{5pt}
\resizebox{\textwidth}{!}{
\begin{tabular}{@{}lccccccccccc@{}}
\toprule
\textbf{Dataset}    & $\mathbf{I}$    & $\mathbf{Z}^{(1)}$  & $\mathbf{Z}^{(2)}$  & $\mathbf{Z}^{(3)}$  & $\mathbf{Z}^{(4)}$  & $\mathbf{Z}^{(5)}$  & $\mathbf{Z}^{(6)}$  & $\mathbf{Z}^{(7)}$  &$\mathbf{Z}^{(8)}$ & $\mathbf{Z}^{(9)}$& $\mathbf{Z}^{(10)}$ \\ \midrule
Cora       & \textbf{0.1729} & \color{Green}0.1586 & 0.0695 & 0.0438  & 0.0446 & 0.0221 & \color{magenta}0.1525  & 0.0526 & 0.0711 & 0.0739 & 0.1385 \\
Citeseer   & 0.0478 & 0.0506 & \textbf{0.4594} & 0.0183 & \color{magenta}0.0765 & 0.0187 & \color{Green}0.1093 & 0.0755 & 0.0686 & 0.0232 & 0.0522 \\
PubMed     & \color{Green}0.1666 & 0.0342 & 0.0132 & \textbf{0.4795} &\color{magenta} 0.1116 & 0.0289 & 0.0057 & 0.0153 & 0.0564 & 0.0442 & 0.0444 \\
Chameleon  & 0.0232 & \color{magenta}0.1679 & 0.0536 & \color{Green}0.1696 & 0.0121 & 0.0137 & 0.0392 & \textbf{0.2178} & 0.1356 & 0.0140 & 0.1532 \\
Actor      & 0.0360 & 0.0753 & 0.0332 & 0.0337 & 0.0101 &\textbf{ 0.4009} & \color{Green}0.2419 & 0.0096 & \color{magenta}0.1155 & 0.0089 & 0.0347 \\
Squirrel   & 0.0306 & 0.0796 & 0.0719 & \textbf{0.2562} & 0.0351 & 0.0423 & 0.0415 & \color{Green}0.1322 & 0.1050 & \color{magenta}0.1105 & 0.0951 \\
Texas      & \color{magenta}0.1052 & 0.0308 & 0.0664 & 0.0490 & 0.0766 & 0.0274 & 0.0995 & 0.0402 & \color{Green}0.1741 & 0.0639 & \textbf{0.2667}\\
Cornell    & \color{magenta}0.1159 & 0.0302 & 0.0671 & 0.0434 & 0.0747 & 0.0249 & 0.0966 & 0.0358 & \color{Green}0.1691 & \textbf{0.2856} & 0.0568 \\
\bottomrule
\end{tabular}}
\caption{Learned filter weights ($\texttt{NC}$) for the top-performing split, distinguishing between homophilic (favoring low-pass filters) and heterophilic (favoring high-pass filters). \textbf{First}, {\color{Green}second}, and {\color{magenta}third} highest filter weights are highlighted.}\vspace{-5mm}
\label{tab:filters}
\end{minipage}%
\hfill
\begin{minipage}{0.36\textwidth}
\centering
\setlength{\tabcolsep}{3pt}
\resizebox{\textwidth}{!}{
\begin{tabular}{@{}ll@{}}
\toprule
\textbf{Dataset} & Signature \\ \midrule
Cora &  $\mathbb{H}^{16} ({\color{red}-0.36}, {\color{blue}0.34}) \times \mathbb{S}^{16} ({\color{red}+0.68}, {\color{blue}0.29}) \times \mathbb{E}^{16} ({\color{red}0}, {\color{blue}0.37})$ \\
Citeseer & $\mathbb{H}^{16} ({\color{red}-0.62}, {\color{blue}0.36}) \times \mathbb{S}^{16} ({\color{red}+.55}, {\color{blue}0.49}) \times \mathbb{E}^{16} ({\color{red}0}, {\color{blue}0.15})$ \\
PubMed &  $\mathbb{H}^{16} ({\color{red}-0.91}, {\color{blue}0.40}) \times \mathbb{H}^{16} ({\color{red}-0.37}, {\color{blue}0.36}) \times \mathbb{E}^{16} ({\color{red}0}, {\color{blue}0.24})$ \\
Chameleon &  $\mathbb{H}^{16} ({\color{red}-0.21}, {\color{blue}0.22}) \times \mathbb{S}^{16} ({\color{red}+0.13}, {\color{blue}0.29}) \times \mathbb{S}^{16} ({\color{red}+0.59}, {\color{blue}0.49})$ \\
Actor & $\mathbb{H}^{16} ({\color{red}-0.87}, {\color{blue}0.39}) \times \mathbb{H}^{16} ({\color{red}-0.53}, {\color{blue}0.36}) \times \mathbb{E}^{16} ({\color{red}0}, {\color{blue}0.25})$ \\
Squirrel & $\mathbb{H}^{16} ({\color{red}-0.28}, {\color{blue}0.14}) \times \mathbb{S}^{16} ({\color{red}+0.31}, {\color{blue}0.39}) \times \mathbb{S}^{16} ({\color{red}+0.67}, {\color{blue}0.53})$ \\
Texas   & $\mathbb{H}^{8} ({\color{red}-0.45}, {\color{blue}0.32}) \times \mathbb{S}^{8} ({\color{red}+0.34}, {\color{blue}0.23}) \times \mathbb{E}^{32} ({\color{red}0}, {\color{blue}0.45})$ \\
Cornell & $\mathbb{H}^{8} ({\color{red}-0.50}, {\color{blue}0.14}) \times \mathbb{S}^{8} ({\color{red}+0.23}, {\color{blue}0.19}) \times \mathbb{E}^{32} ({\color{red}0}, {\color{blue}0.67})$ \\
\bottomrule
\end{tabular}}
\caption{Learning results (\texttt{NC}) of \modelname\ for the best performing product signature $-$ \textit{manifold}$^{\text{ (dim)}}$ ({\color{red}curvature}, {\color{blue}weight}).}
\vspace{-5mm}\label{tab:product}
\end{minipage}
\end{table}

\subsection{Ablation Study}\vspace{-2mm}
$\blacksquare$ \textbf{Impact of product manifold \textit{signatures}}. Different datasets leverage substructures with varying curvatures as inductive biases for downstream tasks, leading to differing performance across product manifold signatures. Table \ref{tab:product} shows the learned {\color{red}curvature} and {\color{blue}weight} ($\beta$) for the best-performing configurations. Table \ref{tab:signatures} highlights performance across multiple \textit{signatures} (which are heuristically determined as discussed in Appendix \ref{app:estimate}), with large degradation (marked in {\color{blue}blue}) when there is a mismatch between manifold and graph curvature. For example, Squirrel, which has a predominantly positively curved structure (Figure \ref{app:graphs}), performs poorly (drop of $\sim 6.5\%$) on signatures dominated by negative curvature, such as \modelg\ and \modelj.

$\blacksquare$ \textbf{Impact of mixed-curvature space}. Table \ref{tab:ablation} outlines the comparison of \modelname\ with its Euclidean variant $\modelname_{euc}$, where all representations are learned in Euclidean space, omitting \textit{\modelshort\ pooling}. On average, $\modelname_{euc}$ experiences a $\sim3.2\%$ performance drop across all datasets for the \texttt{NC} task, highlighting the importance of capturing mixed-curvature signals. Notably, the performance degradation is more pronounced on heterophilic datasets ($\sim 4\%$) compared to homophilic datasets ($\sim 2\%$). This suggests that \modelname's integration of spectral and geometric properties is crucial, particularly for heterophilic data where capturing the eigenspectrum is more relevant.\vspace{2mm}
\begin{wraptable}{r}{0.635\textwidth}
 \centering
    \setlength{\tabcolsep}{3pt}
    \resizebox{0.65\textwidth}{!}{
    \begin{tabular}{@{}lcccccc@{}}
    \toprule
Dataset     & $\modelname$                         & $\modelname_{{euc}}$        & $\modelname_{{lap}}$        & $\modelname_{{enc}}$        & $\modelname_{{pool}}$       & $\modelname_{{fil}}$ \\ \midrule 
Cora        & \textbf{83.45\scriptsize{$\pm$0.15}} & 80.78\scriptsize{$\pm$0.13} & 81.95\scriptsize{$\pm$0.34} & 82.61\scriptsize{$\pm$0.25} & 80.11\scriptsize{$\pm$0.22} & 81.20\scriptsize{$\pm$0.29} \\
Citeseer    & \textbf{74.21\scriptsize{$\pm$0.02}} & 72.50\scriptsize{$\pm$0.19} & 72.73\scriptsize{$\pm$0.30} & 71.95\scriptsize{$\pm$0.28} & 73.10\scriptsize{$\pm$0.30} & 72.87\scriptsize{$\pm$0.27} \\
PubMed      & \textbf{87.99\scriptsize{$\pm$0.45}} & 85.23\scriptsize{$\pm$0.25} & 86.67\scriptsize{$\pm$0.10} & 87.11\scriptsize{$\pm$0.12} & 86.23\scriptsize{$\pm$0.10} & 87.29\scriptsize{$\pm$0.09} \\
Chameleon   & \textbf{70.23\scriptsize{$\pm$0.61}} & 66.47\scriptsize{$\pm$0.56} & 68.12\scriptsize{$\pm$0.35} & 67.83\scriptsize{$\pm$0.31} & 68.47\scriptsize{$\pm$0.32} & 66.12\scriptsize{$\pm$0.34} \\
Actor       & \textbf{43.91\scriptsize{$\pm$0.11}} & 39.03\scriptsize{$\pm$0.09} & 43.03\scriptsize{$\pm$0.27} & 41.12\scriptsize{$\pm$0.28} & 40.03\scriptsize{$\pm$0.27} & 38.81\scriptsize{$\pm$0.29} \\
Squirrel    & \textbf{52.98\scriptsize{$\pm$0.25}} & 49.92\scriptsize{$\pm$0.36} & 51.39\scriptsize{$\pm$0.35} & 50.92\scriptsize{$\pm$0.35} & 51.03\scriptsize{$\pm$0.11} & 48.13\scriptsize{$\pm$0.17} \\
Texas       & \textbf{94.03\scriptsize{$\pm$0.72}} & 90.15\scriptsize{$\pm$0.61} & 92.15\scriptsize{$\pm$0.52} & 93.15\scriptsize{$\pm$0.55} & 92.15\scriptsize{$\pm$0.60} & 90.27\scriptsize{$\pm$0.62} \\
Cornell     & \textbf{92.31\scriptsize{$\pm$0.09}} & 89.46\scriptsize{$\pm$0.13} & 90.46\scriptsize{$\pm$0.17} & 91.06\scriptsize{$\pm$0.28} & 90.46\scriptsize{$\pm$0.07} & 89.73\scriptsize{$\pm$0.09} \\
\textbf{Avg. $\Delta$ Gain} & 0 &  \color{blue}3.19	&\color{blue}1.57	&\color{blue}1.67	&\color{blue}2.19	& \color{blue}3.08\\
    \bottomrule
    \end{tabular}}
     \caption{Ablation study (\texttt{NC}) results on benchmark datasets. \textbf{Av. $\Delta$ Gain} represents the average gain of \modelname\ over the model in that column, averaged across the different datasets.}
     \vspace{-3mm}
     \label{tab:ablation}
\end{wraptable}
$\blacksquare$ \textbf{Impact of \modelshort\ Laplacian}. To assess the effectiveness of the \textit{\modelshort\ Laplacian}, we conducted an ablation study by replacing the \modelshort\ Laplacian-based adjacency matrix $\widetilde{\mathbf{A}}$, with the standard graph adjacency matrix (for \modelshort\ filtering) in $\modelname_{{lap}}$. This resulted in a performance drop of $1.57\%$ (avg.) across all datasets. The relatively modest degradation highlights the \modelshort\ Laplacian’s role in capturing curvature information.

$\blacksquare$ \textbf{Impact of the filter bank, curvature encoding and \modelshort\ pooling}. $\modelname_{fil}$ replaces the filter bank with a single learnable filter, as a result the performance degrades largely on heterophilic datasets ($\sim 4\%$), while the degradation in performance is not that large on homophilic datasets ($\sim 2.5\%$). Further, we remove the curvature-based positional encoding in $\modelname_{enc}$ and replace the \modelshort\ pooling mechanism with simple concatenation in $\modelname_{pool}$. We observe a consistent decline in performance across both these ablations. Owing to spatial limitations, we present the experimental results for the \texttt{LP} task in Appendix \ref{app:lp}.

\vspace{-2mm}
\section{Conclusion}
\vspace{-3mm}
In this paper, we propose to unify \textit{Spectral} and \textit{Curvature} signals in a graph for learning optimal graph representations, aiming to inspire further research in this area. \modelname\ introduces a graph learning paradigm parameterized by spectral filters on a mixed-curvature product manifold. We propose a new curvature-informed \textit{\modelshort\ Laplacian} operator to capture the underlying geometry, use it to define a novel GPR-based spectral filter-bank (\textit{\modelshort\ Filtering}) and introduce an attention-based pooling mechanism to fuse representations from multiple mixed-curvature graph filters (\textit{\modelshort\ Pooling}). \modelname\ outperforms the state-of-the-art baselines for node classification and link prediction over homophilic and heterophilic datasets, highlighting the efficacy of combining spectrum and curvature (geometry), in learning graph representations.

\newpage
\section*{Acknowledgements}
We would like to sincerely thank Prof. Geoffrey J. Gordon (\texttt{ggordon@cs.cmu.edu}) from Carnegie Mellon University for his invaluable feedback and comments throughout multiple iterations of this paper, which greatly helped improve its quality.

\bibliography{iclr2025_conference}
\bibliographystyle{iclr2025_conference}

\newpage{}
% \tableofcontents
\newpage
\startcontents[subsections]
\addcontentsline{toc}{section}{\Large Table of Contents}  % Ensures it gets listed in the main Table of Contents
\printcontents[subsections]{l}{1}[3]{}
\newpage
\section{Appendix}

\subsection{Notation Table}

\footnotesize{
\begin{tabularx}{\textwidth}{@{} p{3cm} X @{}}
\toprule
\textbf{Notation} & \textbf{Reference} \\
\midrule
% Add your table rows here
$\mathbb{H}$ & Hyperbolic manifold\\
$\mathbb{S}$ & Spherical manifold\\
$\mathbb{E}$ & Euclidean manifold\\
$\mathbb{P}^{d_{\mathcal{M}}}$ & Product manifold of dimension $d_{\mathcal{M}}$\\
$\kappa$ & Continous manifold curvature\\
$\widetilde{\kappa}$ &  Ollivier-Ricci Curvature (\texttt{ORC})\\
$\widetilde{\kappa}(x, y)$ &  \texttt{ORC} of edge $\{x, y\}$\\
$\widetilde{\kappa}(x)$ &  \texttt{ORC} of node $x$\\
$m_x^{\delta}$ & Probability mass assigned to node $x$ for \texttt{ORC} computation\\
$\delta_{ij}$ & Kroneckner delta function\\
$\mathcal{N}(x)$& Neighbourhood set of node $x$\\
$x \sim y$ & This implies that $x$ and $y$ are adjacent nodes\\
$\delta$ & \texttt{ORC} neighbourhood weighting parameter\\
$\mathbf{W}_1(.)$ & Wasserstein-1 distance\\
$d_{\mathcal{G}}(x, y)$ &Shortest path (graph distance) between nodes $x$ and $y$ on graph $\mathcal{G}$\\
$\mathcal{M}_{i}^{\kappa_i, d_i}$ & Constant-curvature manifold with dimension $d_i$ and curvature $\kappa_i$. $\mathcal{M}_{i} \in \{\mathbb{H}, \mathbb{S}, \mathbb{E}\}$\\
$\widetilde{\mathbf{L}}, \widetilde{\mathbf{D}}, \widetilde{\mathbf{A}}$ & \modelshort\ Laplacian operator; $\widetilde{\mathbf{L}} =  \widetilde{\mathbf{D}} - \widetilde{\mathbf{A}}$ \\
$\widetilde{\mathbf{L}}_n, \widetilde{\mathbf{A}}_n$ & Normalized \modelshort\ Laplacian and Adjacency matrices; $\widetilde{\mathbf{A}}_n = \widetilde{\mathbf{D}}^{\frac{-1}{2}}\; \widetilde{\mathbf{A}}\;\widetilde{\mathbf{D}}^{\frac{-1}{2}}$ \\
$\mathbf{L}, \mathbf{D}, \mathbf{A}$ & Traditional graph Laplacian, Adjacency and Degree matrices; $\mathbf{L} = \mathbf{D} - \mathbf{A}$ \\
$d_f$&Input graph feature dimension\\
$d_{\mathcal{M}}$ &Total dimension of the product manifold\\
$d_{\mathcal{C}}$ &Total dimension of the curvature embedding\\
$\mathbf{F} \in \mathbb{R}^{n \times d_f}$ & Input feature matrix\\
$\mathbf{exp}^{\kappa}_{\mathbf{0}} : \mathbb{R}^{d_{\mathcal{M}}} \rightarrow \mathcal{M}^{\kappa}$ & Exponential map, to map from tangent plane (Euclidean) to the product manifold\\
$\oplus_{\kappa}$ & \textit{Mobius} addition\\
$\otimes_{\kappa}$ & $\kappa$-right-matrix-multiplication\\
$\boxtimes_{\kappa}$ & $\kappa$-left-matrix-multiplication\\
$\boldsymbol{\zeta}_{(x)}$ & Final node representation for node $x$\\
$\epsilon_l$ & Learnable weight of $l^{th}$ filter\\
 $\boldsymbol{\zeta} \in \mathbb{P}^{n \times (d_{\mathcal{M}}+d_{\mathcal{C}})}$ & Matrix containing final node embeddings\\
 $\beta_q$ & Learnable weight of $q^{th}$ component manifold\\
  $\tau_q$ & Relative importance of manifold $\mathcal{M}_q$ in \modelshort\ Pooling \\
 $L$& Total number of filters\\
 $\mathcal{Q}$&Total number of components in product manifold\\
 $l$ & Used to denote the $l^{th}$ filter\\
 $\mathbf{Z}^{(L)(x)}_{\mathbb{P}^{d_{\mathcal{M}}}}$ & The \texttt{GPR}-based node representation for filter with $L$ layers, on manifold $\mathbb{P}^{d_{\mathcal{M}}}$\\
 $\kappa_{(q)}$ & Manifold curvature of $q^{th}$ component in product manifold\\
$d_{(q)}$ & Manifold dimension of $q^{th}$ component in product manifold\\ 
 $\mathbf{H}^{(l)}_{\mathcal{M}_q^{\kappa_{(q)}, d_{(q)}}}$ & Hidden state representation after $l$ layers of \texttt{GPR}, on $q^{th}$ manifold component with curvature $\kappa_{(q)}$ and dimension $d_{(q)}$\\
 $\gamma_l$ & \texttt{GPR} weight for $l^{th}$ layer in the filter, while propagation \texttt{GPR} score.\\
 $\alpha$& Initialising parameter for \texttt{GPR}\\
 Signature & $ \mathbb{P}^{d_{\mathcal{M}}} = \times_{q=1}^{\mathcal{Q}} \mathcal{M}_{q}^{\kappa_{(q)}, d_{(q)}}  = (\times_{h=1}^{\mathcal{H}} \mathbb{H}_{h}^{\kappa_{(h)}, d_{(h)}}) \times (\times_{s=1}^{\mathcal{S}} \mathbb{S}_{s}^{\kappa_{(s)}, d_{(s)}}) \times \mathbb{E}^{d_{(e)}}$\\
 $\psi : V \to \mathbb{R}$ & A function defined on vertex set $V$\\
 $\Phi_{\mathbb{P}^{d_{\mathcal{C}}}}(\widetilde{\kappa}(x))$ & Curvature encoding on product manifold\\
 $\mathcal{K}_{\mathbb{P}}(\widetilde{\kappa}_a,\widetilde{\kappa}_b)$ & $\mathcal{K}_{\mathbb{P}}(\widetilde{\kappa}_a,\widetilde{\kappa}_b):= \big\langle \Phi_{\mathbb{P}^{d_{\mathcal{C}}}}(\widetilde{\kappa}_a), \Phi_{\mathbb{P}^{d_{\mathcal{C}}}}(\widetilde{\kappa}_b) \big\rangle$ is the curvature kernel\\
 $\lambda_i$ & $i^{th}$ eigenvalue of $\widetilde{\mathbf{A}}_n$\\
 $\widetilde{\lambda}_i$ & $i^{th}$ eigenvalue of $\widetilde{\mathbf{L}}_n$\\
 $f_{\theta}(.): \mathbb{R}^{d_{f}} \rightarrow \mathbb{R}^{d_{\mathcal{M}}}$ & Neural network with parameter set $\{{\theta\}}$ that generates the hidden state features before feeding input to \modelshort\ Filtering.\\
  $g_{\theta}(.): \mathbb{R}^{d_{\mathcal{M}}} \rightarrow \mathbb{R}^{d_{\mathcal{C}}}$ & Neural network projector used in curvature encoding\\

\bottomrule
\end{tabularx}}

\newpage

\newpage
\subsection{More on Preliminaries} \label{app:prelim}
\subsubsection{Analysis of Real-world Graphs}\label{app:graphs}
\begin{figure}[ht]
  \centering
  \includegraphics[width=\textwidth]{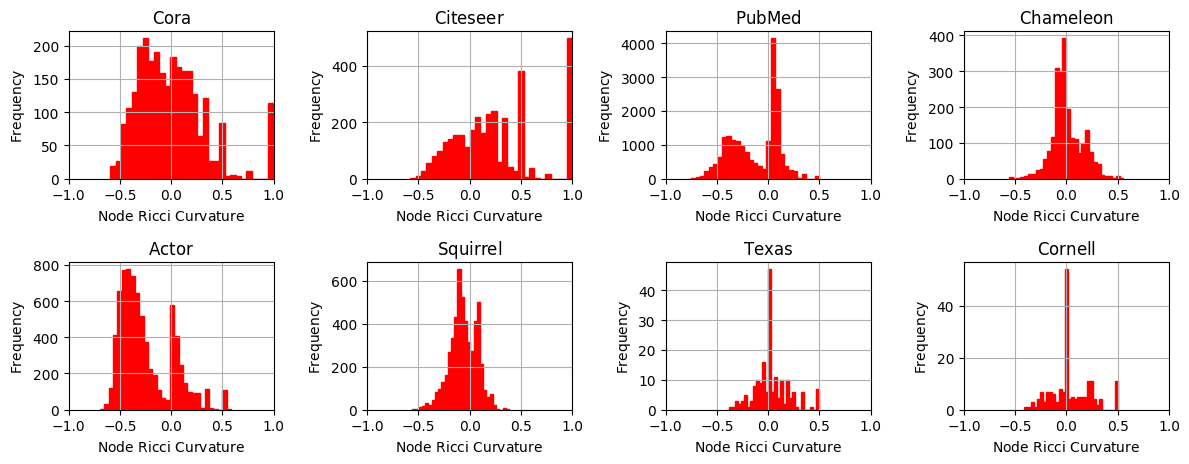}
  \caption{Distribution of Ollivier-Ricci curvatures $\widetilde{\kappa}(x, y)$ of \textbf{edges} across different datasets. The histograms illustrate the frequencies of edge-based Ollivier-Ricci curvature values  for each dataset, \textit{highlighting} the topological diversity in both homophilic and heterophilic settings, and hence the need of learning representations in manifolds with different curvatures. }
  \label{app_fig:edge_curv}
\end{figure}
\begin{figure}[ht]
  \centering
  \includegraphics[width=\textwidth]{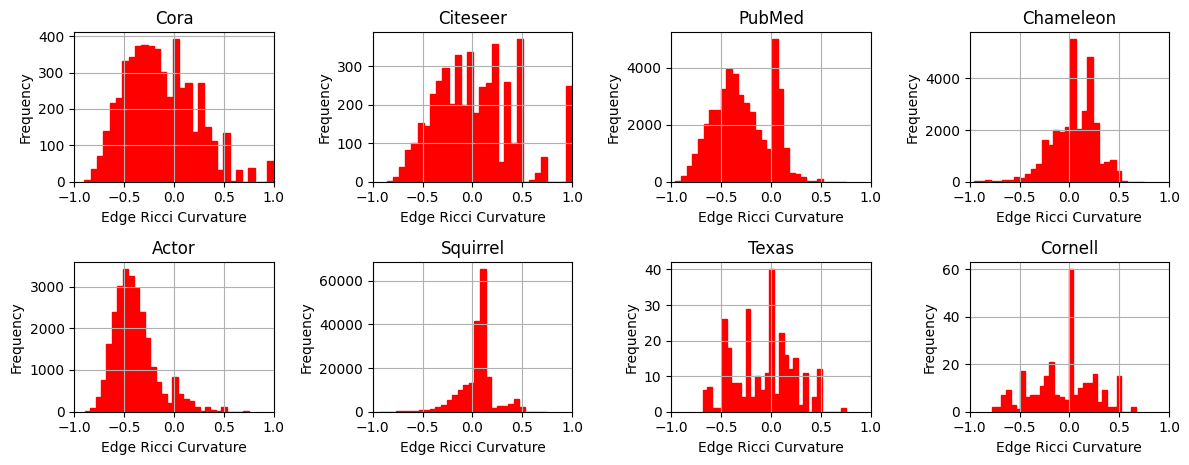}
  \vspace{-5mm}
  \caption{Distribution of Ollivier-Ricci curvatures $\widetilde{\kappa}(x)$ of \textbf{nodes} across different datasets. }
  \label{app_fig:node_curv}
\end{figure}
\begin{figure}[ht]
  \centering
  \includegraphics[width=\textwidth]{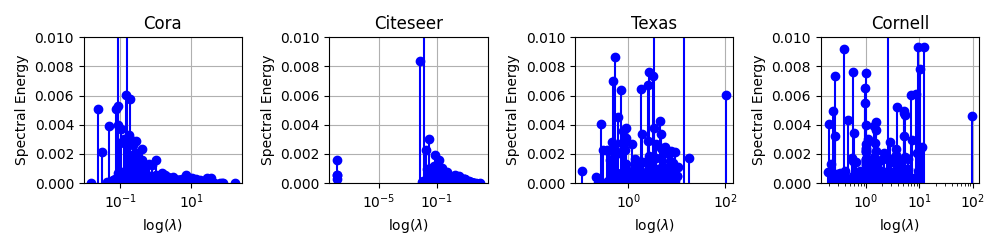}
  \vspace{-5mm}
  \caption{Distribution of \textbf{Spectral Energy} with the respective eigenvalues of the graph Laplacian. Spectral Energy, $\mathcal{E}_i = {f_i^2}/{\sum_j{f_j^2}}$, where $f_j$ is the $j^{th}$ Fourier mode of the graph Laplacian. Low frequency implies \textit{homophily} and high frequency components correspond to \textit{heterophily}. These plots highlight the importance of capturing signals from different parts of the eigenspectrum for designing a GNN that works well across multiple tasks.}
  \label{app_fig:spectral_energy}
\end{figure}

\subsubsection{Ollivier-Ricci Curvature (\texttt{ORC})}\label{app:orc}
In an unweighted graph, the neighborhood of each node $x$, denoted as $\mathcal{N}(x)$, is assigned a probability distribution according to a lazy random walk formulation \citep{lin-yau}. Specifically, we define the distribution as follows:
\begin{equation}\label{eq:orc-lazy}
m_z^{\alpha}(x) = \begin{cases}
\alpha, & \text{if } z = x, \\
\frac{1-\alpha}{|\mathcal{N}(x)|}, & \text{if } z \in \mathcal{N}(x), \\
0, & \text{otherwise}.
\end{cases}
\end{equation}
Here, $\alpha$ controls the probability that a random walk will remain at the current node, while the remaining probability mass $(1-\alpha)$ is uniformly distributed across the neighboring nodes. This formulation connects \texttt{ORC} with lazy random walks and influences the balance between local exploration and the likelihood of revisiting a node. In this work, we use $\alpha = 0.5$, meaning that equal probability mass is distributed between the node itself and its neighbors, striking a balance between\textit{ breadth-first} and\textit{ depth-first} search strategies. The choice of $\alpha$ is crucial and depends on the topology of the graph. A smaller $\alpha$ value encourages more local exploration, while a larger $\alpha$ favors revisiting nodes, thereby promoting a “lazy” walk. For our experiments, $\alpha = 0.5$ was chosen to reflect an equal probability mass distribution between the node and its neighbors.

$\blacksquare$\textbf{ Computational Considerations.} Computing \texttt{ORC} can be computationally intensive due to the need to calculate the Wasserstein-1 distance ($W_1$), between the neighborhood distributions of connected nodes. In a discrete setting, this corresponds to solving a linear program. Typically, $W_1(m_x, m_y)$ between two nodes $x$ and $y$ is computed using the \emph{Hungarian algorithm} \citep{kuhn1955hungarian}, which has a cubic time complexity. However, this becomes prohibitively expensive as the graph size increases. Alternatively, the Wasserstein-1 distance can be approximated using the \textit{Sinkhorn algorithm} \citep{sinkhorn1967concerning}, which reduces the complexity to quadratic time. \underline{For this work}, we employ the Sinkhorn approximation to compute \texttt{ORC} efficiently. Below, we provide an \underline{alternative} to approximate \texttt{ORC} of an edge in linear time, in case of very large (million-scale) real-world graphs.

$\blacksquare$\textbf{ Approximating} \texttt{ORC} \textbf{in Linear Time}. Even with the quadratic complexity of the Sinkhorn algorithm, scaling to large networks remains challenging. To address this, a linear-time combinatorial approximation of \texttt{ORC} can be employed, as suggested by \citet{tian2023curvature}. This method approximates the Wasserstein distance by utilizing local structural information, making it much more computationally feasible. The approximation of ORC builds on classical bounds first introduced by \citet{jost2014ollivier}. Let $\#(x, y)$ denote the number of triangles formed by the edge $(x, y)$, and define $a \wedge b = \min(a, b)$, $a \vee b = \max(a, b)$ and $d_x$ is the degree of node $x$. The following bounds on \texttt{ORC} can be derived for an edge $e = {x, y}$:

\begin{theorem}[~\cite{jost2014ollivier}]
\label{the:ollilow}
For an unweighted graph, the Ollivier-Ricci curvature of an edge $e = {x, y}$ satisfies the following bounds:
\begin{enumerate}
\item Lower bound:
\begin{align}\label{eq:ollilow}
\widetilde{\kappa}(x, y) &\geq - \left( 1 - \frac{1}{d_x} - \frac{1}{d_y} - \frac{\#(x, y)}{d_x \wedge d_y} \right)_{+} - \left( 1 - \frac{1}{d_x} - \frac{1}{d_y} - \frac{\#(x, y)}{d_x \vee d_y} \right)_{+} + \frac{\#(x, y)}{d_x \vee d_y}.\nonumber
\end{align}

\item Upper bound:
\begin{equation}\label{eq:olliup}
\widetilde{\kappa}(x, y) \leq \frac{\#(x, y)}{d_x \vee d_y}.
\end{equation}
\end{enumerate}
\end{theorem}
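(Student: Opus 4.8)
The plan is to use that $(x,y)$ is an edge, so $d_{\mathcal{G}}(x,y)=1$ and $\widetilde{\kappa}(x,y)=1-\mathbf{W}_1(m_x,m_y)$; the claimed upper bound on $\widetilde{\kappa}$ is then exactly a lower bound on the transport cost $\mathbf{W}_1(m_x,m_y)$, and the claimed lower bound on $\widetilde{\kappa}$ is a matching upper bound on $\mathbf{W}_1(m_x,m_y)$. I would work with the idleness-free measures $m_u(z)=\mathbf{1}[z\sim u]/d_u$, under which these bounds are classically stated, assume without loss of generality $d_x\le d_y$, set $k:=\#(x,y)$, and decompose $\mathcal{N}(x)=\{y\}\sqcup\mathcal{N}_{xy}\sqcup\mathcal{N}_x'$, $\mathcal{N}(y)=\{x\}\sqcup\mathcal{N}_{xy}\sqcup\mathcal{N}_y'$, where $\mathcal{N}_{xy}$ is the set of the $k$ common neighbours and $\mathcal{N}_x',\mathcal{N}_y'$ are the private neighbours, with $|\mathcal{N}_x'|=d_x-1-k$ and $|\mathcal{N}_y'|=d_y-1-k$.

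\textbf{Upper bound.} For this direction I would only use $\mathbf{W}_1\ge\mathrm{TV}$: in any coupling of $m_x,m_y$ the mass on the diagonal is at most $\sum_z\min(m_x(z),m_y(z))$, so the remaining off-diagonal mass, each unit of which moves distance $\ge 1$, forces $\mathbf{W}_1(m_x,m_y)\ge 1-\sum_z\min(m_x(z),m_y(z))=\mathrm{TV}(m_x,m_y)$. Only the common neighbours carry positive mass under both measures, with overlap $\min(1/d_x,1/d_y)=1/d_y$ each, so $\sum_z\min(m_x(z),m_y(z))=k/d_y=\#(x,y)/(d_x\vee d_y)$; hence $\widetilde{\kappa}(x,y)=1-\mathbf{W}_1(m_x,m_y)\le\#(x,y)/(d_x\vee d_y)$, the stated upper bound.

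\textbf{Lower bound.} For this direction I would exhibit one explicit transport plan $\pi$ from $m_x$ to $m_y$ and estimate its cost; since $\mathbf{W}_1(m_x,m_y)$ is the infimum of the cost over couplings, any $\pi$ gives a valid lower bound on $\widetilde{\kappa}$. Put $A:=1-1/d_x-1/d_y-k/(d_x\wedge d_y)$ and $B:=1-1/d_x-1/d_y-k/(d_x\vee d_y)$ (so $A\le B$). The plan: (i) keep the overlap $\min(1/d_x,1/d_y)$ on each common neighbour in place — mass $k/(d_x\vee d_y)$ at cost $0$, which saturates the $m_y$-demand there; (ii) along the single edges $z\sim x$ ($z\in\mathcal{N}_x'$) and $y\sim w$ ($w\in\mathcal{N}_y'$), ship as much private-$x$ supply as possible into $x$'s demand and as much of $y$'s surplus as possible into the private-$y$ demand, and, still at distance one, fill any demand still unmet at $x$ from the common-neighbour leftover and from $y$, and forward any leftover at $y$ to still-unmet private-$y$ demand; (iii) route the remaining residual, using that residual mass on a common neighbour reaches any $w\in\mathcal{N}_y'$ along $c\sim y\sim w$ in $\le 2$ steps and residual private-$x$ supply reaches residual private-$y$ demand along $z\sim x\sim y\sim w$ in $\le 3$ steps. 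A short computation with the marginals would show that after stage (ii) the leftover private-$x$ supply is $(A)_+$, the leftover private-$y$ demand is $(B)_+$, and the common-neighbour leftover that stage (iii) still has to move is $(B)_+-(A)_+$; so in stage (iii) a mass $(A)_+$ moves $\le 3$, a mass $(B)_+-(A)_+$ moves $\le 2$, and the remaining moved mass $1-k/(d_x\vee d_y)-(B)_+$ was already moved $\le 1$ in stages (i)--(ii). Summing, $\mathbf{W}_1(m_x,m_y)\le 3(A)_+ + 2\big((B)_+-(A)_+\big) + \big(1-k/(d_x\vee d_y)-(B)_+\big) = 1+(A)_+ + (B)_+ - \#(x,y)/(d_x\vee d_y)$, hence $\widetilde{\kappa}(x,y)\ge -(A)_+ - (B)_+ + \#(x,y)/(d_x\vee d_y)$, the stated lower bound.

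\textbf{Main obstacle.} The upper bound is immediate; the work is the bookkeeping behind the lower-bound plan. One has to verify $\pi$ has the correct marginals in each regime $A\ge 0$, $A<0\le B$, $A<0,\,B<0$ — the sign of $A$ records whether the private-$x$ supply $(d_x-1-k)/d_x$ covers $x$'s demand $1/d_y$, and the sign of $B$ whether $y$'s surplus $1/d_x$ fits inside the private-$y$ demand $(d_y-1-k)/d_y$ — to check all residual masses are nonnegative and balance (this is exactly where the reductions of the sign conditions to $A,B$ are used), and to confirm the worst-case distances $\le 2$ and $\le 3$, each realised by the explicit path through $y$, resp. through $x$ and $y$. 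The degenerate cases $d_x-1-k=0$ or $d_y-1-k=0$, where some private sets are empty and the corresponding stages collapse, need a quick separate check, though the arithmetic is unchanged.
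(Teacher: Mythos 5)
Your proof is correct, and there is nothing in the paper to compare it against: the paper only quotes this result from Jost--Liu and explicitly defers the proof to the cited references, so your argument has to stand on its own --- and it does, following essentially the classical transport-plan route. The upper bound via $\mathbf{W}_1(m_x,m_y)\ge 1-\sum_z\min\bigl(m_x(z),m_y(z)\bigr)$ is sound, since with idleness-free measures only the $\#(x,y)$ common neighbours lie in both supports, each contributing $1/(d_x\vee d_y)$ to the overlap. For the lower bound, the bookkeeping you defer to ``a short computation'' does check out in all three regimes: with $d_x\le d_y$, the balance identity $A+k/d_x-k/d_y=B$ (i.e.\ the common-neighbour leftover equals $B-A$) is exactly what makes the residuals come out as you claim. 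If $A\ge 0$, the private-$x$ surplus $A$ travels $\le 3$ and the common leftover $B-A$ travels $\le 2$; if $A<0\le B$, the common leftover covers $x$'s unmet demand $-A$ at distance $1$ and its remainder $B$ travels $\le 2$ (the case where it would not suffice forces $B<0$, so it cannot occur here); if $B<0$, the split of $y$'s supply into $-B$ toward $x$ and $1/d_x+B=(d_y-1-k)/d_y$ toward the private-$y$ demand settles everything at distance $\le 1$, matching $(A)_+=(B)_+=0$. Summing costs gives $\mathbf{W}_1\le 1+(A)_++(B)_+-\#(x,y)/(d_x\vee d_y)$, hence the stated bound. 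One caveat worth flagging: the paper's $\widetilde{\kappa}$ is defined with the lazy measures $m_x^{\delta}$ (with $\delta=0.5$ in the experiments), whereas the bounds as written are the Jost--Liu bounds for the idleness-free walk; your explicit restriction to the $\delta=0$ measures is the right reading of the cited theorem, but it means the statement as invoked in the paper silently changes the underlying probability measure, and for $\delta>0$ the constants would need modification.
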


The \texttt{ORC} of an edge, can then be approximated as the arithmetic mean of these bounds:
\begin{equation}\label{eq:orc-approx}
\widehat{\kappa}(x, y) := \frac{1}{2} \left( \kappa^{\text{upper}}(x, y) + \kappa^{\text{lower}}(x, y) \right).
\end{equation}
The proof of these bounds has been detailed in \cite{tian2023curvature}. This approximation is computationally efficient, with linear-time complexity, and can be parallelized easily across edges, making it suitable for large-scale graphs. The computation relies solely on local structural information, such as the degree of the nodes and the number of triangles.

\subsubsection{Product Manifolds} \label{app:product_manifolds}
Let $\mathcal{M}_1, \mathcal{M}_2, \dots, \mathcal{M}_k$ denote a set of smooth manifolds. Their Cartesian product forms a product manifold, denoted by  $\mathbb{P}$ , such that \( \mathbb{P} = \mathcal{M}_1 \times \mathcal{M}_2 \times \dots \times \mathcal{M}_k \). Any point $p \in \mathbb{P}$ is characterized by its coordinates \( p = (p_1, p_2, \dots, p_k) \), where each  $p_i$  corresponds to a point on the individual manifold $\mathcal{M}i$. Similarly, a tangent vector  $v \in  \mathcal{T}_p\mathbb{P}$  can be expressed as \( v = (v_1, v_2, \dots, v_k) \), where each  $v_i \in  \mathcal{T}_{p_i}$ $\mathcal{M}_i$  represents the projection of  $v$  in the tangent space of the respective component manifold  $\mathcal{M}_i$. Optimization over manifolds requires the notion of taking steps along the manifold, which can be achieved by moving in the tangent space and mapping those movements back onto the manifold through the exponential map. The exponential map at a point \( p \in \mathbb{P} \), denoted \( \mathbf{exp}_p : \mathcal{T}_p \mathbb{P} \to \mathbb{P} \), allows for this transfer. For product manifolds, the exponential map decomposes into individual component exponential maps. Specifically, given a tangent vector \( v = (v_1, v_2, \dots, v_k) \) at \( p = (p_1, p_2, \dots, p_k) \in \mathbb{P} \), the exponential map on \( \mathbb{P} \) can be expressed as:
\begin{equation}
    \mathbf{exp}_p(v) = (\mathbf{exp}_{p_1}(v_1), \mathbf{exp}_{p_2}(v_2), \dots, \mathbf{exp}_{p_k}(v_k))
\end{equation}

\subsubsection{Kappa-Stereographic Model}\label{app:kappa_sterographic}
The $\kappa$-stereographic model \citep{bachmann2020constant} unifies Hyperbolic and Spherical geometries under gyrovector formalism. This model leverages the framework of gyrovector spaces to represent all three constant curvature geometries—hyperbolic, Euclidean, and spherical—simultaneously. Additionally, it facilitates smooth transitions between these constant curvature geometries, enabling the joint learning of space curvatures alongside the embeddings. It is a smooth manifold $\mathcal M^{\kappa, d}=\{\boldsymbol z \in \mathbb R^d  | -\kappa||\boldsymbol z ||_2^2 < 1\}$, whose origin is $\mathbf 0 \in \mathbb R^d$, equipped with a Riemannian metric $g_{\boldsymbol z}^\kappa=(\lambda_{\boldsymbol z}^\kappa)^2 \mathbf I$,  where $\lambda_{\boldsymbol z}^\kappa$ is given by $\lambda_{\boldsymbol z}^\kappa=2\left(1+\kappa||\boldsymbol z||_2^2\right)^{-1}.$  The Riemannian operations under this model are elaborated in the table below:f

\begin{table}[ht]
\small
\centering
\begin{tabular}{l|c|c}
\hline
\textbf{Operation} & \textbf{Formalism in $\mathbb E^d$} &\textbf{Unified formalism in $\kappa$-stereographic model ($\mathbb H^d$/ $\mathbb S^d$)}\\
\hline
Distance Metric& 
$d^\kappa_{\mathcal M}(\mathbf{x}, \mathbf{y}) =\left\| \mathbf{x}- \mathbf{y}\right\|_{2}$
&
$
d^\kappa_{\mathcal M}(\mathbf{x}, \mathbf{y})=\frac{2}{\sqrt{|\kappa|}} \tan _{\kappa}^{-1}\left(\sqrt{|\kappa|}\left\|-\mathbf{x} \oplus_{\kappa} \mathbf{y}\right\|_{2}\right)
$
\\
\hline
Exp. Map & 
$\mathbf{exp}_{\mathbf{x}}^{\kappa}(\mathbf{v})=\mathbf{x}+\mathbf{v}$ 
&
$
\mathbf{exp}_{\mathbf{x}}^{\kappa}(\mathbf{v})=\mathbf{x} \oplus_{\kappa}\left(\tan _{\kappa}\left(\sqrt{|\kappa|} \frac{\lambda_{\mathbf{x}}^{\kappa}\|\mathbf{v}\|_{2}}{2}\right) \frac{\mathbf{v}}{\sqrt{|\kappa|}\|\mathbf{v}\|_{2}}\right)
$
\\
Log. Map & 
$\mathbf{log}_{\mathbf{x}}^{\kappa}(\mathbf{y})= \mathbf{x}-\mathbf{y}$
&
$
\mathbf{log}_{\mathbf{x}}^{\kappa}(\mathbf{y})=\frac{2}{\sqrt{|\kappa|} \lambda_{\mathbf{x}}^{\kappa}} \tan _{\kappa}^{-1}\left(\sqrt{|\kappa|}\left\|-\mathbf{x} \oplus_{\kappa} \mathbf{y}\right\|_{2}\right) \frac{-\mathbf{x} \oplus_{\kappa} \mathbf{y}}{\left\|-\mathbf{x} \oplus_{\kappa} \mathbf{y}\right\|_{2}}
$
\\
\hline
Addition & 
$\mathbf{x} \oplus_{\kappa} \mathbf{y}=\mathbf{x} + \mathbf{y}$
&
$
\mathbf{x} \oplus_{\kappa} \mathbf{y}=\frac{\left(1+2 \kappa \mathbf{x}^{T} \mathbf{y}+K\|\mathbf{y}\|^{2}\right) \mathbf{x}+\left(1-\kappa || \mathbf{x}||^{2}\right) \mathbf{y}}{1+2 \kappa \mathbf{x}^{T} \mathbf{y}+\kappa^{2}|| \mathbf{x}||^{2}|| \mathbf{v}||^{2}}
$\\
\hline
\end{tabular}
\caption{Operations in Hyperbolic $\mathbb H^d$, Spherical $\mathbb S^d$ and Euclidean space $\mathbb E^d$.}
\label{tab:kops}
\end{table}

$\blacksquare$ \textbf{$\mathbf{\kappa}$-right-matrix-multiplication}. Given a matrix $\mathbf{X} \in \R^{n \times d}$ holding $\kappa$-stereographic embeddings in its rows and weights $\mathbf{W} \in \R^{d \times e}$, the Euclidean right multiplication can be written row-wise as $(\mathbf{X} \mathbf{W})_{i \bullet} = \mathbf{X}_{i\bullet} \mathbf{W}$. Then the $\mathbf{\kappa}$-right-matrix-multiplication is defined row-wise as
\begin{equation}
    \begin{split}
        (\mathbf{X} \otimes_{\kappa} \mathbf{W})_{i\bullet} &= \mathbf{exp}_{0}^{\kappa}\left((\mathbf{log}_{0}^{\kappa}(\mathbf{X})\mathbf{W})_{i\bullet}\right) = \tan_\kappa \left(\alpha_{i}\tan_\kappa^{-1}(||\mathbf{X}_{\bullet i}||)\right)\frac{(\mathbf{X}\mathbf{W})_{i\bullet}}{||(\mathbf{X}\mathbf{W})_{i\bullet}||}
    \end{split}
\end{equation}

where $\alpha_{i} = \frac{||(\mathbf{X}\mathbf{W})_{i\bullet}||}{||\mathbf{X}_{i\bullet}||}$ and $\mathbf{exp}_{0}^{\kappa}$ and $\mathbf{log}_{0}^{\kappa}$ denote the exponential and logarithmic map in the $\kappa$-stereo. model.

$\blacksquare$ \textbf{$\mathbf{\kappa}$-left-matrix-multiplication}. Given a matrix $\mathbf{X}\in\mathbb{R}^{n\times d}$ holding $\kappa$-stereographic embeddings in its rows and weights $\mathbf{A}\in\mathbb{R}^{n\times n}$, the \textbf{$\kappa$-left-matrix-multiplication} is defined row-wise as 
\begin{equation}
(\mathbf{A}\boxtimes_\kappa \mathbf{X})_{i\bullet} := (\sum_{j}A_{ij})\otimes_\kappa m_\kappa(\mathbf{X}_{1\bullet},\cdots,\mathbf{X}_{n\bullet}; \mathbf{A}_{i\bullet}).
\end{equation}

\subsubsection{Spectral Graph Theory}  \label{app:spectral_theory}
Graph Fourier Transform (\texttt{GFT}) \citep{sandryhaila2013discrete} lays the foundation for Graph Neural Networks (GNNs). A \texttt{GFT} is defined using a \textit{reference} operator $\mathbf{R}$ which admits a spectral decomposition. Traditionally, in the case of GNNs, this reference operator has been the symmetric normalized Laplacian $\mathbf{L}_n = \mathbf{I} - \mathbf{A}_n$ \citep{kipf2016semi}. The graph Fourier transform of a
signal $\mathbf{f} \in \mathbb{R}^n$ is then defined as $\hat{\mathbf{f}} = \mathbf{U}^{\top} \mathbf{f} \in \mathbb{R}^n$, and its inverse as $\mathbf{f} = \mathbf{U}\hat{\mathbf{f}}$. A graph filter is an operator that acts independently on the entire eigenspace of a diagonalisable and symmetric reference operator R, by modulating their corresponding eigenvalues. A graph filter is defined via the graph filter function $g(.)$ operating on the reference operator as $ g(\mathbf{R}) = \mathbf{U} g(\mathbf{\Lambda}) \mathbf{U}^{\top}$.

\subsection{More on \modelshort\ Laplacian}\label{app:Laplacian}
Spectral graph theory has shown significant progress in relating geometric characteristics of graphs to properties of spectrum of graph Laplacians and related matrices. Several variants of the graph Laplacian matrices have been shown to capture specific inductive biases for different tasks \citep{ko2023spectral, belkin2008discrete, jacobson2012cotangent}. 

\begin{proof}[Proof of Definition \ref{thm:Laplacian}] Say the function $\psi : V \to \mathbb{R}$ is defined on the vertex set V of the graph. Suppose $\psi$ describes a temperature distribution across a graph, where $\psi(x)$ is the temperature at vertex $x$. According to Newton's law of cooling \citep{he2024constrained}, the heat transferred from node $x$ to node $y$ is proportional to $\psi(x) - \psi(y)$ if nodes $x$ and $y$ are connected (if they are not connected, no heat is transferred). Consequently, the heat diffusion equation on the graph can be expressed as $\frac{d \psi}{dt} = -\beta \sum_y \mathbf{A}_{xy}(\psi(x) - \psi(y))$, where $\beta$ is a constant of proportionality and $\mathbf{A}$ denotes the adjacency matrix of the graph. Further insight can be gained by considering Fourier’s law of thermal conductance \citep{liu1990fourier}, which states that heat flow is inversely proportional to the resistance to heat transfer. \texttt{ORC} measures the transportation cost ($\mathbf{W}_1(:,:)$) between the neighborhoods of two nodes, reflecting the effort required to transport mass between these neighborhoods \citep{bauer2011ollivier}. We interpret this transportation cost as the \textit{resistance} between nodes. The vital takeaway here is that $-$ \textit{Heat flow between two nodes in a graph is influenced by the underlying Ollivier-Ricci curvature (\texttt{ORC}) distribution}. The diffusion rate is faster on an edge with positive  curvature (low resistance), and slower on an edge with negative curvature (high resistance). Thus, the ratio $\mathcal{R}^{res}_{xy} = \frac{\mathbf{W}_1(m_x, m_y)}{d_{\mathcal{G}}(x, y)}$ represents the \textit{resistance} from node $x$ to node $y$, i.e. $\frac{d \psi_{xy}}{dt} \propto \frac{1}{\mathcal{R}^{res}_{xy}}$. It can be observed that $\frac{1}{\mathcal{R}^{res}_{xy}} = \frac{d_{\mathcal{G}}(x, y)}{\mathbf{W}_1(m_x, m_y)} = \frac{1}{1 - \widetilde{\kappa}(x, y)}$ (From the definition of \texttt{ORC}) would tend to infinity when $\mathbf{W}_1(m_x, m_y) = 0$ (i.e. $\widetilde{\kappa}(x, y) = 1$). Thus, to ensure continuity, we create a new ratio as $\frac{1}{\mathcal{R}^{\bullet}_{xy}} = e^{-\mathcal{R}^{res}_{xy}} = e^{\frac{-1}{1-\widetilde{\kappa}(x, y)}}$. Thus, we can modify the above heat flow equation as:
\vspace{-2mm}
\begin{align*}
\frac{d \psi}{dt} &= -\bar{\beta} \sum_y \frac{\mathbf{A}_{xy}(\psi(x) - \psi(y))}{\mathcal{R}^{\bullet}_{xy}} \quad (\text{Inversely proportional to $\mathcal{R}^{\bullet}_{xy}$}) \\
&= -\bar{\beta} \sum_y \mathbf{A}_{xy}(\psi(x) - \psi(y))\; e^{\frac{-1}{1-\widetilde{\kappa}(x, y)}}
= -\bar{\beta} \left(\psi(x) \sum_{y}\mathbf{A}_{xy} - \sum_{y}\mathbf{A}_{xy}\psi(y))\right)\; e^{\frac{-1}{1-\widetilde{\kappa}(x, y)}} \\
&= -\bar{\beta} \left( \psi(i) \mathbf{D}_{xx} - \sum_y \psi(y)\mathbf{A}_{xy}\right) \; e^{\frac{-1}{1-\widetilde{\kappa}(x, y)}} \quad (\because \mathbf{D}_{xx} = \sum_{y}\mathbf{A}_{xy})\\
&= -\beta \sum_y \left( \delta_{xy} \;e^{\frac{-1}{1-\widetilde{\kappa}(x, y)}} \mathbf{D}_{xx} - e^{\frac{-1}{1-\widetilde{\kappa}(x, y)}}\;\mathbf{A}_{xy}\right)\psi(y) \quad (\delta_{xy} \text{ is the Kronecker delta.})\\
&= -\bar{\beta} \sum_y \left( e^{\frac{-1}{1-\widetilde{\kappa}(x, y)}} \; \mathbf{L}_{xy}\right)\psi(y) \quad (\mathbf{L}, \mathbf{D}, \mathbf{A} \text{ are Laplacian, Degree and Adjacency matrices.})\\
&= -\bar{\beta} \left(\mathbf{L} \odot e^{\frac{-1}{1-\widetilde{\kappa}(x, y)}}\right) \psi = -\bar{\beta} \widetilde{\mathbf{L}}\psi \quad(\odot \text{ is the element-wise product})\\
\vspace{-3mm}
\end{align*}
This gives us the standard heat equation on graphs. Here, $\bar{\beta}$ is the updated constant of proportionality. $\widetilde{\mathbf{L}} = \widetilde{\mathbf{D}}- \widetilde{\mathbf{A}}$ is the \textbf{\modelshort\ Laplacian} operator, where $\widetilde{\mathbf{D}}$ and $\widetilde{\mathbf{A}}$ are the updated Degree and Adjacency matrices, to represent that the graph is transformed under edge weights $w_{xy} = \frac{1}{\mathcal{R}^{\bullet}_{xy}} = e^{\frac{1}{1 - \widetilde{\kappa}(x, y)}}$. Finally, our \modelshort\ Laplacian operator can be written as ($x \sim y$ means $xy$ is an edge in the graph):
\begin{equation}
\widetilde{\mathbf{L}}\psi(x)= \sum_{y \sim x} \bar{w}_{xy} \left( \psi(x) - \psi(y) \right) = \sum_{y \sim x} e^{\frac{-1}{1-\widetilde{\kappa}(x, y)}} \left( \psi(x) - \psi(y) \right)
\end{equation}
$\blacksquare$ \textbf{Why is $e^{-\mathcal{R}_{xy}^{res}} = e^{\frac{-1}{1-\widetilde{\kappa}(x, y)}}$ the right choice?} 
To mathematically justify that  $e^{-\frac{1}{1 - \widetilde{\kappa}(x, y)}}$  is an appropriate choice, we must verify its properties:
\begin{enumerate}
    \item \textbf{Asymptotics}. As  $\widetilde{\kappa}(x, y) \to 1 ,  e^{-\frac{1}{1 - \widetilde{\kappa}(x, y)}} \to 0$ , indicating that nodes with high positive curvature experience very fast heat diffusion (minimal resistance). Conversely, as  $\widetilde{\kappa}(x, y) \to -1$ ,  $e^{-\frac{1}{1 - \widetilde{\kappa}(x, y)}} \to \frac{1}{\sqrt{e}}$ , meaning that nodes with high negative curvature have slow heat diffusion (higher resistance).
    \item \textbf{Continuity}. The exponential function is smooth and continuous, ensuring that even small changes in the curvature result in smooth changes in the heat flow dynamics, which is crucial for stable numerical simulations and theoretical consistency.
    \item \textbf{Monotonicity}. For $\widetilde{\kappa}(x, y) > 0 ,  e^{-\frac{1}{1 - \widetilde{\kappa}(x, y)}}$  is a decreasing function with respect to  $\widetilde{\kappa}(x, y)$. This means as curvature increases, the resistance decreases, aligning with the physical intuition of heat flow.
\end{enumerate}
\end{proof}
\subsubsection{Relevent Theorems for \modelshort\ Laplacian}
\begin{theorem}[Positive Semidefiniteness of \modelshort\ Laplacian]\label{thm:pos_sem} The normalized Laplacian operator $\widetilde{\mathbf{L}}$ is positive semidefinite, i.e., for any real vector $u \in \mathbb{R}^n$, we have $\mathbf{u}^T\widetilde{\mathbf{L}}_{\text{n}}\mathbf{u} \geq 0$.
\end{theorem}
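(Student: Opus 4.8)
The plan is the standard ``sum over edges'' argument for weighted graph Laplacians, adapted to the curvature-based weights. First I would unpack the unnormalized quadratic form. Writing $\widetilde{\mathbf{L}} = \widetilde{\mathbf{D}} - \widetilde{\mathbf{A}}$ with $\widetilde{\mathbf{A}}_{xy} = \bar{w}_{xy}\mathbf{A}_{xy}$ and $\widetilde{\mathbf{D}}_{xx} = \sum_{y} \bar{w}_{xy}\mathbf{A}_{xy}$, and using that $\widetilde{\kappa}(x,y) = \widetilde{\kappa}(y,x)$ (\texttt{ORC} is symmetric) so that $\bar{w}_{xy} = \bar{w}_{yx}$, a routine rearrangement gives, for any $\mathbf{u} \in \mathbb{R}^n$,
\begin{equation}
\mathbf{u}^\top \widetilde{\mathbf{L}} \, \mathbf{u} = \sum_{x} \widetilde{\mathbf{D}}_{xx} u_x^2 - \sum_{x,y} \bar{w}_{xy}\mathbf{A}_{xy} u_x u_y = \frac{1}{2}\sum_{x \sim y} \bar{w}_{xy}\,(u_x - u_y)^2 .
\end{equation}

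Second, I would argue that every weight is nonnegative. Since \texttt{ORC} satisfies $\widetilde{\kappa}(x,y) < 1$ on any edge that is not a ``perfect overlap'' edge (and for the degenerate case $\widetilde{\kappa}(x,y) = 1$ we set $\bar{w}_{xy} = \lim_{t \to 1^-} e^{-1/(1-t)} = 0$ by the continuity discussion following Definition~\ref{thm:Laplacian}), the exponent $-1/(1-\widetilde{\kappa}(x,y))$ is a well-defined real number, hence $\bar{w}_{xy} = e^{-1/(1-\widetilde{\kappa}(x,y))} > 0$ on genuine edges and $\geq 0$ in all cases. Plugging into the displayed identity, $\mathbf{u}^\top \widetilde{\mathbf{L}}\,\mathbf{u}$ is a nonnegative combination of squares, so $\widetilde{\mathbf{L}} \succeq 0$.

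Third, I would transfer this to the normalized operator. Using $\widetilde{\mathbf{L}}_n = \widetilde{\mathbf{D}}^{-1/2}\,\widetilde{\mathbf{L}}\,\widetilde{\mathbf{D}}^{-1/2}$ (well-defined once we assume no isolated vertices, so $\widetilde{\mathbf{D}}_{xx} > 0$ for all $x$, which also follows from $\bar w_{xy}>0$ on edges), set $\mathbf{v} = \widetilde{\mathbf{D}}^{-1/2}\mathbf{u}$; then $\mathbf{u}^\top \widetilde{\mathbf{L}}_n \mathbf{u} = \mathbf{v}^\top \widetilde{\mathbf{L}}\,\mathbf{v} = \tfrac12\sum_{x\sim y}\bar w_{xy}(v_x - v_y)^2 \geq 0$, which is exactly the claim. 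Equivalently one notes $\widetilde{\mathbf{L}}_n = \mathbf{B}^\top\mathbf{B}$ for a suitable weighted incidence matrix $\mathbf{B}$.

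I do not anticipate a real obstacle here; the only things to be careful about are (i) justifying the symmetry $\bar w_{xy} = \bar w_{yx}$ so the edge-sum identity holds, and (ii) handling the boundary value $\widetilde{\kappa}(x,y) = 1$ consistently with the limiting convention already adopted in the paper, so that the weights are genuinely nonnegative and $\widetilde{\mathbf{D}}$ is invertible. Both are bookkeeping rather than substantive difficulties.
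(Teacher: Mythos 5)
Your proposal is correct and follows essentially the same route as the paper: the paper likewise substitutes $\mathbf{f} = \widetilde{\mathbf{D}}^{-1/2}\mathbf{u}$, rearranges the quadratic form into $\tfrac{1}{2}\sum_{x,y} e^{-1/(1-\widetilde{\kappa}(x,y))}\mathbf{A}_{xy}(\mathbf{f}_x-\mathbf{f}_y)^2$, and concludes from positivity of the curvature weights. Your extra bookkeeping (symmetry of $\bar w_{xy}$, the $\widetilde{\kappa}=1$ limit, invertibility of $\widetilde{\mathbf{D}}$) only makes explicit what the paper leaves implicit.
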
 

\begin{proof}

We start by showing that the normalized \textbf{\modelshort\ Laplacian}
\begin{align}\label{sup:eq1}
    \widetilde{\mathbf{L}}_{\text{n}} = \mathbf{I} - \widetilde{\mathbf{D}}^{-1/2}\widetilde{\mathbf{A}}\widetilde{\mathbf{D}}^{-1/2} = \mathbf{I}-\widetilde{\mathbf{A}}_{n}
\end{align}
is positive semi-definite. Let $\mathbf{u}$ be any real vector of unit norm and $\mathbf{f} = \widetilde{\mathbf{D}}^{-1/2}\mathbf{u}$, then we have
\begin{align}
    & \mathbf{u}^T\widetilde{\mathbf{L}}_{\text{n}}\mathbf{u} = \mathbf{u}^T\mathbf{u} - \mathbf{u}^T\widetilde{\mathbf{D}}^{-1/2}\widetilde{\mathbf{A}}\widetilde{\mathbf{D}}^{-1/2}\mathbf{u} = \sum_{x=1}^{n}\mathbf{u}_x^2 - \sum_{x,y=1}^{n}\mathbf{f}_x\mathbf{f}_y\widetilde{\mathbf{A}}_{xy}\\
    & = \sum_{x=1}^{n}\widetilde{\mathbf{D}}_{xx}\mathbf{f}_x^2 - \sum_{x,y=1}^{n}\mathbf{f}_x\mathbf{f}_y\widetilde{\mathbf{A}}_{xy} = \frac{1}{2}(\sum_{x=1}^{n}\widetilde{\mathbf{D}}_{xx}\mathbf{f}_x^2 - 2\sum_{x,y=1}^{n}\mathbf{f}_x\mathbf{f}_y\widetilde{\mathbf{A}}_{xy}+\sum_{y=1}^{n}\widetilde{\mathbf{D}}_{yy}\mathbf{f}_y^2)\\
    & = \frac{1}{2}\sum_{x,y=1}^{n}\widetilde{\mathbf{A}}_{xy}(\mathbf{f}_x-\mathbf{f}_y)^2 = \frac{1}{2}\sum_{x,y=1}^{n} e^{\frac{-1}{1-\widetilde{\kappa}(x, y)}}{\mathbf{A}}_{xy}(\mathbf{f}_x-\mathbf{f}_y)^2,\label{sup:eq2} 
\end{align}
where the last step follows from the definition of the degree. We know that $ e^{\frac{-1}{1-\widetilde{\kappa}(x, y)}}>0 \; \forall \kappa(x, y)$, hence our \modelshort\ Laplacian is positive semidefinite.    
\end{proof}

\begin{theorem}\label{thm:normalized_eigenvalue_bounds}
The eigenvalues $\{\widetilde{\lambda}_i\}_{i=1}^{n} $ of the normalized \modelshort\ Laplacian $\widetilde{\mathbf{L}}_{\text{n}}$ lie in the interval $[0, 2]$.
\end{theorem}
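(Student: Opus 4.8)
The plan is to sandwich the spectrum of $\widetilde{\mathbf{L}}_{\text{n}}$ from both sides using Rayleigh-quotient (energy) arguments, exactly mirroring the computation already carried out in the proof of Theorem \ref{thm:pos_sem}. The lower bound $\widetilde{\lambda}_i \geq 0$ is immediate: Theorem \ref{thm:pos_sem} states that $\widetilde{\mathbf{L}}_{\text{n}}$ is positive semidefinite, and since it is also symmetric (because $\widetilde{\mathbf{A}}$ is symmetric, as $\bar{w}_{xy} = \bar{w}_{yx}$ — the \texttt{ORC} $\widetilde{\kappa}(x,y)$ is edge-symmetric), all its eigenvalues are real and nonnegative.

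For the upper bound, I would show that $2\mathbf{I} - \widetilde{\mathbf{L}}_{\text{n}} = \mathbf{I} + \widetilde{\mathbf{A}}_{n} = \mathbf{I} + \widetilde{\mathbf{D}}^{-1/2}\widetilde{\mathbf{A}}\widetilde{\mathbf{D}}^{-1/2}$ is also positive semidefinite. Following the same substitution $\mathbf{f} = \widetilde{\mathbf{D}}^{-1/2}\mathbf{u}$ as in \eqref{sup:eq1}--\eqref{sup:eq2}, but now with a plus sign on the cross term, I get
\begin{align}
\mathbf{u}^T(2\mathbf{I} - \widetilde{\mathbf{L}}_{\text{n}})\mathbf{u} &= \mathbf{u}^T\mathbf{u} + \mathbf{u}^T\widetilde{\mathbf{D}}^{-1/2}\widetilde{\mathbf{A}}\widetilde{\mathbf{D}}^{-1/2}\mathbf{u} = \sum_{x=1}^{n}\widetilde{\mathbf{D}}_{xx}\mathbf{f}_x^2 + \sum_{x,y=1}^{n}\mathbf{f}_x\mathbf{f}_y\widetilde{\mathbf{A}}_{xy} \nonumber\\
&= \frac{1}{2}\sum_{x,y=1}^{n}\widetilde{\mathbf{A}}_{xy}(\mathbf{f}_x+\mathbf{f}_y)^2 = \frac{1}{2}\sum_{x,y=1}^{n} e^{\frac{-1}{1-\widetilde{\kappa}(x,y)}}\,\mathbf{A}_{xy}(\mathbf{f}_x+\mathbf{f}_y)^2 \geq 0,
\end{align}
where I have again used $\widetilde{\mathbf{D}}_{xx} = \sum_y \widetilde{\mathbf{A}}_{xy}$ to split the degree term symmetrically, and the final inequality holds because $e^{-1/(1-\widetilde{\kappa}(x,y))} > 0$ for all $\widetilde{\kappa}(x,y) < 1$ and $\mathbf{A}_{xy} \geq 0$ (the weighted adjacency entries are nonnegative). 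Hence $2\mathbf{I} - \widetilde{\mathbf{L}}_{\text{n}} \succeq 0$, i.e. every eigenvalue of $\widetilde{\mathbf{L}}_{\text{n}}$ is at most $2$.

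Combining the two bounds, $\widetilde{\lambda}_i \in [0,2]$ for all $i$, which is the claim. I do not anticipate a genuine obstacle here; the only points requiring a little care are (i) assuming $\widetilde{\mathbf{D}}$ is invertible so that $\widetilde{\mathbf{D}}^{-1/2}$ is well defined (equivalently, no isolated vertices — a standard hypothesis, since an isolated vertex can be handled by convention), and (ii) noting that the substitution $\mathbf{u} \mapsto \mathbf{f} = \widetilde{\mathbf{D}}^{-1/2}\mathbf{u}$ is a bijection on $\mathbb{R}^n$, so ranging over all unit $\mathbf{u}$ is equivalent to ranging over all $\mathbf{f}$, legitimizing the energy rewriting. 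An alternative, essentially equivalent route is to invoke Gershgorin's circle theorem on $\widetilde{\mathbf{A}}_n$ to bound its spectrum in $[-1,1]$ and then translate, but the energy argument is cleaner and reuses the machinery already in the paper.
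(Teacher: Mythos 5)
Your proposal is correct, and its core is the same quadratic-form machinery the paper uses: both arguments hinge on the substitution $\mathbf{f}=\widetilde{\mathbf{D}}^{-1/2}\mathbf{u}$ and on the inequality $\mathbf{f}^{\top}\widetilde{\mathbf{L}}\mathbf{f}\le 2\,\mathbf{f}^{\top}\widetilde{\mathbf{D}}\mathbf{f}$, i.e.\ $\widetilde{\mathbf{D}}+\widetilde{\mathbf{A}}\succeq 0$. The difference is in how that inequality is reached. The paper proves the upper bound via Courant--Fischer, maximizing the Rayleigh quotient $\mathbf{f}^{\top}\widetilde{\mathbf{L}}\mathbf{f}/\mathbf{f}^{\top}\widetilde{\mathbf{D}}\mathbf{f}$ and bounding the numerator through a chain of pointwise inequalities that detours through the unweighted $\mathbf{A}$ and $\mathbf{D}$ and invokes $e^{-1/(1-\widetilde{\kappa}(x,y))}<1$; as written, its intermediate step $\tfrac12\sum_{x,y}\widetilde{\mathbf{A}}_{xy}(\mathbf{f}_x-\mathbf{f}_y)^2\le\sum_{x,y}\widetilde{\mathbf{A}}_{xy}(\mathbf{f}_x+\mathbf{f}_y)^2$ is not valid pointwise (take $\mathbf{f}_x=-\mathbf{f}_y$), though the intended conclusion follows from the standard bound $(\mathbf{f}_x-\mathbf{f}_y)^2\le 2(\mathbf{f}_x^2+\mathbf{f}_y^2)$. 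Your route instead shows $2\mathbf{I}-\widetilde{\mathbf{L}}_{\mathrm{n}}=\mathbf{I}+\widetilde{\mathbf{A}}_{\mathrm{n}}\succeq 0$ by the exact completion-of-squares identity $\mathbf{u}^{\top}(\mathbf{I}+\widetilde{\mathbf{A}}_{\mathrm{n}})\mathbf{u}=\tfrac12\sum_{x,y}\widetilde{\mathbf{A}}_{xy}(\mathbf{f}_x+\mathbf{f}_y)^2\ge 0$, which is the plus-sign mirror of the identity used in Theorem \ref{thm:pos_sem}; this needs only $\widetilde{\mathbf{A}}_{xy}\ge 0$, avoids the loose intermediate inequality and the weight-dropping step entirely, and dispenses with the connectedness assumption (only non-isolated vertices are needed, as you note, so that $\widetilde{\mathbf{D}}^{-1/2}$ exists). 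The only content of the paper's proof you omit is the observation that $0$ is actually attained (eigenvector $\boldsymbol{\tau}\propto\widetilde{\mathbf{D}}^{1/2}\mathbf{1}$), which is not required for the statement of Theorem \ref{thm:normalized_eigenvalue_bounds}.
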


\begin{proof}
We begin by noting that Theorem \ref{thm:pos_sem} shows that the normalized \textbf{\modelshort\ Laplacian} $\widetilde{\mathbf{L}}_{\text{n}}$ has real, non-negative eigenvalues, meaning we need only to prove that the largest eigenvalue, denoted as $\lambda_n$, is less than or equal to 2. Before moving to that, we show that $0$ is indeed an eigenvalue of $\widetilde{\mathbf{L}}$ associated with the unit eigenvector $\boldsymbol{\tau}$ where $\boldsymbol{\tau} = \frac{\sqrt{\widetilde{\mathbf{D}}_{ii}}}{\sqrt{\sum_v\widetilde{\mathbf{D}}_{vv}}}$.

Let $\mathbf{1}$ be the all one vector. Then, a direct calculation reveals that
\begin{align}
    & \widetilde{\mathbf{L}}_{\text{sym}}\boldsymbol{\tau} = \boldsymbol{\tau} - \widetilde{\mathbf{D}}^{-1/2}\widetilde{\mathbf{A}}\widetilde{\mathbf{D}}^{-1/2}\boldsymbol{\tau} = \boldsymbol{\tau} - \widetilde{\mathbf{D}}^{-1/2}\widetilde{\mathbf{A}}\widetilde{\mathbf{D}}^{-1/2}\widetilde{\mathbf{D}}^{1/2}\mathbf{1}\times\frac{1}{\sqrt{\sum_v\widetilde{\mathbf{D}}_{vv}}}\\
    & = \boldsymbol{\tau} - \widetilde{\mathbf{D}}^{-1/2}\widetilde{\mathbf{A}}\mathbf{1}\times\frac{1}{\sqrt{\sum_v\widetilde{\mathbf{D}}_{vv}}} = \boldsymbol{\tau} - \widetilde{\mathbf{D}}^{-1/2}\widetilde{\mathbf{D}}\mathbf{1}\times\frac{1}{\sqrt{\sum_v\widetilde{\mathbf{D}}_{vv}}}\\
    & = \boldsymbol{\tau} - \widetilde{\mathbf{D}}^{1/2}\mathbf{1}\times\frac{1}{\sqrt{\sum_v\widetilde{\mathbf{D}}_{vv}}}= \boldsymbol{\tau} - \boldsymbol{\tau} = 0.
\end{align}
Combining this result with the positive semi-definite property of the Laplacian shows that $0$ is indeed the smallest eigenvalue of $\widetilde{\mathbf{L}}_{\text{sym}}$ associated with the eigenvector $\boldsymbol{\tau}$. For the second part, using the Courant-Fischer theorem, we know that the largest eigenvalue can be expressed as:
\[
    \lambda_n = \max_{\mathbf{u} \neq 0} \frac{\mathbf{u}^\top \widetilde{\mathbf{L}}_{\text{n}} \mathbf{u}}{\mathbf{u}^\top \mathbf{u}}.
\]
Substituting the definition of the normalized \modelshort\ Laplacian $\widetilde{\mathbf{L}}_{\text{n}} = \mathbf{I} - \widetilde{\mathbf{A}}_{\text{n}}$ into this expression, and letting $\mathbf{f} = \widetilde{\mathbf{D}}^{-1/2} \mathbf{u}$, we have:
\[
    \lambda_n = \max_{\mathbf{u} \neq 0} \frac{\mathbf{u}^\top \widetilde{\mathbf{D}}^{-1/2} \widetilde{\mathbf{L}} \widetilde{\mathbf{D}}^{-1/2} \mathbf{u}}{\mathbf{u}^\top \mathbf{u}} = \max_{\mathbf{f} \neq 0} \frac{\mathbf{f}^\top \widetilde{\mathbf{L}} \mathbf{f}}{\mathbf{f}^\top \widetilde{\mathbf{D}} \mathbf{f}}.
\]
The degree matrix, can be expressed in the quadratic form as $\mathbf{f}^\top \widetilde{\mathbf{D}} \mathbf{f} = \sum_{x=1}^{n} \widetilde{\mathbf{D}}_{xx} |\mathbf{f}_x|^2$.

For the numerator involving $\widetilde{\mathbf{L}}$, we expand the quadratic form:
\begin{align}
     \mathbf{f}^\top \widetilde{\mathbf{L}} \mathbf{f} &= \frac{1}{2}\sum_{x,y=1}^{n}\widetilde{\mathbf{A}}_{xy}(\mathbf{f}_x-\mathbf{f}_y)^2 = \frac{1}{2}\sum_{x,y=1}^{n} e^{\frac{-1}{1-\widetilde{\kappa}(x, y)}}{\mathbf{A}}_{xy}(\mathbf{f}_x-\mathbf{f}_y)^2\\
    &\leq \sum_{x,y=1}^{n} e^{\frac{-1}{1-\widetilde{\kappa}(x, y)}}{\mathbf{A}}_{xy}(\mathbf{f}_x + \mathbf{f}_y)^2 \leq 2 \sum_{x=1}^{n} |\mathbf{f}_x|^2 \left( \sum_{y=1}^{n} \mathbf{A}_{xy} \right) = 2 \sum_{x=1}^{n} \mathbf{D}_{xx} |\mathbf{f}(x)|^2.
\end{align}

The last inequality follows from the fact that $e^{\frac{-1}{1-\widetilde{\kappa}(x, y)}}\rightarrow \frac{1}{\sqrt{e}}$ as $\widetilde{\kappa}(x, y) \rightarrow -1$ implies that it is always $< 1$. Thus, we can conclude that, $\mathbf{f}^\top \widetilde{\mathbf{L}} \mathbf{f} \leq 2 \mathbf{f}^\top \widetilde{\mathbf{D}} \mathbf{f}$, and the Rayleigh quotient is bounded $\frac{\mathbf{f}^\top \widetilde{\mathbf{L}} \mathbf{f}}{\mathbf{f}^\top \widetilde{\mathbf{D}} \mathbf{f}} \leq 2$. This shows that the largest eigenvalue of the normalized \modelshort\ Laplacian \(\widetilde{\mathbf{L}}_{\text{n}}\) is bounded by 2, completing the proof that the eigenvalues of \(\widetilde{\mathbf{L}}_{\text{n}}\) are contained within the interval \([0, 2]\).
\end{proof}
    
\subsection{Generalised Pageranks and GPRGNN}\label{app:GPRGNN}

\begin{figure}[ht]
    \centering
    \includegraphics[width=0.8\linewidth]{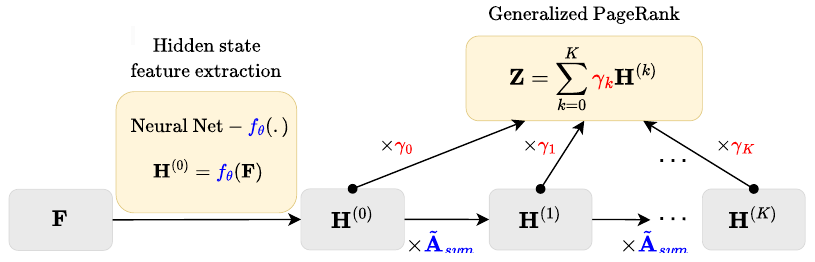}
    \caption{Architecture of GPRGNN.
    }
    \label{fig:enter-label}
\end{figure}

$\blacksquare$ \textbf{Equivalence of the GPR method and polynomial graph filtering. } If we truncate the infinite series in the GPR definition at some integer $K$, $\sum_{k=0}^{K}\gamma_k\widetilde{\mathbf{A}}_{\text{n}}^k$ becomes a polynomial graph filter of degree $K$. Consequently, optimizing the GPR weights is tantamount to optimizing the polynomial graph filter. It is important to note that any graph filter can be approximated using a polynomial graph filter, enabling the GPR method to handle a wide variety of node label patterns. Additionally, increasing $K$ enhances the approximation of the optimal graph filter. This again illustrates the advantage of large-step propagation.

$\blacksquare$ \textbf{GPRGNN architecture.} GPR-GNN initially derives hidden state features for each node and subsequently employs GPR to disseminate them. The GPR-GNN procedure can be represented as:
\begin{align}
    & \hat{\mathbf{P}}=\text{softmax}(\mathbf{Z}),\; \mathbf{Z} = \sum_{k=0}^{K}\gamma_k\mathbf{H}^{(k)},\;
    \mathbf{H}^{(k)} = \widetilde{\mathbf{A}}_{\text{sym}}\mathbf{H}^{(k-1)},\;\mathbf{H}^{(0)}_{i:} = f_{\theta}(\mathbf{X}_{i:}),
\end{align}
Here, $f_{\theta}(.)$ denotes a neural network parametrized by $\{{\theta\}}$, which produces the hidden state features $\mathbf{H}^{(0)}$. The GPR weights $\gamma_k$ are optimized alongside $\{{\theta\}}$ in an end-to-end manner. The GPR-GNN model is straightforward to interpret: As previously mentioned, GPR-GNN is capable of adaptively managing the contribution of each propagation step to fit the node label pattern. Analyzing the trained GPR weights also aids in understanding the topological properties of a graph, such as identifying the optimal polynomial graph filter.

\subsubsection{Proof of Theorem~\ref{thm:LPF}}
We first state the formal version of Theorem~\ref{thm:LPF}
\begin{theorem}[Formal version of Theorem~\ref{thm:LPF}]\label{thm:formal_LPF}
 Assume the graph $G$ is connected. Let $\lambda_1\geq \lambda_2\geq ...\geq \lambda_n$ and $\widetilde{\lambda}_1\leq \widetilde{\lambda}_2\leq ...\leq \widetilde{\lambda}_n$ be the eigenvalues of $\mathbf{\widetilde{A}}_{n}$ and $\mathbf{\widetilde{L}}_{n}$, respectively.
 If $\gamma_l\geq 0\;\forall l\in \{0,1,...,L\}$, $\sum_{l=0}^{L}\gamma_l=1$ and $\exists l'>0$ such that $\gamma_{l'} > 0$, then $\left|\frac{g_{\gamma,L}(\lambda_i)}{g_{\gamma,L}(\lambda_1)}\right| < 1\;\forall i\geq2$. Also, if $\gamma_l = (-\alpha)^l,\alpha\in(0,1)$ and $L \rightarrow \infty$, then $\left|\frac{\lim_{L\rightarrow\infty}g_{\gamma,L}(\lambda_i)}{\lim_{L\rightarrow\infty}g_{\gamma,L}(\lambda_1)}\right| > 1\;\forall i\geq2$.
\end{theorem}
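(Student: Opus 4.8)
\textbf{Proof plan for Theorem~\ref{thm:formal_LPF}.}

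The plan is to analyze the filter function $g_{\gamma,L}(\lambda) = \sum_{l=0}^{L}\gamma_l \lambda^l$ on the spectrum of $\widetilde{\mathbf{A}}_n$, using the relation $\widetilde{\lambda}_i = 1 - \lambda_i$ between the eigenvalues of $\widetilde{\mathbf{L}}_n$ and $\widetilde{\mathbf{A}}_n$ together with the spectral bounds already established: by Theorem~\ref{thm:normalized_eigenvalue_bounds} the eigenvalues $\widetilde{\lambda}_i$ lie in $[0,2]$, so the eigenvalues $\lambda_i$ of $\widetilde{\mathbf{A}}_n$ lie in $[-1,1]$. First I would record the key structural facts: since $G$ is connected and $\widetilde{\mathbf{A}}_n$ is a normalized nonnegative symmetric matrix, Perron--Frobenius gives $\lambda_1 = 1$ as a simple eigenvalue, with $\lambda_1 > \lambda_i$ for all $i \geq 2$, and $\lambda_i \in [-1,1)$ for $i \geq 2$ (with $\lambda_n = -1$ possible only in the bipartite case, which does not affect the strict inequalities below). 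The proof then splits into the two claimed regimes.

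For the low-pass claim: assuming $\gamma_l \geq 0$, $\sum_l \gamma_l = 1$, and some $\gamma_{l'} > 0$ with $l' > 0$, I would evaluate $g_{\gamma,L}(\lambda_1) = g_{\gamma,L}(1) = \sum_l \gamma_l = 1$. For $i \geq 2$, since $\lambda_i \in [-1,1)$, I would bound $|g_{\gamma,L}(\lambda_i)| \leq \sum_l \gamma_l |\lambda_i|^l \leq \sum_l \gamma_l = 1$, and then argue the inequality is strict: because $\gamma_{l'} > 0$ for some $l' \geq 1$ and $|\lambda_i| < 1$ (when $\lambda_i \in (-1,1)$) we get $\gamma_{l'}|\lambda_i|^{l'} < \gamma_{l'}$, forcing $|g_{\gamma,L}(\lambda_i)| < 1 = |g_{\gamma,L}(\lambda_1)|$; the edge case $\lambda_i = -1$ needs the mild observation that the triangle inequality is still strict unless all contributing powers align in sign, which cannot happen once both even and relevant odd terms appear or, more cleanly, by noting $\lambda_i=-1$ forces $g_{\gamma,L}(-1)=\sum_l(-1)^l\gamma_l$, whose absolute value is $<1$ as soon as any odd-index $\gamma_l>0$ — and if only even-index weights are nonzero one reduces to the $\lambda_i^2<1$ argument. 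Hence $\left|g_{\gamma,L}(\lambda_i)/g_{\gamma,L}(\lambda_1)\right| < 1$ for all $i \geq 2$, which is exactly the statement that $g_{\gamma,L}$ attenuates all non-dominant (higher-frequency) modes relative to the $\lambda=1$ (lowest-frequency) mode.

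For the high-pass claim: with $\gamma_l = (-\alpha)^l$, $\alpha \in (0,1)$, the truncated sum is a geometric series, $g_{\gamma,L}(\lambda) = \sum_{l=0}^{L}(-\alpha\lambda)^l = \frac{1-(-\alpha\lambda)^{L+1}}{1+\alpha\lambda}$, which as $L \to \infty$ converges (since $|\alpha\lambda| \leq \alpha < 1$) to $\frac{1}{1+\alpha\lambda}$. Then $\lim_L g_{\gamma,L}(\lambda_1) = \frac{1}{1+\alpha}$, and for $i \geq 2$, $\lim_L g_{\gamma,L}(\lambda_i) = \frac{1}{1+\alpha\lambda_i}$. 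I would show $\left|\frac{1+\alpha}{1+\alpha\lambda_i}\right| > 1$ for $\lambda_i \in [-1,1)$: this is equivalent to $|1+\alpha| > |1+\alpha\lambda_i|$, i.e. $1+\alpha > |1+\alpha\lambda_i|$, which holds because $1+\alpha\lambda_i \leq 1 + \alpha|\lambda_i| < 1+\alpha$ when $\lambda_i < 1$ (and $1+\alpha\lambda_i \geq 1-\alpha > 0$ keeps the quantity positive so the absolute value is harmless). Thus the low-frequency mode $\lambda_1=1$ (i.e. $\widetilde\lambda=0$) is \emph{suppressed} relative to every higher-frequency mode, establishing the high-pass behavior.

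The main obstacle I anticipate is not the algebra but pinning down the precise sense in which "low-pass" and "high-pass" are being claimed — the informal Theorem~\ref{thm:LPF} uses qualitative language, so the real work is in correctly translating it to the ratio statements of Theorem~\ref{thm:formal_LPF} and in handling the boundary eigenvalue $\lambda_i = -1$ (bipartite-type components) so that the strict inequalities genuinely hold; everything else follows from Perron--Frobenius, the $[0,2]$ spectral bound from Theorem~\ref{thm:normalized_eigenvalue_bounds}, and summing a geometric series.
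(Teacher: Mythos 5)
Your proposal takes essentially the same route as the paper's proof: pin the spectrum of $\widetilde{\mathbf{A}}_n$ to $[-1,1]$ via the $[0,2]$ bound on $\widetilde{\mathbf{L}}_n$, note $g_{\gamma,L}(\lambda_1)=g_{\gamma,L}(1)=\sum_l\gamma_l=1$, get $|g_{\gamma,L}(\lambda_i)|<1$ from the triangle inequality with strictness supplied by $\gamma_{l'}>0$ and $|\lambda_i|^{l'}<1$, and for the high-pass regime sum the geometric series to $\lim_{L\to\infty}g_{\gamma,L}(\lambda)=\frac{1}{1+\alpha\lambda}$ and check $\bigl|\frac{1+\alpha}{1+\alpha\lambda_i}\bigr|>1$. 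The only place you depart from the paper is your attempted handling of the boundary eigenvalue $\lambda_i=-1$, and that patch is wrong: if only even-index weights are nonzero (e.g. $\gamma_0=\gamma_2=\tfrac12$), then $g_{\gamma,L}(-1)=\sum_l\gamma_l=1$, so the strict inequality genuinely fails, and the fallback you suggest ("reduce to the $\lambda_i^2<1$ argument") is unavailable because $\lambda_i^2=1$ there. The paper sidesteps this by simply asserting $|\lambda_i|<1$ for all $i\geq2$, i.e. it implicitly excludes connected bipartite graphs (for which $-1$ is an eigenvalue of the normalized adjacency even with the positive curvature weights); so the low-pass claim at $\lambda_i=-1$ needs that exclusion (or an additional hypothesis such as some odd-index $\gamma_l>0$), not the argument you sketch. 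Everything else in your plan, including the Perron--Frobenius justification of $\lambda_1=1$ being simple and the positivity $1+\alpha\lambda_i\geq1-\alpha>0$ in the high-pass step, matches the paper's proof.
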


\begin{enumerate}
    \item Note that $\left|\frac{g_{\gamma,L}(\lambda_i)}{g_{\gamma,L}(\lambda_1)}\right| < 1\;\forall i\geq2$ implies the \textbf{low-pass case} as after applying the graph filter $g_{\gamma,L}$, the lowest frequency component (correspond to $\lambda_1$) further dominates. 
    \item \textbf{Unfiltered case}. Recall that in the unfiltered case, we do not multiply with $\mathbf{\widetilde{A}}_{\text{n}}$. It can also be viewed as multiplying the identity matrix $I$, where the eigenvalue ratio is $\frac{|\lambda_i|^0}{|\lambda_1|^0}=1$. Hence $g_{\gamma,L}$ acts like a low pass filter in this case.
    \item In contrast, $\left|\frac{\lim_{L\rightarrow\infty}g_{\gamma,L}(\lambda_i)}{\lim_{L\rightarrow\infty}g_{\gamma,L}(\lambda_1)}\right| > 1\;\forall i\geq2$ implies that after applying the graph filter, the lowest frequency component (correspond to $\lambda_1$) no longer dominates. This corresponds to the \textbf{high pass filter} case.
\end{enumerate}

\begin{proof}[\textbf{Proof}]
 We start with the\textbf{ low pass filter result}. From Theorem \ref{thm:normalized_eigenvalue_bounds}, we know that $0 \leq \widetilde{\lambda}_1\leq \widetilde{\lambda}_2\leq ...\leq \widetilde{\lambda}_n \leq 2$. Given the spectrum of $\widetilde{\mathbf{A}}_n$, we know that $-\widetilde{\mathbf{A}}_n$ has
spectrum negatives of $\widetilde{\mathbf{A}}_n$, and $\mathbf{I} -\widetilde{\mathbf{A}}_n$ adds one to each eigenvalue of $-\widetilde{\mathbf{A}}_n$ . Hence, $1 \geq \lambda_1\geq \lambda_2\geq ...\geq \lambda_n \geq -1$ follows directly. Now, we know that $\lambda_1 = 1$ and $|\lambda_i|<1,\forall i\geq 2$. Further, we have assumed that $g_{\gamma,L}(\lambda_1) = \sum_{l=0}^{L} \gamma_l  = 1$. Hence, proving Theorem~\ref{thm:formal_LPF} is equivalent to show
\begin{align*}
    |g_{\gamma,L}(\lambda_i)| < 1\;\forall i\geq2.
\end{align*}
This is obvious since $g_{\gamma,L}(\lambda) = \sum_{l=0}^{L}\gamma_l\lambda^l$ is a polynomial of order $L$ with nonnegative coefficients. It is easy to check that $\forall l\geq 1,\;|\lambda|^l < 1,\forall |\lambda|<1$. Combine with the fact that all $\gamma_k$'s are nonnegative we have
\begin{align*}
    |g_{\gamma,L}(\lambda_i)|\leq \sum_{l=0}^L \gamma_l |\lambda^l| = \sum_{l=0}^L \gamma_l |\lambda|^l \stackrel{\text{(a)}}{\leq} \sum_{l=0}^L \gamma_l = 1.
\end{align*}
Finally, note that the only possibility that the equality holds is $\gamma_l = \delta_{0,l}$ since $\forall l\geq 1,\;|\lambda|^l < 1,\forall |\lambda|<1$. However, by assumption $\sum_{l=0}^{L}\gamma_l=1$ and $\exists l'>0$ such that $\gamma_{l'} > 0$ we know that this is impossible. Hence (a) is a strict inequality $<$. 

For the \textbf{high pass filter result}, it can be observed that:
\begin{align*}
    \lim_{L\rightarrow\infty}g_{\gamma,L}(\lambda) = \lim_{L\rightarrow\infty}\sum_{l=0}^{L}\gamma_l \lambda^l = \lim_{L\rightarrow\infty}\sum_{l=0}^{L} (-\alpha\lambda)^l = \frac{1}{1+\alpha\lambda},
\end{align*}
where the last step is due to the fact that $\alpha\in(0,1)$ and thus $\lim_{L\rightarrow\infty}(-\alpha\lambda)^L = 0,\forall |\lambda|\leq 1$. Thus we have
\begin{align*}
    \left|\frac{\lim_{L\rightarrow\infty}g_{\gamma,L}(\lambda_i)}{\lim_{L\rightarrow\infty}g_{\gamma,L}(\lambda_1)}\right| = \left|\frac{1+\alpha}{1+\alpha\lambda_i}\right| \stackrel{\text{(b)}}{>} 1 \;\forall i\geq2.
\end{align*}
The strict inequalities (b) is from the fact that $|\lambda_i|<1,\forall i\geq 2$. Notably, $\sup_{\lambda\in [1,-1)}\frac{1}{1+\alpha\lambda}$ happens at the boundary $\lambda = -1$, which corresponds the the bipartite graph. It further shows that the graph filter with respect to the choice $\gamma_l = (-\alpha)^l$ emphasizes high frequency components and thus it is indeed acting as a high pass filter.
\end{proof}

\subsubsection{Why GPRGNN as the backbone of \modelname?}

In this section, we elaborate on why GPR is an ideal backbone when compared to other Spectral GNNs.

\begin{enumerate}
    \item \textbf{Adaptive Filter Design}. GPR learns the filter coefficients directly, allowing the spectral response to adapt to the task and dataset. This flexibility is critical for modeling both homophilic and heterophilic graphs.
    \item \textbf{Universality}. Unlike fixed low-pass filters like ChebNet, which excel primarily in homophilic settings, GPR’s learnable filters enable it to balance low-pass and high-pass components, making it suitable for both homophilic and heterophilic graphs. This is one of the main goals of our paper - to achieve superior performance on homophilic and heterophilic tasks. Fixed polynomial filters in ChebNet and Bernstein-based methods approximate spectral responses up to a fixed order, limiting their ability to model complex spectral properties. 
    \item \textbf{GPRGNN escapes oversmoothing}. GPR weights are adaptively learnable, which allows GPR-GNN to avoid over-smoothing and trade node and topology feature informativeness. See Section 4 of \cite{chien2020adaptive} for more theoretical analysis on the same and proofs, which is beyond the scope of this work. GPR not only mitigates feature over-smoothing, but also works on highly diverse node label patterns (See Section 4 and 5 of \cite{chien2020adaptive}).
    \item \textbf{Capturing node features and graph topology}. In many important graph data processing applications, the acquired information includes both node features and observations of the graph topology. GPRGNN jointly optimizes node feature and topological information extraction, regardless of the extent to which the node labels are homophilic or heterophilic.
    \item \textbf{Filter Bank Construction}. Using GPR based spectral filters, helps us to effectively construct a filter bank where each adaptive filter contributes to a specific spectral profile, enabling the model to aggregate information across different spectral bands. This approach captures diverse patterns in node features and topology, unlike ChebNet or Bernstein-based methods, which rely on fixed polynomial approximations and lack such flexibility.
\end{enumerate}

\subsection{Theorems for Curvature Encoding} \label{app:functional}
\begin{theorem}[Bochner's Theorem]
\citep{moeller2016continuous} A continuous, translation-invariant kernel $\mathcal{K}(\mathbf{x},\mathbf{y})=\Psi(\mathbf{x}-\mathbf{y})$ on $\mathbb{R}^d$ is positive definite if and only if there exists a non-negative measure on $\mathbb{R}$ such that $\Psi$ is the Fourier transform of the measure.
\end{theorem}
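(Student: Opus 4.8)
The plan is to prove the two implications of the stated Bochner theorem separately; only the forward direction is substantial, and it rests on a compactness argument.

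\textbf{Easy direction ($\Leftarrow$).} Suppose $\Psi(\mathbf t)=\int e^{i\langle\boldsymbol\omega,\mathbf t\rangle}\,d\mu(\boldsymbol\omega)$ for a finite non-negative Borel measure $\mu$. Continuity of $\Psi$ is immediate from dominated convergence (the integrand is bounded by $1$ and $\mu$ is finite). For positive definiteness, take arbitrary points $\mathbf x_1,\dots,\mathbf x_n$ and scalars $c_1,\dots,c_n$ and compute
\[
\sum_{j,k} c_j\overline{c_k}\,\Psi(\mathbf x_j-\mathbf x_k)=\int\Big|\sum_{j} c_j e^{i\langle\boldsymbol\omega,\mathbf x_j\rangle}\Big|^2 d\mu(\boldsymbol\omega)\ \ge\ 0,
\]
so $\mathcal K(\mathbf x,\mathbf y)=\Psi(\mathbf x-\mathbf y)$ is positive definite.

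\textbf{Hard direction ($\Rightarrow$).} Assume $\mathcal K(\mathbf x,\mathbf y)=\Psi(\mathbf x-\mathbf y)$ is continuous and positive definite; after rescaling we may take $\Psi(\mathbf 0)=1$ (if $\Psi(\mathbf 0)=0$, positive definiteness forces $\Psi\equiv 0$, which is trivial). First I would record the elementary consequences of positive definiteness, $\Psi(-\mathbf t)=\overline{\Psi(\mathbf t)}$ and $|\Psi(\mathbf t)|\le\Psi(\mathbf 0)$. The core idea is to exhibit $\Psi$ as a limit of Fourier transforms of \emph{explicit} non-negative densities. For $T>0$ define the windowed inverse transform
\[
p_T(\boldsymbol\omega)=\frac{1}{(2\pi)^d T^d}\int_{[0,T]^d}\!\!\int_{[0,T]^d}\Psi(\mathbf s-\mathbf u)\,e^{-i\langle\boldsymbol\omega,\,\mathbf s-\mathbf u\rangle}\,d\mathbf s\,d\mathbf u .
\]
Approximating the double integral by Riemann sums and applying positive definiteness with the sampled values of $\mathbf s\mapsto e^{-i\langle\boldsymbol\omega,\mathbf s\rangle}$ as the coefficients $c_j$ (using continuity of $\Psi$ to pass to the limit) shows $p_T(\boldsymbol\omega)\ge 0$ for all $\boldsymbol\omega$. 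A change of variables $\mathbf t=\mathbf s-\mathbf u$ turns the double integral into a single integral of $\Psi(\mathbf t)e^{-i\langle\boldsymbol\omega,\mathbf t\rangle}$ against a product of one-dimensional triangular (Fej\'er) weights $\prod_i(1-|t_i|/T)_+$; from this one reads off that $\int_{\mathbb R^d} p_T\,d\boldsymbol\omega\le\Psi(\mathbf 0)=1$ (uniformly bounded mass), and, by Fourier inversion together with continuity of $\Psi$, that $\int_{\mathbb R^d} p_T(\boldsymbol\omega)\,e^{i\langle\boldsymbol\omega,\mathbf t\rangle}\,d\boldsymbol\omega=\Psi(\mathbf t)\prod_i(1-|t_i|/T)_+\to\Psi(\mathbf t)$ as $T\to\infty$.

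\textbf{Compactness step and conclusion.} The measures $d\mu_T:=p_T\,d\boldsymbol\omega$ are non-negative with uniformly bounded mass; moreover continuity of $\Psi$ \emph{at the origin} controls the tails uniformly via the standard estimate $\tfrac1{(2h)^d}\int_{[-h,h]^d}\big(\Psi(\mathbf 0)-\operatorname{Re}\Psi(\mathbf t)\big)\,d\mathbf t\ \ge\ c\,\mu_T(\{|\boldsymbol\omega|>R\})$ for a suitable $h=h(R)\to 0$, so $\{\mu_T\}$ is tight. By Prokhorov's theorem (equivalently Helly's selection theorem / Banach--Alaoglu) some subsequence $\mu_{T_n}$ converges weakly to a finite non-negative measure $\mu$. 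Since $\boldsymbol\omega\mapsto e^{i\langle\boldsymbol\omega,\mathbf t\rangle}$ is bounded and continuous, passing to the limit in $\int e^{i\langle\boldsymbol\omega,\mathbf t\rangle}\,d\mu_{T_n}(\boldsymbol\omega)\to\Psi(\mathbf t)$ yields $\Psi(\mathbf t)=\int e^{i\langle\boldsymbol\omega,\mathbf t\rangle}\,d\mu(\boldsymbol\omega)$, i.e.\ $\Psi$ is the Fourier transform of a finite non-negative measure, which completes the proof.

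\textbf{Main obstacle.} The delicate part is the compactness step: one must ensure that no mass of the $\mu_T$ escapes to infinity, so that the weak limit $\mu$ is itself a genuine finite (and still non-negative) measure — this is exactly where continuity of $\Psi$ at $\mathbf 0$ is essential, through the tightness estimate above. Checking the two Fej\'er-kernel identities ($\int p_T\le\Psi(\mathbf 0)$ and $\widehat{\mu_T}(\mathbf t)\to\Psi(\mathbf t)$) is a routine but slightly fiddly computation with the triangular weight, and the easy direction plus the final limit-passing are pure bookkeeping.
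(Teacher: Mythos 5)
The paper never proves this statement: Bochner's theorem is quoted with a citation and used as a black box in Appendix \ref{app:functional} (in the derivations of Definition \ref{thm:curve_1} and Theorem \ref{thm:curve_2}), so there is no in-paper argument to compare yours against; what you have written is the standard textbook proof, and as a sketch it is sound. The easy direction is complete as stated. For the hard direction your key steps are the right ones: $p_T\ge 0$ follows from positive definiteness applied to Riemann sums of $\mathbf s\mapsto e^{-i\langle\boldsymbol\omega,\mathbf s\rangle}$, the change of variables produces the triangular (Fej\'er) weights, and the uniform mass bound plus tightness plus Prokhorov/Helly gives the limiting measure. Two points deserve care. First, you cannot invoke Fourier inversion for $p_T$ before knowing $p_T\in L^1$; the standard fix is exactly the non-negativity of $p_T$ combined with a second Fej\'er smoothing in $\boldsymbol\omega$ (integrate $p_T$ against $\prod_i(1-|\omega_i|/(2B))_+$, compute the $t$-side limit as $B\to\infty$, and use monotone convergence), which simultaneously yields $\int p_T\le\Psi(\mathbf 0)$ and the identity $\widehat{\mu_T}(\mathbf t)=\Psi(\mathbf t)\prod_i(1-|t_i|/T)_+$; you flag this as ``routine but fiddly,'' which is a fair description, but it is the one place where the order of steps matters. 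Second, the truncation/tightness inequality should be applied to the characteristic function of $\mu_T$, namely $\Psi(\mathbf t)\prod_i(1-|t_i|/T)_+$, rather than to $\Psi$ itself; the discrepancy is $O(h/T)$ on the cube $[-h,h]^d$ and therefore harmless, and in fact once you have $\widehat{\mu_T}\to\Psi$ pointwise with $\Psi$ continuous at $\mathbf 0$, L\'evy's continuity theorem packages the tightness-and-subsequence step for you. Minor remark: the paper's statement says the measure lives on $\mathbb{R}$ even though the kernel is on $\mathbb{R}^d$; your proof correctly produces a finite non-negative measure on $\mathbb{R}^d$, which is what the application in the appendix actually uses (with $d=1$ for the curvature domain).
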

\subsubsection{Proof of Definition \ref{thm:curve_1}}
\begin{proof}[\textbf{Proof}]
    
Using the Bochner's theorem stated above, our curvature kernel $\mathcal{K}_{\mathbb{R}}$ has the expression:
\begin{equation}
\label{eqn:bochner}
    \mathcal{K}_{\mathbb{R}}(\widetilde{\kappa}_a, \widetilde{\kappa}_b) = \Psi_{\mathbb{R}}(\widetilde{\kappa}_a, \widetilde{\kappa}_a) = \int_{\mathbb{R}}e^{i\omega(\widetilde{\kappa}_a - \widetilde{\kappa}_b)}p(\omega)d\omega = \E_{\omega}[\xi_{\omega}(\widetilde{\kappa}_a) \xi_{\omega}(\widetilde{\kappa}_b)^*],
\end{equation}
where $\xi_{\omega}(\widetilde{\kappa}) = e^{i\omega \widetilde{\kappa}}$. Since kernel $\mathcal{K}_{\mathbb{R}}$ and measure $p(\omega)$ are real, we extract the real part of (\ref{eqn:bochner}):
\begin{equation}
\label{eqn:bochner-mapping}
    \mathcal{K}_{\mathbb{R}}(\widetilde{\kappa}_a,\widetilde{\kappa}_b) = \E_{\omega}\big[\cos(\omega (\widetilde{\kappa}_a - \widetilde{\kappa}_b))\big]=\E_{\omega}\big[\cos(\omega \widetilde{\kappa}_a)\cos(\omega \widetilde{\kappa}_b) + \sin(\omega \widetilde{\kappa}_a)\sin(\omega \widetilde{\kappa}_b) \big].
\end{equation}

The above formulation suggests approximating the expectation by the Monte Carlo integral, i.e. $\mathcal{K}_{\mathbb{R}}(\widetilde{\kappa}_a,\widetilde{\kappa}_b) \approx \frac{1}{d_{\mathcal{C}}}\sum_{i=1}^{d_{\mathcal{C}}}\cos(\omega_i \widetilde{\kappa}_a)\cos(\omega_i \widetilde{\kappa}_b) + \sin(\omega_i \widetilde{\kappa}_a)\sin(\omega_i \widetilde{\kappa}_b)$, with $\omega_1,\ldots,\omega_{d_{\mathcal{C}}} \stackrel{\text{i.i.d}}{\sim} p(\omega)$.
% \[
% \mathcal{K}(\kappa_a,\kappa_b) \approx \frac{1}{d}\sum_{i=1}^d\cos(\omega_i \kappa_a)\cos(\omega_i \kappa_b) + \sin(\omega_i \kappa_a)\sin(\omega_i \kappa_b), \, \text{with } \; \omega_1,\ldots,\omega_d \stackrel{\text{i.i.d}}{\sim} p(\omega).
% \]
Therefore, we propose the finite dimensional functional mapping to $\mathbb{R}^{d_{\mathcal{C}}}$ as: 
\begin{equation}
\Phi_{\mathbb{R}^{d_{\mathcal{C}}}}(\widetilde{\kappa}) = \sqrt{\frac{1}{d_{\mathcal{C}}}} \Big[\cos(\omega_1 \widetilde{\kappa}), \sin(\omega_1 \widetilde{\kappa}), \dots, \cos(\omega_{d_{\mathcal{C}}} \widetilde{\kappa}), \sin(\omega_{d_{\mathcal{C}}} \widetilde{\kappa}) \Big]
\end{equation} 
The unknown probability distribution $p(\omega)$ is estimated using the inverse cumulative distribution function (CDF) transformation as in \cite{xu2020inductive}. Since our GNN is operating in the mixed-curvature space, we must map our defined curvature kernel based representations to the product manifold. We do so using the exponential map, for a node $x$ with \texttt{ORC} curvature $\widetilde{\kappa}(x)$ as: 
\begin{align}
    \Phi_{\mathbb{P}^{d_{\mathcal{C}}}}(\widetilde{\kappa}(x)) &= g_{\theta}\Big(\|_{q=1}^{\mathcal{Q}}\mathbf{exp}^{\kappa_{(q)}}_{\mathbf{0}}(\Phi_{\mathbb{R}^{d_{\mathcal{C}}}}(\widetilde{\kappa}(x)))\Big)\\
    &= g_{\theta}\Big(\|_{q=1}^{\mathcal{Q}} \Phi_{\mathcal{M}_{q}^{\kappa_{(q)}, d_{(q)}}}(\widetilde{\kappa}(x))\Big)
\end{align}
where $\mathbf{exp}^{\kappa_{(q)}}_{\mathbf{0}}:\mathbb{R}^{d_{\mathcal{C}}}\rightarrow\mathcal{M}_{q}^{\kappa_{(q)}, d_{(q)}}$ denotes the exponential map on the $q^{th}$ component manifold with curvature $\kappa_{(q)}$, $||$ is the concatenation operator and $g_{\theta}: \mathbb{P}^{d_{f}}\rightarrow \mathbb{P}^{d_{\mathcal{C}}}$ is a Riemannian projector. We need $g_{\theta}$ because we maintain a single product manifold for \modelname\, with total dimension $d_f$. So, upon taking the exponential map with respect to this product manifold, we are required to project the curvature embeddings to the required dimension $d_\mathcal{C}$.
\end{proof}

\subsubsection{Proof of Theorem \ref{thm:curve_2}}
\begin{proof}[\textbf{Proof}]
We begin by recalling that in Euclidean space, the curvature kernel $\mathcal{K}_{\mathbb{R}}$ is:
\[
\mathcal{K}_{\mathbb{R}}(\widetilde{\kappa}_a, \widetilde{\kappa}_b) = \big\langle \Phi_{\mathbb{R}^{d_{\mathcal{C}}}}(\widetilde{\kappa}_a), \Phi_{\mathbb{R}^{d_{\mathcal{C}}}}(\widetilde{\kappa}_b) \big\rangle = \Psi_{\mathbb{R}}(\widetilde{\kappa}_a - \widetilde{\kappa}_b).
\]
The key property here is translation invariance:
\[
\mathcal{K}_{\mathbb{R}}(\widetilde{\kappa}_a + \widetilde{c}, \widetilde{\kappa}_b + \widetilde{c}) = \mathcal{K}_{\mathbb{R}}(\widetilde{\kappa}_a, \widetilde{\kappa}_b) = \Psi_{\mathbb{R}}(\widetilde{\kappa}_a - \widetilde{\kappa}_b).
\]
Next, we move to the product manifold $\mathbb{P}^{d_{\mathcal{C}}}$, which consists of multiple components of different curvatures, such as hyperbolic, spherical, and Euclidean spaces. 

For each component manifold $\mathcal{M}_q^{\kappa_{(q)}, d_{(q)}}$ with curvature $\kappa_{(q)}$, the stereographic inner product $\langle .,. \rangle_{\mathbf{x}}^{\kappa}:\mathcal{T}_x\mathcal{M}^{n}_{\kappa} \times \mathcal{T}_x\mathcal{M}^{n}_{\kappa} \rightarrow \mathbb{R}$, is defined on the tangent plane of the Riemannian manifold as:
\[
\langle \mathbf{u}, \mathbf{v} \rangle^{\kappa}_{\mathbf{x}} = \mathbf{u}^T \mathbf{g}^{\kappa}_{\mathbf{x}} \mathbf{v} = \left( \lambda^{\kappa}_{\mathbf{x}} \right)^2 \langle \mathbf{u}, \mathbf{v} \rangle,
\]
where the conformal factor $\lambda^\kappa_{\mathbf{x}}$ is defined as:
\[
\lambda^\kappa_{\mathbf{x}} = \frac{2}{1 + \kappa \|\mathbf{x}\|_2^2}.
\]
This conformal factor modulates the stereographic projection in the curved space, and it ensures that distances are mapped correctly in the manifold space. Now, consider the inner product between the curvature embeddings $\Phi_{\mathbb{P}^{d_{\mathcal{C}}}}(\widetilde{\kappa}_a)$ and $\Phi_{\mathbb{P}^{d_{\mathcal{C}}}}(\widetilde{\kappa}_b)$ in the mixed-curvature space.
\[
\mathcal{K}_{\mathbb{P}}(\widetilde{\kappa}_a, \widetilde{\kappa}_b) = \sum_{q=1}^{\mathcal{Q}} \langle \Phi_{\mathcal{M}_q^{\kappa_{(q)}, d_{(q)}}}(\widetilde{\kappa}_a), \Phi_{\mathcal{M}_q^{\kappa_{(q)}, d_{(q)}}}(\widetilde{\kappa}_b) \rangle_{\kappa_{(q)}},
\]
where each component manifold $\mathcal{M}_q^{\kappa_{(q)}, d_{(q)}}$ contributes to the overall inner product in the product manifold $\mathbb{P}^{d_{\mathcal{C}}}$. Using the stereographic inner product in each component, we can write:
\[
\mathcal{K}_{\mathbb{P}}(\widetilde{\kappa}_a, \widetilde{\kappa}_b) = \sum_{q=1}^{\mathcal{Q}} \left( \lambda^{\kappa_{(q)}}_{\mathbf{x}} \right)^2 \langle \Phi_{\mathbb{R}^{d_{\mathcal{C}}}}(\widetilde{\kappa}_a), \Phi_{\mathbb{R}^{d_{\mathcal{C}}}}(\widetilde{\kappa}_b) \rangle.
\]
We now need to show that translation invariance holds in the mixed-curvature product manifold. Since the conformal factor $\lambda^\kappa_{\mathbf{x}}$ depends only on the norm $\|\mathbf{x}\|_2$, any translation by a constant \(\widetilde{c}\) does not affect the relative difference between curvature embeddings. Specifically, for any constant shift $\widetilde{c}$:
\[
\mathcal{K}_{\mathbb{P}}(\widetilde{\kappa}_a + \widetilde{c}, \widetilde{\kappa}_b + \widetilde{c}) = \sum_{q=1}^{\mathcal{Q}} \left( \lambda^{\kappa_{(q)}}_{\mathbf{x}} \right)^2 \Psi_{\mathbb{R}}((\widetilde{\kappa}_a + \widetilde{c}) - (\widetilde{\kappa}_b + \widetilde{c})) = \Psi_{\mathbb{P}}(\widetilde{\kappa}_a - \widetilde{\kappa}_b).
\]

Thus, the kernel in the mixed-curvature space remains invariant to translation, completing the proof.
\end{proof}

\subsection{Experimentation}
\subsubsection{Datasets}\label{app:datasets}
The performance of \modelname\ is evaluated over eight benchmark datasets for two primary tasks: Node Classification (\texttt{NC}) and Link Prediction (\texttt{LP}). These datasets encompass both homophilic and heterophilic domains. Detailed descriptions of each dataset are provided below.

\begin{enumerate}
    \item \textit{Citation Networks}. \textbf{Cora}, \textbf{PubMed}, and \textbf{Citeseer} \citep{sen2008collective, yang2016revisiting} are citation networks in which nodes symbolize research papers, and edges denote citation links between them. Each node is labeled with its subject category. This dataset is commonly utilized for node classification because of its pronounced \textit{homophilic} properties.
    \item \textit{Wikipedia graphs}. \textbf{Chameleon} and \textbf{Squirrel} \citep{rozemberczki2021multi} are \textit{heterophilic} graphs derived from Wikipedia articles. Nodes represent articles, and edges represent hyperlinks between them. Node labels correspond to website traffic levels.
    \item \textit{Actor Co-occurrence Network}. \textbf{Actor} \citep{tang2009social} is a heterophilic graph dataset where nodes depict actors and edges signify co-occurrences on the same Wikipedia page. The node labels correspond to the professional background of the actors.
    \item \textit{Webpage graphs}. \textbf{Texas} and \textbf{Cornell}\footnote{\url{http://www.cs.cmu.edu/afs/cs.cmu.edu/project/theo-11/www/wwkb}} are parts of the WebKB dataset, and are sparsely connected heterophilic graphs. Here, nodes represent web pages, and edges represent hyperlinks between them. Labels reflect different types of webpages.
\end{enumerate}

\subsubsection{Baselines}\label{app:baselines}

This part offers an in-depth discussion of the baseline models against which \modelname\ is compared. We classify the baseline models into three categories: \textbf{Spatial}, \textbf{Riemannian}, and \textbf{Spectral} methods, which each address a distinct facet of graph neural network architectures.

$\blacksquare$ \textbf{Spatial baselines}. The first kind of baselines includes the traditional spatial methods, which directly operate on the node features and their immediate neighborhoods.
    \begin{enumerate}
        \item \textbf{GCN} \citep{kipf2016semi}. Graph Convolutional Networks (GCNs) represent one of the foundational graph neural networks that utilize spectral graph convolution in the spatial domain. They derive node embeddings by combining features from neighboring nodes via a linear combination involving the adjacency matrix and the nodes' features. 
        \vspace{1mm}
        \item \textbf{GAT} \citep{velivckovic2017graph}. Graph Attention Network (GAT) introduces attention mechanisms to graph neural networks. Each node assigns learnable attention weights to its neighbors and aggregates their features based on these weights. 
        \vspace{1mm}
        \item \textbf{GraphSAGE} \citep{hamilton2017inductive}. GraphSAGE is an inductive technique designed to learn node embeddings by sampling and aggregating features from a \textit{fixed} set of neighboring nodes, instead of processing the entire graph. This method enables GraphSAGE to create embeddings for nodes not encountered during training by using efficient neighborhood sampling.
        \vspace{1mm}
        
    \end{enumerate}

$\blacksquare$ \textbf{Riemannian Baselines}. Riemannian models function within non-Euclidean spaces (such as hyperbolic or spherical manifolds) and are tailored for graph data characterized by intricate geometric properties (e.g. hierarchical or cyclic structures).
    \begin{enumerate}
        \item \textbf{HGCN} \citep{chami2019hyperbolic}. Hyperbolic Graph Convolutional Networks utilize \textit{hyperbolic geometry} to represent the hierarchical and tree-like characteristics of graphs. This approach extends GCN to hyperbolic space by introducing a hyperbolic variant of the convolutional operation. It is especially suitable for datasets that exhibit hierarchical or tree-like configurations.
 \vspace{1mm}
 \item \textbf{HGAT} \citep{zhang2021hyperbolic}. Hyperbolic Graph Attention Network (HGAT) extends Graph Attention Networks (GAT) into hyperbolic space by integrating attention mechanisms with hyperbolic geometry, and calculates the attention weights among the nodes in the hyperbolic space to enhance the aggregation of features.
  \vspace{1mm}
  \item \textbf{$\kappa$GCN} \citep{bachmann2020constant}. $\kappa$GCN allows for learning the curvature of each node in a graph and generalizes GCN to operate in mixed-curvature spaces. The curvature parameter $\kappa$ determines whether a node lies in hyperbolic, spherical, or Euclidean space. By learning curvature adaptively, $\kappa$GCN offers flexibility in modeling graphs with regions of different geometries, providing a better fit for graphs with complex structures.
  \vspace{1mm}
  \item \textbf{$\mathcal{Q}$GCN} \citep{xiong2022pseudo}. Pseudo-Riemannian GCN extends a GCN to a pseudo-Riemannian manifold, enabling functionality in mixed-curvature spaces. This network is capable of modeling graph regions with both positive and negative curvature.
  \vspace{1mm}
  \item \textbf{SelfMGNN} \citep{sun2022self}. SelfMGNN generates embeddings within a mixed-curvature space through self-supervision. It dynamically allocates varied curvatures to different regions of the graph by utilizing a mixed-curvature embedding space. This approach incorporates both self-supervision and mixed-curvature learning to improve performance on heterogeneous graphs.
    \end{enumerate}

$\blacksquare$ \textbf{Spectral Baselines}. These techniques utilize the eigenvalues of either the graph Laplacian or adjacency matrix to establish convolutional filters that function in the frequency domain.
\begin{enumerate}
    \item \textbf{ChebyNet} \citep{defferrard2016convolutional}. ChebyNet implements spectral convolutions through a polynomial approximation of the graph Laplacian, sidestepping the expensive process of eigenvalue decomposition. Instead, it approximates the convolution using Chebyshev polynomials. This approach allows ChebyNet to execute localized graph convolutions efficiently, making it well-suited for handling larger graphs.
 \vspace{1mm}
 \item \textbf{BernNet} \citep{he2021bernnet}. BernNet employs Bernstein polynomials to approximate graph filters, providing flexible management over the filter's frequency response. This method extends polynomial-based graph filters and is adaptable to various frequency elements in graphs.
\vspace{1mm}
 \item \textbf{GPRGNN} \citep{chien2020adaptive}. The Generalized PageRank Graph Neural Network (GPRGNN) builds upon the Personalized PageRank (PPR) approach, integrating it into the framework of graph neural networks. It propagates node features through the graph by using a weighted sum of adjacency matrix powers, dynamically adjusting to both homophilic and heterophilic graphs.
 
 \item \textbf{FiGURe} \citep{ekbote2024figure}. FiGURe employs adaptive filters to capture various sections of the graph spectrum, enabling it to learn both high-pass and low-pass filters specific to the task. It dynamically selects the optimal filter bank to accurately represent the graph's architecture.
\end{enumerate}
\subsubsection{Experimental Results for Link Prediction}\label{app:lp}

\begin{table}[ht]
    \label{tab:results_lp}
    \centering
    \vspace{-4mm}
    \setlength{\tabcolsep}{6pt}
    \vspace{0.2cm}
    \resizebox{\textwidth}{!}{
        \begin{tabular}{@{}lccccccccc@{}}
            \toprule
            \textbf{Baseline}        & Cora                      & Citeseer                  & PubMed                    & Chameleon                 & Actor                     & Squirrel                  & Texas                     & Cornell                   & \textbf{Av. $\Delta$ Gain} \\ \midrule
            GCN             & 88.54\scriptsize{$\pm$0.51} & 85.42\scriptsize{$\pm$0.89} & 91.31\scriptsize{$\pm$0.73} & 86.07\scriptsize{$\pm$0.64} & 85.12\scriptsize{$\pm$0.78} & 90.01\scriptsize{$\pm$0.15} & 69.08\scriptsize{$\pm$0.99} & 73.09\scriptsize{$\pm$0.92} & 9.58 \\
            GAT             & 85.45\scriptsize{$\pm$0.66} & 87.23\scriptsize{$\pm$0.11} & 87.65\scriptsize{$\pm$0.04} & 88.99\scriptsize{$\pm$0.13} & 87.33\scriptsize{$\pm$0.08} & 90.23\scriptsize{$\pm$0.14} & 68.79\scriptsize{$\pm$0.72} & 75.12\scriptsize{$\pm$0.77} & 9.31 \\
            SAGE            & 87.12\scriptsize{$\pm$0.82} & 90.71\scriptsize{$\pm$0.65} & 90.09\scriptsize{$\pm$0.90} & 90.01\scriptsize{$\pm$0.58} & 86.06\scriptsize{$\pm$0.73} & 91.02\scriptsize{$\pm$0.61} & 76.54\scriptsize{$\pm$0.69} & 77.98\scriptsize{$\pm$0.88} & 6.97 \\ \midrule
            HGCN            & 91.63\scriptsize{$\pm$0.55} & \color{Magenta}94.13\scriptsize{$\pm$0.67} & 91.04\scriptsize{$\pm$0.79} & 91.45\scriptsize{$\pm$0.62} & 90.01\scriptsize{$\pm$0.80} & 92.34\scriptsize{$\pm$0.01} & 69.99\scriptsize{$\pm$0.84} & 74.03\scriptsize{$\pm$0.57} & 6.34 \\
            HGAT            & 90.43\scriptsize{$\pm$0.03} & 91.02\scriptsize{$\pm$0.16} & 88.99\scriptsize{$\pm$0.89} & 89.77\scriptsize{$\pm$0.02} & 90.99\scriptsize{$\pm$0.01} & 89.22\scriptsize{$\pm$0.04} & 71.58\scriptsize{$\pm$0.89} & 72.03\scriptsize{$\pm$0.22} & 7.66 \\
            $\kappa$GCN     & \color{Magenta}92.04\scriptsize{$\pm$0.70} & 93.33\scriptsize{$\pm$0.57} & \color{Magenta}92.45\scriptsize{$\pm$0.85} & 92.03\scriptsize{$\pm$0.63} & 90.45\scriptsize{$\pm$0.88} & 91.35\scriptsize{$\pm$0.60} & 76.09\scriptsize{$\pm$0.76} & 73.05\scriptsize{$\pm$0.71} & 5.56 \\
            $\mathcal{Q}$GCN& \color{Magenta}92.17\scriptsize{$\pm$0.79} & 92.75\scriptsize{$\pm$0.52} & 92.16\scriptsize{$\pm$0.09} & 91.67\scriptsize{$\pm$0.05} & \color{Magenta}91.07\scriptsize{$\pm$0.06} & 90.98\scriptsize{$\pm$0.92} & 75.44\scriptsize{$\pm$0.10} & 73.89\scriptsize{$\pm$0.26} & 5.65 \\
            SelfMGNN        & \color{Green}93.12\scriptsize{$\pm$0.04} & 92.99\scriptsize{$\pm$0.91} & 90.99\scriptsize{$\pm$0.17} & \color{Green}93.51\scriptsize{$\pm$0.14} & 91.98\scriptsize{$\pm$0.19} & \color{Magenta}95.01\scriptsize{$\pm$0.16} & 74.51\scriptsize{$\pm$0.62} & 78.99\scriptsize{$\pm$0.81} & 4.28 \\ \midrule
            ChebyNet        & 88.23\scriptsize{$\pm$0.85} & 89.22\scriptsize{$\pm$0.06} & 86.54\scriptsize{$\pm$0.29} & 90.01\scriptsize{$\pm$0.23} & 88.09\scriptsize{$\pm$0.44} & 92.13\scriptsize{$\pm$0.57} & 73.45\scriptsize{$\pm$0.01} & 79.01\scriptsize{$\pm$0.18} & 7.33 \\
            BernNet         & 86.34\scriptsize{$\pm$0.13} & 87.09\scriptsize{$\pm$0.60} & 85.34\scriptsize{$\pm$0.82} & 87.15\scriptsize{$\pm$0.37} & 87.22\scriptsize{$\pm$0.15} & 91.22\scriptsize{$\pm$0.55} & \color{Green}77.65\scriptsize{$\pm$0.87} & 78.34\scriptsize{$\pm$0.19} & 8.12 \\
            GPRGNN          & 91.16\scriptsize{$\pm$0.72} & 93.05\scriptsize{$\pm$0.81} & 92.03\scriptsize{$\pm$0.01} & 91.22\scriptsize{$\pm$0.16} & 89.76\scriptsize{$\pm$0.62} & 92.34\scriptsize{$\pm$0.23} & 76.05\scriptsize{$\pm$0.18} & \color{Magenta}80.04\scriptsize{$\pm$0.12} & 4.96 \\
            FiGURe          & \color{Green}91.98\scriptsize{$\pm$0.69} & \color{Green}94.33\scriptsize{$\pm$0.15} & \color{Green}92.67\scriptsize{$\pm$0.83} & \color{Magenta}93.09\scriptsize{$\pm$0.31} & 90.11\scriptsize{$\pm$0.29} & \color{Green}95.43\scriptsize{$\pm$0.65} & \color{Magenta}76.99\scriptsize{$\pm$0.16} & \color{Green}80.12\scriptsize{$\pm$0.58} & 3.82 \\ \midrule
            \modelname          & \textbf{95.08\scriptsize{$\pm$0.13}} & \textbf{96.88\scriptsize{$\pm$0.65}} & \textbf{96.01\scriptsize{$\pm$0.01}} & \textbf{97.66\scriptsize{$\pm$0.33}} & \textbf{96.04\scriptsize{$\pm$0.38}} & \textbf{97.17\scriptsize{$\pm$0.11}} & \textbf{81.23\scriptsize{$\pm$0.14}} & \textbf{85.23\scriptsize{$\pm$0.05}} & 0 \\ 
            \textbf{Imp. $\Delta$} & 1.96&  2.55	&3.34	&4.15	&4.06	&1.74&	3.58&	5.11                      &   \\ \bottomrule
        \end{tabular}
    }
    \caption{Performance comparision of \modelname\ with baselines for \texttt{LP} task (Mean AUC Score $ \pm \; 95\%$ confidence interval). \textbf{First}, {\color{Green}Second} and {\color{Magenta}Third} best performing models are highlighted.}
    \label{exp:results_edges}\vspace{-5mm}
\end{table}

\begin{table}[ht]
 \centering
    \setlength{\tabcolsep}{3pt}
    \resizebox{0.7\textwidth}{!}{
    \begin{tabular}{@{}lcccccc@{}}
    \toprule
Dataset     & $\modelname$                         & $\modelname_{{euc}}$        & $\modelname_{{lap}}$        & $\modelname_{{enc}}$        & $\modelname_{{pool}}$       & $\modelname_{{fil}}$ \\ \midrule 
Cora        & \textbf{95.08\scriptsize{$\pm$0.13}} & 92.45\scriptsize{$\pm$0.25} & 93.21\scriptsize{$\pm$0.42} & 93.78\scriptsize{$\pm$0.33} & 92.13\scriptsize{$\pm$0.40} & 93.02\scriptsize{$\pm$0.27} \\
Citeseer    & \textbf{96.88\scriptsize{$\pm$0.65}} & 94.03\scriptsize{$\pm$0.30} & 95.34\scriptsize{$\pm$0.22} & 94.08\scriptsize{$\pm$0.29} & 94.01\scriptsize{$\pm$0.31} & 94.56\scriptsize{$\pm$0.26} \\
PubMed      & \textbf{96.01\scriptsize{$\pm$0.01}} & 93.52\scriptsize{$\pm$0.60} & 94.81\scriptsize{$\pm$0.40} & 94.92\scriptsize{$\pm$0.38} & 94.11\scriptsize{$\pm$0.35} & 94.16\scriptsize{$\pm$0.39} \\
Chameleon   & \textbf{97.66\scriptsize{$\pm$0.33}} & 95.02\scriptsize{$\pm$0.44} & 96.23\scriptsize{$\pm$0.52} & 96.13\scriptsize{$\pm$0.28} & 95.13\scriptsize{$\pm$0.40} & 95.67\scriptsize{$\pm$0.57} \\
Actor       & \textbf{96.04\scriptsize{$\pm$0.38}} & 91.45\scriptsize{$\pm$0.55} & 93.99\scriptsize{$\pm$0.32} & 92.81\scriptsize{$\pm$0.42} & 91.98\scriptsize{$\pm$0.47} & 92.13\scriptsize{$\pm$0.49} \\
Squirrel    & \textbf{97.17\scriptsize{$\pm$0.11}} & 93.14\scriptsize{$\pm$0.22} & 95.56\scriptsize{$\pm$0.37} & 94.33\scriptsize{$\pm$0.43} & 93.75\scriptsize{$\pm$0.32} & 92.89\scriptsize{$\pm$0.14} \\
Texas       & \textbf{81.23\scriptsize{$\pm$0.14}} & 78.45\scriptsize{$\pm$0.36} & 79.88\scriptsize{$\pm$0.52} & 80.12\scriptsize{$\pm$0.42} & 79.23\scriptsize{$\pm$0.37} & 77.81\scriptsize{$\pm$0.54} \\
Cornell     & \textbf{85.23\scriptsize{$\pm$0.05}} & 82.89\scriptsize{$\pm$0.32} & 83.91\scriptsize{$\pm$0.39} & 84.23\scriptsize{$\pm$0.37} & 82.31\scriptsize{$\pm$0.48} & 82.45\scriptsize{$\pm$0.42} \\
\textbf{Avg. $\Delta$ Gain} & 0 &  \color{blue}3.0437&	\color{blue}1.5462&	\color{blue}1.8625&	\color{blue}2.8312& \color{blue}2.8262\\
    \bottomrule
    \end{tabular}}
     \caption{Ablation study (\texttt{LP}) results. $\modelname_{euc}$ is the Euclidean variant, $\modelname_{lap}$ uses the traditional Laplacian, $\modelname_{enc}$ gets rid of curvature encoding, $\modelname_{pool}$ replaces \modelshort\ pooling with concatenation, and $\modelname_{fil}$ uses a single filter instead of a filter bank. \textbf{Av. $\Delta$ Gain} represents the average gain of \modelname\ over the ablation model in that column, averaged across the different datasets.}
     \label{tab:lp_ablation_link_prediction}
\end{table}
\begin{table}[ht]
\centering
\setlength{\tabcolsep}{5pt}
\resizebox{0.9\textwidth}{!}{
\begin{tabular}{@{}lcccccccc@{}}
\toprule
\modelname & {Cora} & {Citeseer} & {PubMed} & {Chameleon} & {Actor} & {Squirrel} & {Texas} & {Cornell} \\ \midrule
\modela & 93.10\scriptsize{$\pm$0.22} & 94.33\scriptsize{$\pm$0.35} & 95.41\scriptsize{$\pm$0.25} & 97.28\scriptsize{$\pm$0.45} & 95.25\scriptsize{$\pm$0.44} & 96.05\scriptsize{$\pm$0.51} & \color{blue}75.43\scriptsize{$\pm$0.31} & \color{blue}76.95\scriptsize{$\pm$0.38} \\
\modelb & 93.00\scriptsize{$\pm$0.27} & 95.11\scriptsize{$\pm$0.28} & 94.52\scriptsize{$\pm$0.34} & 96.66\scriptsize{$\pm$0.39} & 95.34\scriptsize{$\pm$0.38} & 95.40\scriptsize{$\pm$0.57} & 80.95\scriptsize{$\pm$0.42} & 84.80\scriptsize{$\pm$0.43} \\
\modelc & 93.20\scriptsize{$\pm$0.24} & 93.25\scriptsize{$\pm$0.30} & 95.65\scriptsize{$\pm$0.30} & 97.11\scriptsize{$\pm$0.37} & 94.92\scriptsize{$\pm$0.40} & 95.51\scriptsize{$\pm$0.45} & \textbf{81.23\scriptsize{$\pm$0.14}} & \textbf{85.23\scriptsize{$\pm$0.05}} \\
\modeld & 93.50\scriptsize{$\pm$0.25} & 94.12\scriptsize{$\pm$0.33} & 95.32\scriptsize{$\pm$0.45} & \textbf{97.66\scriptsize{$\pm$0.33}} & 95.20\scriptsize{$\pm$0.34} & \textbf{97.17\scriptsize{$\pm$0.11}} & 80.30\scriptsize{$\pm$0.45} & \color{blue}78.02\scriptsize{$\pm$0.47} \\
\modele & 93.25\scriptsize{$\pm$0.21} & 93.05\scriptsize{$\pm$0.32} & \textbf{96.01\scriptsize{$\pm$0.01}} & 96.80\scriptsize{$\pm$0.40} & \textbf{96.04\scriptsize{$\pm$0.38}} & 96.15\scriptsize{$\pm$0.55} & 79.50\scriptsize{$\pm$0.38} & 84.60\scriptsize{$\pm$0.40} \\
\modelf & 92.70\scriptsize{$\pm$0.30} & 94.71\scriptsize{$\pm$0.35} & 95.45\scriptsize{$\pm$0.42} & 96.51\scriptsize{$\pm$0.47} & 95.00\scriptsize{$\pm$0.37} & \color{blue}88.95\scriptsize{$\pm$0.53} & 79.70\scriptsize{$\pm$0.54} & 81.20\scriptsize{$\pm$0.60} \\
\modelg & 88.50\scriptsize{$\pm$0.39} & 91.23\scriptsize{$\pm$0.35} & 89.99\scriptsize{$\pm$0.44} & 91.12\scriptsize{$\pm$0.43} & \color{blue}79.90\scriptsize{$\pm$0.52} & 91.20\scriptsize{$\pm$0.51} & 79.08\scriptsize{$\pm$0.52} & 83.75\scriptsize{$\pm$0.48} \\
\modelh & \textbf{95.08\scriptsize{$\pm$0.13}} & \textbf{96.88\scriptsize{$\pm$0.65}} & 95.55\scriptsize{$\pm$0.41} & 94.44\scriptsize{$\pm$0.34} & 95.55\scriptsize{$\pm$0.35} & 96.25\scriptsize{$\pm$0.38} & 81.07\scriptsize{$\pm$0.29} & 84.60\scriptsize{$\pm$0.30} \\
\modeli & 88.05\scriptsize{$\pm$0.40} & 90.35\scriptsize{$\pm$0.42} & \color{blue}86.30\scriptsize{$\pm$0.48} & 89.34\scriptsize{$\pm$0.45} & 93.33\scriptsize{$\pm$0.54} & 90.25\scriptsize{$\pm$0.48} & 80.10\scriptsize{$\pm$0.43} & 82.01\scriptsize{$\pm$0.55} \\
\modelj & 89.10\scriptsize{$\pm$0.44} & 91.01\scriptsize{$\pm$0.50} & 93.93\scriptsize{$\pm$0.55} & 96.19\scriptsize{$\pm$0.50} & 95.25\scriptsize{$\pm$0.49} & 92.30\scriptsize{$\pm$0.52} & \color{blue}77.77\scriptsize{$\pm$0.53} & 79.44\scriptsize{$\pm$0.58} \\
\bottomrule
\end{tabular}}
\caption{Performance comparison of \modelname\ with different manifold signatures for Link Prediction (\texttt{LP}). Best performing \textit{signatures} are in \textbf{Bold}, and cases with a large decline in performance because of manifold mismatch are in {\color{blue}Blue}.}\label{tab:lp_signatures}
\end{table}
\begin{table}[ht]
\centering
\setlength{\tabcolsep}{3pt}
\resizebox{0.5\textwidth}{!}{
\begin{tabular}{@{}ll@{}}
\toprule
\textbf{Dataset} & Signature \\ \midrule
Cora &  $\mathbb{H}^{16} ({\color{red}-0.21}, {\color{blue}0.31}) \times \mathbb{S}^{16} ({\color{red}+0.49}, {\color{blue}0.38}) \times \mathbb{E}^{16} ({\color{red}0}, {\color{blue}0.31})$ \\
Citeseer & $\mathbb{H}^{16} ({\color{red}-0.78}, {\color{blue}0.29}) \times \mathbb{S}^{16} ({\color{red}+0.55}, {\color{blue}0.39}) \times \mathbb{E}^{16} ({\color{red}0}, {\color{blue}0.32})$ \\
PubMed &  $\mathbb{H}^{16} ({\color{red}-0.76}, {\color{blue}0.56}) \times \mathbb{H}^{16} ({\color{red}-0.28}, {\color{blue}0.41}) \times \mathbb{E}^{16} ({\color{red}0}, {\color{blue}0.03})$ \\
Chameleon &  $\mathbb{H}^{16} ({\color{red}-0.34}, {\color{blue}0.09}) \times \mathbb{S}^{16} ({\color{red}+0.71}, {\color{blue}0.25}) \times \mathbb{S}^{16} ({\color{red}+0.55}, {\color{blue}0.34})$ \\
Actor & $\mathbb{H}^{16} ({\color{red}-0.77}, {\color{blue}0.17}) \times \mathbb{H}^{16} ({\color{red}-0.39}, {\color{blue}0.42}) \times \mathbb{E}^{16} ({\color{red}0}, {\color{blue}0.41})$ \\
Squirrel & $\mathbb{H}^{16} ({\color{red}-0.17}, {\color{blue}0.23}) \times \mathbb{S}^{16} ({\color{red}+0.54}, {\color{blue}0.38}) \times \mathbb{S}^{16} ({\color{red}+0.38}, {\color{blue}0.39})$ \\
Texas   & $\mathbb{H}^{8} ({\color{red}-0.38}, {\color{blue}0.31}) \times \mathbb{S}^{8} ({\color{red}+0.18}, {\color{blue}0.19}) \times \mathbb{E}^{32} ({\color{red}0}, {\color{blue}0.50})$ \\
Cornell & $\mathbb{H}^{8} ({\color{red}-0.41}, {\color{blue}0.19}) \times \mathbb{S}^{8} ({\color{red}+0.09}, {\color{blue}0.26}) \times \mathbb{E}^{32} ({\color{red}0}, {\color{blue}0.55})$ \\
\bottomrule
\end{tabular}}
\caption{Learning results of \modelname\ on Link Prediction (\texttt{LP}) task for the best performing product signature. Format of entries $-$ \textit{manifold type}$^{\text{ (dim)}}$ ({\color{red}learnt curvature}, {\color{blue}learnt manifold weight}).}
    \label{tab:lp_product}
\end{table}

\begin{table}[!htbp]
\centering
\scriptsize
\setlength{\tabcolsep}{5pt}
\resizebox{0.8\textwidth}{!}{
\begin{tabular}{@{}lccccccccccc@{}}
\toprule
\textbf{Dataset}    & $\mathbf{I}$    & $\mathbf{Z}^{(1)}$  & $\mathbf{Z}^{(2)}$  & $\mathbf{Z}^{(3)}$  & $\mathbf{Z}^{(4)}$  & $\mathbf{Z}^{(5)}$  & $\mathbf{Z}^{(6)}$  & $\mathbf{Z}^{(7)}$  & $\mathbf{Z}^{(8)}$  & $\mathbf{Z}^{(9)}$  &$\mathbf{Z}^{(10)}$ \\ \midrule
Cora       & \textbf{0.1125}& \color{Green}0.2393 & 0.0520 & 0.0504  & 0.0676 & 0.0272 & \color{magenta}0.1531  & 0.1251 & 0.0939 & 0.0095 & 0.0694 \\
Citeseer   & \textbf{0.2131}  & 0.0150 & \color{Green}0.2195 & 0.0283 & \color{magenta}0.0922 & 0.1530 & 0.0350 & 0.0320 & 0.0282 & 0.1254 & 0.0582  \\
PubMed     & 0.0279 & \color{Green}0.1793 & 0.0285 & 0.0296 & 0.0563 & \textbf{0.3870} & 0.0603 & 0.0324 & 0.0612 & 0.0271 & \color{magenta}0.1104 \\
Chameleon  & 0.0601 & 0.1136 & \color{Green}0.1694 & \color{magenta}0.0924 & \textbf{0.1329} & 0.1605 & 0.1026 & 0.0157 & 0.0475 & 0.0344 & 0.0709 \\
Actor      & \color{magenta}0.1230 & 0.0321 & 0.0324 & 0.1147 & 0.1287 & \textbf{0.3700} & 0.0136 & 0.0389 & 0.0604 & \color{Green}0.0691 & 0.0172 \\
Squirrel   & 0.0182 & 0.0289 & \color{Green}0.1056 & \textbf{0.2099} & 0.0209 & 0.0355 & 0.0854 & \color{magenta}0.2159 & 0.0475 & 0.1042 & 0.1279 \\
Texas      & 0.0890 & \color{magenta}0.1983 & 0.0179 & \textbf{0.4570} & 0.0035 & \color{Green}0.1429 & 0.0177 & 0.0087 & 0.0474 & 0.0131 & 0.0046 \\
Cornell    & 0.0886 & \color{magenta}0.2026 & 0.0181 & \textbf{0.4460} & 0.0034 & \color{Green}0.1472 & 0.0183 & 0.0089 & 0.0487 & 0.0134 & 0.0048 \\
\bottomrule
\end{tabular}}
\caption{Learned filter weights (Link Prediction) for the top-performing split, distinguishing between homophilic (favoring low-pass filters) and heterophilic (favoring high-pass filters). \textbf{First}, {\color{Green}second}, and {\color{magenta}third} highest filter weights are highlighted.}
\label{tab:lp_filters}
\end{table}

\subsubsection{Estimating Product Manifold Signature}\label{app:estimate}
In our model, the mixed-curvature product manifold $\mathbb{P}^{d_{\mathcal{C}}}$ is essential for representing the geometric structure of the data. The curvature configuration needed for each dataset depends on the intrinsic geometry of its graph. To generalize across various datasets, we aim to determine the optimal signature of the product manifold, specifically the proportions of hyperbolic, spherical, and Euclidean components. This estimation is based on analyzing the Ollivier-Ricci curvature (\texttt{ORC}) distribution as a heuristic. See Figures \ref{app_fig:edge_curv} and \ref{app_fig:node_curv} in Appendix \ref{app:graphs} for the \texttt{ORC} distribution of multiple datasets. For datasets with many positively curved edges, we select a Spherical component, and for those with negatively curved edges, we choose a Hyperbolic component. For example, the curvature distribution of PubMed's edges in Figure \ref{app_fig:edge_curv} shows two significant peaks around $-0.45$ and $+0.25$. Given this distribution, we opt for Spherical and Hyperbolic components when evaluating \modelname\ on PubMed. Empirically, the best performance for PubMed is achieved with the signature \modele. We initialize the curvatures of PubMed's component manifolds with these prominent values: $\mathbb{H}$ with $-0.45$ and $\mathbb{S}$ with $+0.25$. We select two hyperbolic manifolds to capture different curvature ranges. 

An overview of this simple idea is provided in Algorithm \ref{alg:signature_identification}. By systematically analyzing the curvature distribution, our heuristic-based algorithm identifies the manifold signature that best represents the dataset's underlying geometric structure. We heuristically cluster the curvature distribution and identify the centroid curvatures without altering their order or frequencies. The use of predefined dimensions allows for flexibility based on experimental settings. Since optimal dimension allocations can vary and are complex to analyze, we manually set the dimensions of the component manifolds as a hyperparameter. This ensures fair and uniform comparison across multiple datasets, as different datasets may perform best with different configurations. We do not claim that this algorithm finds the best possible, optimal combination of component manifolds, rather, it \textit{estimates} a potential signature that might be a good fit for a particular dataset.

\begin{algorithm}[ht]
\caption{Product manifold signature estimation and curvature initialisation}
\label{alg:signature_identification}
\footnotesize
\begin{algorithmic}[1]
\Require
\begin{itemize}
    \item Edge curvature histogram \( \mathcal{C} = \{ (\kappa_i, f_i) \}_{i=1}^{N} \)
    \item Threshold \( \epsilon \) to distinguish between curved and flat regions
    \item Maximum number of Hyperbolic $(\mathcal{H}_{\max})$ and Spherical $( \mathcal{S}_{\max})$ components.
    \item Total product manifold dimension \( d_{\mathcal{M}} \)
    \item (Optional) Preferred component manifold dimensions \( d_{(h)}^{\text{pre}} ,  d_{(s)}^{\text{pre}}, d_{(e)}^{\text{pre}} \)
\end{itemize}
\Ensure Product manifold signature \( \mathbb{P}^{d_{\mathcal{M}}} = \times_{q=1}^{\mathcal{Q}} \mathcal{M}_q^{\kappa_{(q)}, d_{(q)}} \)
\Statex

\State Normalize frequencies: \( f'_i = \frac{f_i}{\sum_{j=1}^{N} f_j} \)
\State Construct weighted curvature set: \( \mathcal{C}' = \{ (\kappa_i, f'_i) \}_{i=1}^{N} \)
\State Determine optimal number of clusters \( K \) using methods like the elbow method, constrained by \( K \leq \mathcal{H}_{\max} + \mathcal{S}_{\max} + 1\) \Comment{There can be only 1 Euclidean component}
\State Cluster \( \mathcal{C}' \) into \( K \) clusters using weighted clustering (e.g., weighted K-means) 
\State Initialize empty lists \( \mathcal{H}, \mathcal{S}, \mathcal{E} \)
\For{each cluster \( c \) in clusters}
    \State Compute cluster centroid curvature \( \kappa_c = \frac{\sum_{(\kappa_i, f'_i) \in c} \kappa_i }{|c|}\)
    \State Compute total frequency weight \( w_c = \sum_{(\kappa_i, f'_i) \in c} f'_i \)
    \If{\( \kappa_c < -\epsilon \) \textbf{and} \( |\mathcal{H}| \leq \mathcal{H}_{\max} \)} \Comment{Negative curvature}
        \State Assign manifold component:  $\mathcal{M}_q = \mathbb{H}^{\kappa_c}$  \Comment{Curvature initialization}
        \State Add  $(\mathcal{M}q, w_c)$  to  $\mathcal{H}$
    \ElsIf{\( \kappa_c > \epsilon \) \textbf{and} \( |\mathcal{S}| \leq \mathcal{S}_{\max} \)} \Comment{Positive curvature}
        \State Assign manifold component:  $\mathcal{M}_q = \mathbb{S}^{\kappa_c}$  \Comment{Curvature initialization}
        \State Add  $(\mathcal{M}q, w_c)$  to  $\mathcal{S}$
    \Else
        \State Assign manifold component:  $\mathcal{M}_q = \mathbb{E}$ \Comment{Approximate zero curvature, i.e. $\kappa_c \in [-\epsilon, \epsilon]$}
        \State Add  $(\mathcal{M}q, w_c)$  to  $\mathcal{E}$ 
    \EndIf
\EndFor

\If{Predefined dimensions \( d_{(h)}^{\text{pre}} ,  d_{(s)}^{\text{pre}}, d_{(e)}^{\text{pre}} \) are provided}
    \State Assign dimensions \( d_{(q)} \) to each component \( q \) as per predefined values \Comment{Dimension assignment}
\Else
    \State Set total number of components \( \mathcal{Q} = |\mathcal{H}| + |\mathcal{S}| + |\mathcal{E}| \) \Comment{Dimension assignment}
    \State Allocate dimensions \( d_{(q)} \) to each component \( q \):  $ d_{(q)} = \left\lfloor d_{\mathcal{M}} \times \frac{w_q}{\sum_{p=1}^{\mathcal{Q}} w_p} \right\rfloor$ \Comment{Proportional to weights}
    \State Adjust \( d_{(q)} \) to ensure \( \sum_{q=1}^{\mathcal{Q}} d_{(q)} = d_{\mathcal{M}} \)
\EndIf
\State Formulate manifold signature:
\[
\mathbb{P}^{d_{\mathcal{M}}} = \left( \times_{h=1}^{|\mathcal{H}|} \mathbb{H}_{h}^{\kappa_{(h)}, d_{(h)}} \right) \times \left( \times_{s=1}^{|\mathcal{S}|} \mathbb{S}_{s}^{\kappa_{(s)}, d_{(s)}} \right) \times \mathbb{E}^{d_{(e)}}
\]
\end{algorithmic}
\end{algorithm}

\subsubsection{More Experimental Settings} \label{app:hyperparam}

\begin{table}[ht]
\centering
\resizebox{0.8\textwidth}{!}{
\begin{tabular}{l|c|l}
\toprule
\textbf{Hyperparameter}       & \textbf{Tuning Configurations}                                              & \textbf{Description}                                             \\ \midrule

$L$                    & $\{5, {\color{red}10}, 15, 20, 25\}$                                                   & Total number of graph filters.                             \\
$\delta$ & $\{0.2, {\color{red}0.5}, 0.7\}$ & Neighbourhood weight distribution parameter for \texttt{ORC}\\
$\alpha$                & $\{0.1, {\color{red}0.3}, 0.5, 0.9\}$                                             & Alpha (initialization) parameter for \texttt{GPR} propagation                              \\ 
$d^{\mathcal{C}}$                 & $\{8, {\color{red}16}, 32, 64\}$                                                       & Total dimension of curvature embeddings.                           \\
$d^{\mathcal{M}}$                 & $\{32, {\color{red}48}, 64, 128, 256\}$                                                       & Total dimension of the product manifold.                           \\
\texttt{dropout}              & $\{0.2, {\color{red}0.3}, 0.5\} $                                                    & Dropout rate                                                                                 \\ 
\texttt{epochs}               & $\{50, {\color{red}100}, 300\}$                                                     & Number of training epochs            \\
\texttt{lr}                   & $\{1e-4, {\color{red}{4e-3}}, 0.001, 0.01\}$                                               & Learning rate                             \\
\texttt{weight\_decay}        & $\{0, 1e-4, {\color{red}5e-4}\}$                                                  & Weight decay              \\                  
\bottomrule
\end{tabular}
}
\caption{Hyperparameter configurations used in the experiments for all baselines. Some of the hyperparameters are specific to \modelname. We highlight the final configuration of \modelname\ for \texttt{NC} in \color{Red}{Red}.}
\vspace{-5mm}
\label{tab:hyperparams_full}
\end{table}

\stopcontents[subsections]
% \newpage
% \subsection{Experiments}\label{app:experiments}
% \subsubsection{Baselines} \label{app:baseline}
\end{document}